%% file: manuscript_v5.tex
\documentclass[11pt]{article}

\input{preamble.tex}

\usepackage[numbers,sort&compress]{natbib}

\newcommand{\krettah}{\textnormal{KReTTaH}\xspace}
\newcommand{\krettahsigma}{\textnormal{KReTTaH}^{\sigma}\xspace}

\begin{document}

% ==================== Title and Authors ====================
\title{Kernel Regression with Tensor Trains and\\ Hadamard Overparameterization}

\author{
  Duc~Thien~Nguyen$^{1}$ \\
  \small $^{1}$Institute of Science Tokyo, Dept.\ Information and Communications Engineering, Yokohama, Japan
  \and
  Konstantinos~Slavakis$^{1}$ \\
  \small $^{1}$Institute of Science Tokyo, Dept.\ Information and Communications Engineering, Yokohama, Japan
  \and
  Eleftherios~Kofidis$^{2}$ \\
  \small $^{2}$University of Piraeus, Dept.\ Statistics and Insurance Science, Greece
  \and
  Dimitris~Pados$^{3}$ \\
  \small $^{3}$Florida Atlantic University, Center for Connected Autonomy and Artificial Intelligence \& \\
  \small Dept.\ Electrical Engineering and Computer Science, USA
}

\date{}

\maketitle
\setstretch{1.15}

% ==================== Abstract ====================
\begin{abstract}
Kernel regression with tensor trains and Hadamard overparameterization (KReTTaH) is introduced as a 
training-data-free, interpretable, and nonparametric framework for multi-way data imputation. The 
imputation problem is reformulated as regression in reproducing kernel Hilbert spaces (RKHS), where the 
tensor regression coefficients are explicitly constrained to lie on fixed-rank tensor-train (TT) 
manifolds and structured via Hadamard overparameterization to promote sparsity and high 
representational efficiency. Rather than relying on costly cross-validation, KReTTaH jointly optimizes 
the TT coefficient tensors and the kernel covariance matrices within a Riemannian product-manifold 
framework---the former on fixed-rank TT manifolds, the latter on the manifold of positive-definite 
matrices---thereby enabling automated kernel-hyperparameter selection. Numerical tests on two 
challenging applications---imputation of high-dimensional functional magnetic resonance imaging (fMRI) 
data and recovery of missing edge flows in dynamic graphs---demonstrate that KReTTaH consistently 
outperforms state-of-the-art tensor-, Bayesian-, and neural-network-based baselines in terms of 
modeling accuracy.
\end{abstract}

% ==================== Main Content ====================
\sloppy
%\linenumbers

\section{Introduction}

Multi-way data, naturally represented as multi-dimensional arrays or
tensors~\cite{Sidiropoulos:ieeeTSP:17}, often contain missing entries due to imperfections in data
acquisition or measurement limitations. Tensor completion (TC)~\cite{song2019} aims to recover these
missing values by exploiting strong correlations induced by an underlying low-rank structure. To this
end, tensor rank is defined implicitly through tensor decomposition (TD) models, which approximate a
tensor as a product of smaller core tensors and matrices. Two prominent TD examples are canonical
polyadic decomposition (CPD) and Tucker decomposition (TKD)~\cite{Sidiropoulos:ieeeTSP:17}. CPD, the
simplest model with milder uniqueness conditions, yields more interpretable results but may prove overly
restrictive for certain datasets. TKD, conversely, provides greater flexibility but at the expense of
reduced interpretability and higher computational and memory costs.

An intermediate alternative is the tensor-train (TT) decomposition~\cite{oseledets2011tensor} and its
variants, such as tensor ring decomposition (TRD)~\cite{zhao2016tensor}. Compared with CPD, TT
decomposition (TTD) improves feature extraction, accuracy, and stability, while its core ranks are more
readily determined~\cite{TTthesis2022}. Relative to TKD, TTD achieves greater compactness and mitigates
the curse of dimensionality by using only low-order ($\leq 3$) core tensors~\cite{vanloan2015}. Notably,
all tensors of fixed TT rank form low-dimensional Riemannian manifolds embedded within high-dimensional
ambient spaces~\cite{holtz2012manifolds, RobbinSalamon:22}---a rich geometric structure exploited in the
present work for both analytical and computational purposes.

Although TD models---by definition multilinear---have shown success in TC~\cite{song2019}, they do not
explicitly incorporate nonlinear feature mappings, limiting their ability to capture the stronger
nonlinear relations latent in data from numerous application domains. Such nonlinear relations can be
critical for imputation as well, and can be incorporated into TD models via kernel methods, which are
known for their ability to capture complex nonlinear dependencies by mapping data into high-dimensional
feature spaces through reproducing kernels~\cite{aronszajn1950theory, scholkopf2002learning,
  kimeldorf1971some}. Kernels have already been incorporated into several tensor
models---CPD~\cite{bazerque2012nonparametric, larsen2024tensor, li2025bayesian},
TKD~\cite{zhao2013kernel}, TTD~\cite{gorodetsky2018gradient}, and TRD~\cite{huang2025kernel}---to encode
side information or serve as implicit basis functions. Nevertheless, methods based on
CPD~\cite{bazerque2012nonparametric, larsen2024tensor, li2025bayesian} may face limitations from its
restrictive nature, while those based on TKD~\cite{zhao2013kernel} suffer from rotational
ambiguity. Meanwhile, variational Bayesian-inference methods~\cite{li2025bayesian, huang2025kernel} place
Gaussian priors on the core-tensor entries---effectively imposing Gaussian kernel structure as side
information---to achieve TC, but appear to entail high computational complexity
(see~\cref{sec:tests.fmri}).

This work introduces a novel model that captures nonlinear relations in multi-way data, targeting missing
data imputation. More specifically, it formulates regression in a reproducing kernel Hilbert space
(RKHS)~\cite{aronszajn1950theory, scholkopf2002learning, kimeldorf1971some}, enabling \textit{nonlinear
  and nonparametric approximation}\/~\cite{Gyorfi:DistrFree:10} within TTD. Extending earlier work on
two-way data~\cite{multilkrim, nguyen2025imputation, nguyen:apsipa25}, and
unlike~\cite{steinlechner2016riemannian, tt-AdaliGroup:21}, which impose the TTD geometric structure on
the data tensor itself, the proposed framework imposes it on the regression parameter tensors, yielding a
new data model (see~\eqref{eq:model.general}). While~\cite{multilkrim, nguyen2025imputation} adopt an
implicit manifold-learning approach, where manifolds are \textit{unknown}\/ to the user, this work
employs an \textit{explicit}\/ construction, constraining the regression coefficients to lie on TT
manifolds of user-defined fixed rank, which are known to possess a rich Riemannian
structure~\cite{holtz2012manifolds}. To further enhance approximation capabilities, HP is applied to the
parameter tensors---a feature hitherto unexplored in kernel-based TD~\cite{gorodetsky2018gradient}; when
combined with smooth regularizers in the learning objective, HP promotes sparsity and yields substantial
dimensionality reduction~\cite{hoff2017lasso, li2023tail, ziyin2023spred, kolb2024smoothing}. The
proposed framework is termed \textit{kernel regression with tensor trains and Hadamard
  overparameterization (KReTTaH).}

Extending preliminary results~\cite{nguyen:icassp2026}, this work offers additional flexibility by
employing multiple Gaussian kernels whose covariance matrices, constrained to lie on the Riemannian
manifold of positive-definite (PD) matrices, are jointly learned. This circumvents the painstaking kernel
hyperparameter tuning typically carried out by cross-validation (see a similar approach in the context of
support vector machines~\cite{laanaya2011learning}); to the best of the authors' knowledge, the present
work is the first to integrate this approach into a TT regression framework.

Making use of the underlying Riemannian geometry of the fixed-rank TT and PD-matrix manifolds, as well as
the sparsity-inducing structure of HP, \krettah naturally yields a \textit{smooth}\/ inverse problem
constrained on a Cartesian-product Riemannian manifold, solvable via \textit{any user-defined}\/ (first-
or second-order) Riemannian optimization method. \krettah thus distinguishes itself
from~\cite{bazerque2012nonparametric, zhao2013kernel, gorodetsky2018gradient, huang2025kernel}, which
employ alternating minimization or stochastic gradient descent (SGD) on tensor factors, and from standard
$\ell_p$-norm ($p\in (0,1]$) regularized sparsity-promoting algorithms that optimize non-smooth
  objectives. In contrast to many ``black-box'' neural-network (NN) methods that require external
  training data, \krettah operates in a training-free manner, providing an \textit{interpretable}\/
  solution through regression in an RKHS---one that enjoys the familiar geometric structure of linear
  vector spaces, further enriched by Riemannian manifold geometry, and transparent sparsity-promoting
  regularization.

The contributions of this paper are summarized as follows.

\begin{enumerate}[label = \textbf{(C\arabic*)}]

\item Introduce \krettah, a training-data-free and interpretable regression framework for multi-way data,
  capable of nonlinear and nonparametric data modeling via reproducing kernels, dimensionality reduction
  via TTD, sparse coding via HP, and streamlined computations via Riemannian optimization.

\item Generalize earlier work on two-way data~\cite{multilkrim, nguyen2025imputation, nguyen:apsipa25} to
  multi-way settings by adopting an explicit Riemannian manifold construction---constraining regression
  coefficients to lie on fixed-rank TT manifolds---in contrast to the implicit manifold-learning approach
  of~\cite{multilkrim, nguyen2025imputation}, where the underlying manifold remains unspecified.

\item Extend~\cite{nguyen:icassp2026} to handle multiple Gaussian kernels with learnable covariance
  matrices, constrained to the manifold of positive-definite matrices. Covariance learning is seamlessly
  integrated within an overarching Riemannian optimization framework, enabling automated hyperparameter
  selection and eliminating costly manual kernel hyperparameter tuning.

\end{enumerate}

The rest of the paper is organized as follows. \cref{sec:prem} introduces necessary concepts and
properties of TTD and Riemannian manifolds, with mathematically involved arguments deferred to the
appendices. \cref{sec:modeling} presents \krettah's data model, its inverse problem and algorithmic
solution, together with a discussion on the computational complexity. To showcase the effectiveness of
\krettah, \cref{sec:applications} details its application to the following challenging tasks: imputation
of 4D-fMRI real data (\cref{sec:tests.fmri}) and imputation of time-varying edge flows in networks
(\cref{sec:tests.flows}). Finally, \cref{sec:conclude} summarizes the paper and outlines potential future
research directions.

\section{Preliminaries}\label{sec:prem}

\subsection{Basic concepts}\label{sec:basic}

Let $\IntegerP$, $\IntegerPP$, $\Real$ and $\RealPP$ be the sets of all nonnegative integers, positive
integers, real numbers and positive real numbers, respectively. Given $i_1, i_2\in \IntegerP$, with $i_1
\leq i_2$, let also $\llbracket i_1, i_2 \rrbracket \coloneqq \{ i_1, i_1 + 1, \ldots, i_2\}$. All
vectors in the sequel are considered to be column ones. Given two vectors $\vect{a}, \vect{a}^{\prime}$
of equal dimension $I\in \IntegerPP$, $\vect{a} \preceq \vect{a}^{\prime}$ stands for $a_i \leq
a_i^{\prime}$, where $a_i$ and $a_i^{\prime}$ are the $i$th entries of $\vect{a}$ and
$\vect{a}^{\prime}$, respectively, $\forall i\in \llbracket 1, I \rrbracket$. Vector $\vect{a} = [ a_1,
  \ldots, a_I ]^{\intercal}$, where $\intercal$ denotes vector/matric transposition, will be also denoted
as $( a_1, \ldots, a_I )$.

For $N\in \IntegerPP$, consider any order-$N$ tensors $\vectcal{X}, \vectcal{X}_1, \vectcal{X}_2 \in
\Real^{I_1\times I_2\times \cdots \times I_N}$. The $( i_1, i_2, \ldots, i_N)$th entry of $\vectcal{X}$
is denoted by $\vectcal{X} ( i_1, i_2, \ldots, i_N)$, $\forall i_j \in \llbracket 1, I_j\rrbracket$,
$\forall j\in \llbracket 1, N\rrbracket$. Fixing one index while varying all others yields a ``slice'' of
$\vectcal{X}$, \eg, $\vectcal{X} (\colon, \ldots, \colon, i)$, where $\colon$ denotes a full range of
indices. The vectorization of $\vectcal{X}$---according to a user-defined ordering of the tensor
indices---yields the $(I_1 I_2 \cdots I_N) \times 1$ column vector $\tovec ( \vectcal{X} )$. The inner
product $\innerp{ \vectcal{X}_1 }{ \vectcal{X}_2 } \coloneqq \tovec^{\intercal}( \vectcal{X}_1 )\,
\tovec( \vectcal{X}_2 )$. The Frobenius norm of $\vectcal{X}$ is defined as $\norm{\vectcal{X}
}_\textnormal{F} \coloneqq \innerp{\vectcal{X} } { \vectcal{X} }^{1/2}$, while its $\ell_p$-norm
$\norm{\vectcal{X} }_{p} \coloneqq ( \sum_{i_1, i_2, \ldots, i_N} \lvert \vectcal{X} (i_1, i_2, \ldots,
i_N) \rvert^p )^{1/p}$ as the $\ell_p$-norm of $\tovec ( \vectcal{X} )$ for $p\in \RealPP$.  The $(i_1,
i_2, \ldots, i_N )$th entry of the Hadamard (entry-wise) product $\vectcal{X}_1 \odot \vectcal{X}_2 \in
\Real^{I_1\times I_2\times \cdots \times I_N}$ is defined as $( \vectcal{X}_1 \odot \vectcal{X}_2 ) (i_1,
i_2, \ldots, i_N ) \coloneqq \vectcal{X}_1 (i_1, i_2, \ldots, i_N ) \vectcal{X}_2 (i_1, i_2, \ldots, i_N
)$.  The Kronecker product between two matrices is denoted by $\otimes$~\cite{kolda2009tensor}. 
The identity matrix of size $d\times d$ is denoted by $\vect{I}_{d}$, while the zero matrix of size $d_1 \times d_2$ is denoted by $\vect{0}_{d_1 \times d_2}$.  The trace of a square matrix is
denoted by $\trace (\cdot)$.  The pseudo-inverse of a matrix is denoted by $(\cdot)^{\dagger}$.

Although the rank of a tensor can be defined in several ways depending on the decomposition used (e.g.,
CPD or TKD rank), the following definition, taken from~\cite{oseledets2011tensor}, will be used
throughout this work.

\begin{definition}[Rank~\cite{oseledets2011tensor}]\label{def.rank}
  The \textit{$m$th unfolding}\/ of $\vectcal{X}$, for $m\in \llbracket 1, N-1 \rrbracket$, is defined as
  the reshaping of $\vectcal{X}$ into the matrix $\vectcal{X}^{\langle m\rangle} \in \Real^{(I_1 \cdots
    I_m) \times (I_{m+1} \cdots I_N)}$. The \textit{rank}\/ of $\vectcal{X}$ is defined as the
  $(N+1)\times 1$ vector $\rank (\vectcal{X}) \coloneqq (1, \rank(\vectcal{X}^{\langle 1\rangle}),
  \ldots, \rank (\vectcal{X}^{\langle N-1\rangle}), 1)$, where $\rank( \vectcal{X}^{\langle m\rangle} )$
  stands for the rank of matrix $\vectcal{X}^{\langle m\rangle}$, $\forall m\in \llbracket 1, N-1
  \rrbracket$.
\end{definition}

\begin{fact}[Addition of tensors~{\cite[Section~4.1]{oseledets2011tensor}}]\label{fact.add}
  For any tensors $\vectcal{X}_1, \vectcal{X}_2$ of size $I_1 \times \cdots \times I_N$, $\rank(
  \vectcal{X}_1 + \vectcal{X}_2 ) \preceq \rank(\vectcal{X}_1) + \rank(\vectcal{X}_2)$.
\end{fact}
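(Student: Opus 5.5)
The plan is to reduce the claim to the subadditivity of matrix rank, one unfolding at a time, using \cref{def.rank}. By that definition, $\rank(\vectcal{X})$ is the vector $(1, \rank(\vectcal{X}^{\langle 1\rangle}), \ldots, \rank(\vectcal{X}^{\langle N-1\rangle}), 1)$. The relation $\preceq$ is entrywise, so it suffices to check the inequality separately for each of the $N+1$ coordinates.

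\textbf{Boundary coordinates.} The first and last coordinates equal $1$ on the left-hand side and $1+1=2$ on the right-hand side. Hence $1\leq 2$ holds trivially there.

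\textbf{Interior coordinates.} Fix $m\in \llbracket 1, N-1\rrbracket$. The key observation is that the $m$th unfolding is linear in the tensor:
\begin{equation*}
(\vectcal{X}_1+\vectcal{X}_2)^{\langle m\rangle} = \vectcal{X}_1^{\langle m\rangle} + \vectcal{X}_2^{\langle m\rangle}.
\end{equation*}
This holds because unfolding is merely a fixed reshaping, i.e., a fixed bijection between the index set $\llbracket 1,I_1\rrbracket\times\cdots\times\llbracket 1,I_N\rrbracket$ and the row/column index pairs of an $(I_1\cdots I_m)\times(I_{m+1}\cdots I_N)$ matrix. It therefore commutes with entrywise addition.

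It then remains to invoke the standard fact $\rank(\vect{A}+\vect{B})\leq \rank(\vect{A})+\rank(\vect{B})$ for matrices of equal size. This follows from $\operatorname{col}(\vect{A}+\vect{B})\subseteq \operatorname{col}(\vect{A})+\operatorname{col}(\vect{B})$ together with $\dim(U+V)\leq \dim U+\dim V$. Applying it yields
\begin{equation*}
\rank\bigl((\vectcal{X}_1+\vectcal{X}_2)^{\langle m\rangle}\bigr)\leq \rank(\vectcal{X}_1^{\langle m\rangle})+\rank(\vectcal{X}_2^{\langle m\rangle})
\end{equation*}
for every interior $m$, and combining all coordinates gives the claim.

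\textbf{Main obstacle.} The argument has no genuine difficulty. The only point requiring care is stating precisely that the unfolding map commutes with addition, which is immediate once unfolding is viewed as a fixed index bijection.

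\textbf{Constructive alternative.} Following \cite[Section~4.1]{oseledets2011tensor}, one could instead exhibit an explicit TT representation of the sum with block-diagonal cores of sizes $r_{m-1}+r'_{m-1}$ by $r_m+r'_m$, and row/column blocks at the ends. That would bound the TT ranks constructively, but the unfolding argument above is shorter and is the one I would present.
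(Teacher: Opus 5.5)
Your proof is correct, but it takes a different route from the one the paper relies on. The paper gives no proof of its own and cites \cite[Section~4.1]{oseledets2011tensor}. There, the sum of two TTs with cores $\vectcal{A}_j$ and $\vectcal{B}_j$ is written explicitly as a TT: its interior cores have block-diagonal slices built from $\vectcal{A}_j(:,i_j,:)$ and $\vectcal{B}_j(:,i_j,:)$, and its first and last cores are the horizontally and vertically concatenated end blocks. This bounds the compression rank of that particular representation by the sum of the summands' compression ranks. To turn it into the stated bound on $\rank(\cdot)$, one must also invoke \cref{fact:exact.rank} (each $\vectcal{X}_i$ admits a TT representation of compression rank exactly $\rank(\vectcal{X}_i)$) and \cref{fact:crank.rank} ($\rank \preceq \cranktt$).

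Your unfolding argument works directly from \cref{def.rank}, needs none of that TT machinery, and is shorter and self-contained. What the constructive route buys is an explicit TT representation of $\vectcal{X}_1+\vectcal{X}_2$ with cores of known size. That is, it shows how to add tensors without leaving the TT format, which is why Oseledets presents it as part of TT arithmetic rather than as a standalone rank inequality.
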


\begin{fact}[Hadamard product of tensors~{\cite[Section~4.2]{oseledets2011tensor}}]\label{fact.hadamard}
  For any tensors $\vectcal{X}_1, \vectcal{X}_2$ of size $I_1 \times \cdots \times I_N$, $\rank(
  \vectcal{X}_1 \odot \vectcal{X}_2 ) \preceq \rank(\vectcal{X}_1) \odot \rank(\vectcal{X}_2)$.
\end{fact}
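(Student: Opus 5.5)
The plan is to work one unfolding at a time. By \cref{def.rank}, the claim reduces to showing, for every $m\in \llbracket 1, N-1 \rrbracket$,
\[
\rank\bigl( (\vectcal{X}_1 \odot \vectcal{X}_2)^{\langle m\rangle} \bigr) \leq \rank\bigl( \vectcal{X}_1^{\langle m\rangle} \bigr)\, \rank\bigl( \vectcal{X}_2^{\langle m\rangle} \bigr).
\]
The first and last entries of all rank vectors equal $1$, so they satisfy the inequality trivially.

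The first step is to note that unfolding commutes with the Hadamard product. Both $\vectcal{X}_1^{\langle m\rangle}$ and $\vectcal{X}_2^{\langle m\rangle}$ are obtained by the same reshaping, which sends entry $(i_1,\ldots,i_N)$ to the row index determined by $(i_1,\ldots,i_m)$ and the column index determined by $(i_{m+1},\ldots,i_N)$. Hence
\[
(\vectcal{X}_1 \odot \vectcal{X}_2)^{\langle m\rangle} = \vectcal{X}_1^{\langle m\rangle} \odot \vectcal{X}_2^{\langle m\rangle}.
\]
The problem therefore reduces to the classical matrix inequality $\rank(\vect{A}\odot\vect{B}) \leq \rank(\vect{A})\rank(\vect{B})$ for matrices $\vect{A},\vect{B}$ of equal size.

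The second step proves this matrix inequality via rank-one expansions. Let $r_A = \rank(\vect{A})$ and $r_B = \rank(\vect{B})$, and write
\[
\vect{A}=\sum_{k=1}^{r_A}\vect{u}_k\vect{v}_k^{\intercal}, \qquad \vect{B}=\sum_{l=1}^{r_B}\vect{p}_l\vect{q}_l^{\intercal}.
\]
Bilinearity of $\odot$ gives
\[
\vect{A}\odot\vect{B}=\sum_{k,l}(\vect{u}_k\vect{v}_k^{\intercal})\odot(\vect{p}_l\vect{q}_l^{\intercal}),
\]
and each term equals $(\vect{u}_k\odot\vect{p}_l)(\vect{v}_k\odot\vect{q}_l)^{\intercal}$, which has rank at most one. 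The matrix $\vect{A}\odot\vect{B}$ is thus a sum of at most $r_A r_B$ rank-one matrices. Subadditivity of matrix rank then gives the bound.

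An alternative for the second step is to observe that $\vect{A}\odot\vect{B}$ is a submatrix of $\vect{A}\otimes\vect{B}$, obtained by selecting the appropriate rows and columns. Combined with $\rank(\vect{A}\otimes\vect{B})=\rank(\vect{A})\rank(\vect{B})$, this yields the same inequality.

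I expect no real obstacle. The only point needing care is checking that the same index ordering is used for both tensors in the unfolding, so that the first step holds entrywise; this follows immediately from the definition of the reshaping. Collecting the entrywise inequalities over all $m$ gives $\rank(\vectcal{X}_1\odot\vectcal{X}_2)\preceq\rank(\vectcal{X}_1)\odot\rank(\vectcal{X}_2)$.
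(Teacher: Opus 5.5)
Your proof is correct, but it takes a different route from the source the paper relies on. The paper does not prove this fact itself; it cites \cite[Section~4.2]{oseledets2011tensor}, where the argument is carried out on the TT cores.

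In that argument, let $\vectcal{X}_1$ and $\vectcal{X}_2$ be TTs with cores $\vectcal{A}_j$ and $\vectcal{B}_j$. The mixed-product rule $(\vect{A}\otimes\vect{B})(\vect{C}\otimes\vect{D})=(\vect{A}\vect{C})\otimes(\vect{B}\vect{D})$ shows that $\vectcal{X}_1\odot\vectcal{X}_2$ is a TT with core slices $\vectcal{A}_j(:,i_j,:)\otimes\vectcal{B}_j(:,i_j,:)$. Hence the compression ranks multiply. To get a statement about $\rank$ rather than $\cranktt$, one starts from representations with $\cranktt(\vectcal{X}_i)=\rank(\vectcal{X}_i)$, which exist by \cref{fact:exact.rank}. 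One then finishes with $\rank\preceq\cranktt$ from \cref{fact:crank.rank}.

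Your argument works directly on the unfoldings of \cref{def.rank} and reduces everything to the matrix inequality $\rank(\vect{A}\odot\vect{B})\le\rank(\vect{A})\rank(\vect{B})$. It therefore needs neither the existence of an exact-rank TT representation nor the comparison between $\rank$ and $\cranktt$. It is shorter and fully elementary, and both of your justifications of the matrix inequality are sound.

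What the core-wise route buys in exchange is construction. It produces explicit cores for $\vectcal{X}_1\odot\vectcal{X}_2$ without ever forming a full tensor. It also bounds the compression rank of that specific representation, which is the quantity that matters when the Hadamard factors $\vectcal{U}_{\nu,p}$ are stored and manipulated in TT format.

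The two arguments are the same identity viewed at different levels. Restricting the Kronecker-core construction to the $m$th unfolding gives exactly your expansion of $\vectcal{X}_1^{\langle m\rangle}\odot\vectcal{X}_2^{\langle m\rangle}$. That expansion has $\rank(\vectcal{X}_1^{\langle m\rangle})\,\rank(\vectcal{X}_2^{\langle m\rangle})$ terms of the form $(\vect{u}_k\odot\vect{p}_l)(\vect{v}_k\odot\vect{q}_l)^{\intercal}$.
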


\subsection{Background on tensor trains}

\begin{definition}[Tensor contraction~\cite{cichocki2016tensor}]\label{def:generalcontraction}
  For the order-$N$ tensor $\vectcal{X} \in \Real^{I_1\times \cdots \times I_N}$ and the order-$M$ tensor
  $\vectcal{W} \in \Real^{J_1\times \cdots \times J_M}$, with $I_{n} = J_{m}$ for some $(n, m) \in
  \llbracket 1, N \rrbracket \times \llbracket 1, M \rrbracket$, contraction $\times_n^m$ yields the
  order-$(N + M - 2)$ tensor $\vectcal{Z} \coloneqq \vectcal{X} \times_n^m \vectcal{W} \in \Real^{I_1
    \times \cdots \times I_{n-1} \times I_{n+1} \times \cdots \times I_N \times J_1 \times \cdots \times
    J_{m-1} \times J_{m+1} \times \cdots \times J_M}$, with entries
  \begin{align*}
    & \vectcal{Z}(i_1, \ldots, i_{n-1}, i_{n+1}, \ldots, i_N, j_1, \ldots, j_{m-1}, j_{m+1}, \ldots, j_M)
    \notag\\
    & \coloneqq \sum_{i_n \in \llbracket 1, I_n \rrbracket} \vectcal{X} (i_1, \dots, i_{n-1}, i_n,
    i_{n+1}, \dots, i_N)\, \vectcal{W} (j_1, \dots, j_{m-1}, i_n, j_{m+1}, \dots, j_M) \,.
  \end{align*}
  Tensors can be contracted in several modes, \eg, $\times_{n_1,n_2}^{m_1,m_2}$ means contracting over
  modes $n_1, m_1$ and $n_2, m_2$, with $I_{n_1} = J_{m_1}$ and $I_{n_2} = J_{m_2}$.  The contraction
  $\times^1_N$ is also written as $\times^1$.
\end{definition}

\begin{figure}
  \centering\includegraphics[width = .7\linewidth]{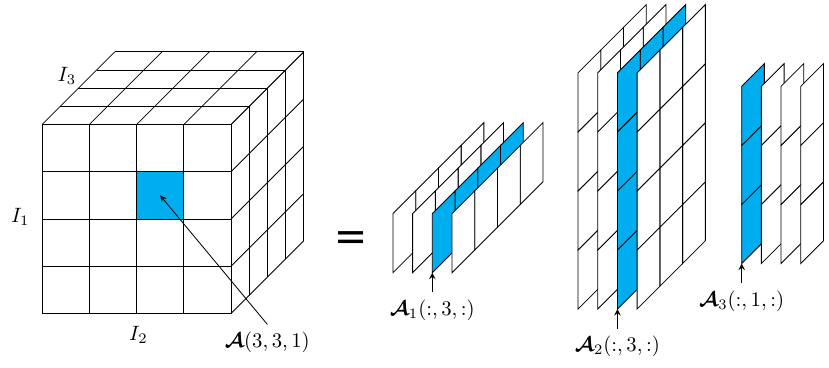}%
  \caption{A visualization of an order-3 TT tensor (see~\cref{def:tt.format}). The $(3,3,1)$th entry of a
    TT-tensor $\vectcal{A} \in \Real^{4 \times 4 \times 4}$ is computed by the matrix multiplication of
    the (lateral) slices of the core tensors, \protect\ie, $\vectcal{A}(3,3,1) = \vectcal{A}_1(:,3,:)\,
    \vectcal{A}_2(:,3,:)\, \vectcal{A}_3(:,1,:)$. The core tensors $\vectcal{A}_1 \in \Real^{1\times 4
      \times 4}$, $\vectcal{A}_2 \in \Real^{4\times 4 \times 3}$, $\vectcal{A}_3 \in \Real^{3\times 4
      \times 1}$ yield the TT compression rank $\cranktt ( \vectcal{A} ) = (1, 4, 3,
    1)$.}\label{fig:3d-ttd}
\end{figure}

\begin{definition}[Tensor trains~\cite{oseledets2011tensor}]\label{def:tt.format}
  Given positive integers $1 \eqqcolon r_0, r_1, \ldots, r_{N-1}, r_N \eqqcolon 1$, and a number of $N$
  order-\num{3} \textit{core}\/ tensors $\Set{ \vectcal{A}_j \in \Real^{r_{j-1}\times I_j \times r_j}
    \given j \in \llbracket 1, N \rrbracket }$, the associated order-$N$ \textit{tensor train (TT)}\/ is
  defined as $\vectcal{A} \coloneqq \vectcal{A}_1 \times^1 \vectcal{A}_2 \times^1 \cdots \times^1
  \vectcal{A}_N \in \Real^{ I_1 \times \cdots \times I_N }$. The \textit{TT (compression) rank}\/ of the
  TT $\vectcal{A}$ is defined as the $(N+1) \times 1$ vector $\cranktt ( \vectcal{A} ) \coloneqq (1
  \coloneqq r_0, r_1, \ldots, r_{N-1}, r_N \eqqcolon 1)$. It can be verified that the $(i_1, \ldots,
  i_N)$th entry of $\vectcal{A}$ is expressed as the following product~\cite{oseledets2011tensor}:
  \begin{align*}
    \vectcal{A} (i_1, \ldots, i_N) = \vectcal{A}_1 (1, i_1, :)\, \vectcal{A}_2 (:, i_2, :) \cdots
    \vectcal{A}_{N-1} (:, i_{N-1}, :)\, \vectcal{A}_N (:, i_N, 1) \,,
  \end{align*}
  where $\vectcal{A}_j (:, i_j, :)$ are matrices for $j\in \llbracket 2, N-1 \rrbracket$, while
  $\vectcal{A}_1 (1, i_1, :)$ and $\vectcal{A}_N (:, i_N, 1)$ are row and column vectors, respectively;
  see also \cref{fig:3d-ttd}.
\end{definition}

\begin{fact}[{\cite{oseledets2011tensor}}]\label{fact:crank.rank}
  The rank of a TT $\vectcal{A}$---see \cref{def.rank}---satisfies $\rank(\vectcal{A}) \preceq \cranktt (
  \vectcal{A} )$.
\end{fact}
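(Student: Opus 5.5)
The plan is to show, for each $m\in \llbracket 1, N-1 \rrbracket$, that the $m$th unfolding factors through an $r_m$-dimensional space, i.e.\ $\vectcal{A}^{\langle m\rangle} = \vect{L}_m \vect{R}_m$ with $\vect{L}_m \in \Real^{(I_1\cdots I_m)\times r_m}$ and $\vect{R}_m \in \Real^{r_m\times (I_{m+1}\cdots I_N)}$. The inequality $\rank(\vectcal{A}^{\langle m\rangle}) \leq r_m$ then follows from the standard bound $\rank(\vect{L}\vect{R}) \leq$ the inner dimension. The end entries of $\rank(\vectcal{A})$ and $\cranktt(\vectcal{A})$ are both $1$ by definition, so they agree.

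To build the factorization, I would use the entry formula in \cref{def:tt.format}. Split the product of slices at position $m$:
\begin{align*}
  \vectcal{A}(i_1,\ldots,i_N) = \underbrace{\vectcal{A}_1(1,i_1,:)\cdots \vectcal{A}_m(:,i_m,:)}_{\eqqcolon\, \vect{\ell}(i_1,\ldots,i_m)^{\intercal}\in \Real^{1\times r_m}} \; \underbrace{\vectcal{A}_{m+1}(:,i_{m+1},:)\cdots \vectcal{A}_N(:,i_N,1)}_{\eqqcolon\, \vect{\rho}(i_{m+1},\ldots,i_N)\in \Real^{r_m\times 1}} \,.
\end{align*}
The left factor is a $1\times r_m$ row vector and the right factor is an $r_m\times 1$ column vector, because the inner dimensions $r_j$ chain consistently. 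Equivalently, this is the contraction $\vectcal{A} = (\vectcal{A}_1\times^1\cdots\times^1\vectcal{A}_m)\times^1(\vectcal{A}_{m+1}\times^1\cdots\times^1\vectcal{A}_N)$, which holds because $\times^1$ is associative along the chain.

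Next, define $\vect{L}_m$ as the matrix whose row indexed by the multi-index $(i_1,\ldots,i_m)$ is $\vect{\ell}^{\intercal}$. Define $\vect{R}_m$ as the matrix whose column indexed by $(i_{m+1},\ldots,i_N)$ is $\vect{\rho}$. Both multi-indices must be linearized by the same ordering used to define the unfolding. With these choices, the $(\text{row},\text{col})$ entry of $\vect{L}_m\vect{R}_m$ is exactly $\vectcal{A}(i_1,\ldots,i_N) = \vectcal{A}^{\langle m\rangle}(\text{row},\text{col})$.

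The only delicate point is bookkeeping. The unfolding's row and column orderings must match the way $\vect{L}_m$ and $\vect{R}_m$ are indexed. Any consistent ordering works, since permuting rows or columns does not change the rank, so this obstacle is notational rather than substantive.
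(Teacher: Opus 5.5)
Your proof is correct. Splitting the slice product at position $m$ gives $\vectcal{A}^{\langle m\rangle} = \vect{L}_m\vect{R}_m$ with inner dimension $r_m$, so $\rank(\vectcal{A}^{\langle m\rangle}) \leq r_m$, and the end entries agree by definition. The paper gives no proof of its own and simply cites \cite{oseledets2011tensor}; the standard argument behind that citation is exactly this factorization of the unfolding through the $r_m$-dimensional bond index, so your route is the expected one.
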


\begin{definition}[Tensor train decomposition~\cite{oseledets2011tensor}]\label{def:ttd}
  Given $\vect{r} \coloneqq (1, r_1, \ldots, r_{N-1}, 1) \in \IntegerPP^{N+1}$, define the set of all
  order-$N$ TTs of size $I_1 \times \cdots \times I_N$, with TT compression rank $\preceq \vect{r}$, as
  \begin{align*}
    \text{TT}_{ \preceq\, \vect{r} } \coloneqq \Set*{ \vectcal{A} = \vectcal{A}_1 \times^1 \vectcal{A}_2
      \times^1 \cdots \times^1 \vectcal{A}_N \given
      \begin{array}{l}
        \vectcal{A}_j \in \Real^{ r_{j-1}^{\prime} \times I_j \times r_j^{\prime} }, j \in
        \llbracket 1, N \rrbracket, \\
        \cranktt ( \vectcal{A} ) \preceq \vect{r}
    \end{array} } \,.
  \end{align*}
  The \textit{TT decomposition (TTD)}\/ of a tensor $\vectcal{X} \in \Real^{I_1 \times \cdots
    \times I_N}$ is defined then as a TT $\vectcal{A}^*_{ \vectcal{X}, \vect{r} }$ s.t.\
  \begin{align}
    \vectcal{A}^*_{ \vectcal{X}, \vect{r} } \in \arg\min_{ \vectcal{A}\, \in\, \text{TT}_{ \preceq\,
        \vect{r} } } \norm{ \vectcal{X} - \vectcal{A} }^2_{ \text{F} } \,. \label{def.ttd}
  \end{align}
\end{definition}

The TTD of a tensor $\vectcal{X}$ can be computed by the TT singular value decomposition (TT-SVD) of
\cref{alg:ttsvd}. While this statement belies considerable mathematical depth, the details are deferred
to \cref{appendix:ttd} to keep the exposition here accessible.

\begin{algorithm}[!t]
  \caption{TT-SVD~\cite{oseledets2011tensor}}\label{alg:ttsvd}
  \begin{algorithmic}[1]
    \REQUIRE A tensor $\vectcal{X} \in \Real^{I_1 \times \cdots \times I_N}$ and target TT compression
    rank $\vect{r} = (1,r_1,\ldots,r_{N-1},1)$.
    \ENSURE $\vectcal{A}^{\text{SVD}}_{ \vectcal{X}, \vect{r} } = \vectcal{A}_1 \times^1 \vectcal{A}_2
    \times^1 \cdots \times^1 \vectcal{A}_N$.

    \STATE Initialize the auxiliary matrix $\vect{T}_1 = [1] \in \Real^{1\times 1}$.
    \FOR{$m = 1$ \TO $N-1$}
        \STATE Compute the SVD of $( \vect{T}_m \otimes \vect{I}_{I_m} )^\intercal \vectcal{X}^{\langle m
          \rangle}=\vect{U}_m\vectgr{\Sigma}_m \vect{V}_m^\intercal$ with singular values in descending order.%
        \STATE $\vectcal{A}_m^{\langle 2 \rangle} \gets$ first $r_m$ column vectors of
        $\vect{U}_m$. Recall that the size of $\vectcal{A}_m^{\langle 2 \rangle}$ is $(r_{m-1}I_m) \times
        r_m$. \label{alg:ttsvd.step.svd}

        \STATE $\vectcal{A}_m \gets$ reshape $\vectcal{A}_m^{\langle 2 \rangle}$ to $r_{m-1} \times I_m \times r_m$.

        \STATE $\vect{T}_{m+1} \gets ( \vect{T}_m \otimes \vect{I}_{I_m} ) \vectcal{A}_m^{\langle 2
          \rangle}$. The size of $\vect{T}_{m+1}$ is $(\prod_{i=1}^{m} I_i) \times r_m$.
    \ENDFOR

    \STATE $\vectcal{A}_N^{\langle 2 \rangle} \gets \vect{T}_N^\intercal \vectcal{X}^{\langle N-1 \rangle}$. Reshape to $r_{N-1} \times I_N \times 1$ to get $\vectcal{A}_N$.

  \end{algorithmic}
\end{algorithm}

\subsection{Background on Riemannian geometry}\label{sec:Riemannian}

This section briefly reviews key concepts from Riemannian geometry; for a comprehensive treatment, the
reader is referred to~\cite{absil2008manifold, RobbinSalamon:22}.

For the not necessarily open subsets $U \subset \Real^{d}$ and $V \subset \Real^{d'}$, a map $\mathcal{L}
\colon U \to V$ is called \textit{smooth,} if it is infinitely many times
differentiable~\cite[\S2.1]{RobbinSalamon:22}. A \textit{diffeomorphism}\/ is a bijection $\mathcal{L}
\colon U \to V$ where both $\mathcal{L}$ and its inverse $\mathcal{L}^{-1}$ are smooth---in such a case,
$U$ and $V$ are called \textit{diffeomorphic}\/ to each other.

An $d$-dimensional \textit{smooth manifold}\/ $\mathfrak{M}$, embedded in an Euclidean space
$\mathcal{E}$, is a set such that (s.t.) any point $\vectgr{\Theta} \in \mathfrak{M}$ has an open
neighborhood that is diffeomorphic to an open subset of
$\Real^{d}$~\cite[Def.~2.1.3]{RobbinSalamon:22}. A \textit{tangent vector}\/ of $\mathfrak{M}$ at
$\vectgr{\Theta}$ is the ``velocity'' $\dot{\gamma} (0) \coloneqq \lim_{t\to 0} [\gamma(t) - \gamma(0)] /
t$ of a smooth curve $\gamma \colon \Real \to \mathfrak{M} \colon t \mapsto \gamma(t)$ that crosses
$\vectgr{\Theta}$, \ie, $\gamma(0) = \vectgr{\Theta}$~\cite[Def.~2.2.1]{RobbinSalamon:22}; see
\cref{fig:mandef}. The \textit{tangent space}\/ $T_{\vectgr{\Theta}} \mathfrak{M}$ of $\mathfrak{M}$ at
$\vectgr{\Theta}$ is the $d$-dimensional linear vector subspace of $\mathcal{E}$ that comprises all
tangent vectors of $\mathfrak{M}$ at $\vectgr{\Theta}$~\cite[Def.~2.2.1]{RobbinSalamon:22}, and the tangent bundle $T\mathfrak{M}$ is the
collection of all tangent spaces $T_{\vectgr{\Theta}} \mathfrak{M}$, $\forall \vectgr{\Theta} \in
\mathfrak{M}$, which can be defined as
\(
  T\mathfrak{M} \coloneqq \Set{ (\vectgr{\Theta}, \vectgr{\xi}) \given \vectgr{\Theta} \in \mathfrak{M}, \vectgr{\xi} \in T_{\vectgr{\Theta}} \mathfrak{M} }
\)~\cite[Def.~2.6.6]{RobbinSalamon:22}.
For the smooth map $\mathcal{L} \colon \mathfrak{M} \to \Real$,
the \textit{derivative}\/ of $\mathcal{L}$ at $\vectgr{\Theta} \in \mathfrak{M}$ is the linear map $D\mathcal{L} (
\vect{\Theta} ) \colon T_{\vectgr{\Theta}} \mathfrak{M} \to \Real \colon \vectgr{\xi} \mapsto
D\mathcal{L}(\vect{\Theta}) [ \vectgr{\xi} ] \coloneqq \lim_{ t\to 0 } [ \mathcal{L}( \gamma(t) ) - \mathcal{L}( \vect{\Theta} ) ] /
t$, for any smooth curve $\gamma \colon \Real \to \mathfrak{M}$ with $\gamma(0) = \vectgr{\Theta}$ and
$\dot{\gamma} (0) = \vectgr{\xi}$~\cite[\S2.2.2]{RobbinSalamon:22}. By the Riesz representation theorem,
the unique vector $\nabla \mathcal{L} ( \vectgr{\Theta} ) \in T_{\vectgr{\Theta}} \mathfrak{M}$
s.t.\ $D\mathcal{L}(\vectgr{\Theta}) [ \vectgr{\xi} ] = \innerp{ \nabla \mathcal{L} ( \vectgr{\Theta} ) }{ \vectgr{\xi} }_{
  \mathcal{E} }$, $\forall \vectgr{\xi}\in T_{\vectgr{\Theta}} \mathfrak{M}$, is called the
\textit{gradient}\/ of $\mathcal{L}$ at $\vectgr{\Theta}$.

\begin{figure}
  \centering
  \subfloat[Riemannian manifold and retraction~\label{fig:mandef}]{
    \includegraphics[width=.45\columnwidth]{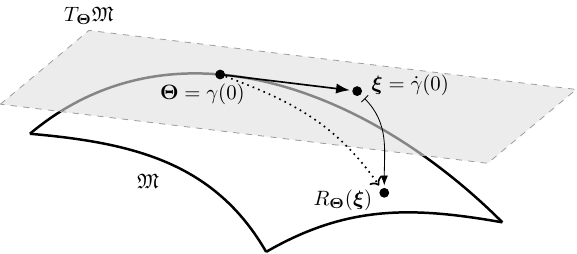} }%
  \quad%
  \subfloat[A single iteration in \cref{alg:krettah}~\label{fig:rgd.step}]{
    \includegraphics[width=.45\columnwidth]{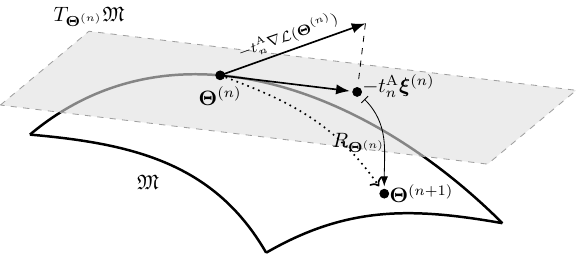} }
  \caption{(a) Illustration of a smooth manifold $\mathfrak{M}$. The tangent space
    $T_{\vectgr{\Theta}}\mathfrak{M}$ at $\vectgr{\Theta}$ comprises all tangent vectors $\vectgr{\xi}$
    of $\mathfrak{M}$ at $\vectgr{\Theta}$. The retraction map $R_{\vectgr{\Theta}}(\vectgr{\xi})$ maps a
    tangent vector $\vectgr{\xi} \in T_{\vectgr{\Theta}}\mathfrak{M}$ to a point in $\mathfrak{M}$; it
    serves as an extension and a tractable alternative of the exponential map; see~\cref{def:retraction}
    in~\cref{appendix:manifold}.  (b) Illustration of one Riemannian-gradient descent step in
    \cref{alg:krettah}. The Euclidean gradient of a smooth loss ${\mathcal{L}}( \cdot )$ at
    ${\vectgr{\Theta}}^{(n)}$ is first projected onto the tangent space $T_{ \vectgr{\Theta}^{(n)} }
    \mathfrak{M}$, \protect\ie, $\vectgr{\xi}^{(n)} \coloneqq P_{ T_{ \vectgr{\Theta}^{(n)} }\mathfrak{M}
    } (\, \nabla \mathcal{L}( \vectgr{\Theta}^{(n)} ) \,)$, and then retracted back to the manifold
    $\mathfrak{M}$ by $R_{{\vectgr{\Theta}}^{(n)}}$. Symbol $t_n^{\textnormal{A}}$ stands for the
    (Armijo) step-size, computed via line search.}
\end{figure}

A \textit{Riemannian manifold}\/ $\mathfrak{M}$ is a smooth manifold equipped with a Riemannian metric,
\ie, a collection of inner products $\innerp{ \cdot }{ \cdot }_{ \vectgr{\Theta} }$, defined at every
$\vectgr{\Theta} \in \mathfrak{M}$ and on pairs of tangent vectors in $T_{\vectgr{\Theta}} \mathfrak{M}$,
s.t.\ it induces a specific smooth map for every pair of vector fields on
$\mathfrak{M}$~\cite[Def.~3.7.1]{RobbinSalamon:22}. The metric naturally defines the norm $\norm{\cdot}_{
  \vectgr{\Theta} } \coloneqq \innerp{ \cdot }{ \cdot }_{ \vectgr{\Theta} }^{1/2}$.  For a smooth
function $\mathcal{L}\colon \mathfrak{M} \to \Real$, the \textit{Riemannian gradient}\/ of $\mathcal{L}$
at $\vectgr{\Theta}$, denoted by $\grad \mathcal{L}(\vectgr{\Theta})$, is the unique tangent vector in
$T_{\vectgr{\Theta}} \mathfrak{M}$ that satisfies~\cite[(3.31)]{absil2008manifold}
\begin{align}
  \innerp{\grad \mathcal{L}(\vectgr{\Theta})}{\vectgr{\xi}}_{\vectgr{\Theta}} = D\mathcal{L}
  (\vectgr{\Theta})[\vectgr{\xi}] \,, \quad \forall \vectgr{\xi} \in T_{\vectgr{\Theta}} \mathfrak{M}
  \,. \label{eq:def.rgrad}
\end{align}

In optimization over manifolds, one needs to properly map a tangent vector back to the manifold. An
example of such a map is the \textit{exponential map}\/ $\Exp_{\vectgr{\Theta}} \colon
T_{\vectgr{\Theta}} \mathfrak{M} \to \mathfrak{M}$~\cite[Def.~4.3.5]{RobbinSalamon:22}. In many cases,
the exponential map does not admit a simple closed-form expression, and computing it via numerical
solvers can be computationally demanding. To surmount such obstacles, the \textit{retraction map}\/ $R_{
  \vectgr{\Theta} } \colon T_{ \vectgr{\Theta} }\mathfrak{M} \to \mathfrak{M}$ serves as a
computationally tractable generalization of the exponential map~\cite[Def.~4.1.1]{absil2008manifold};
see~\cref{def:retraction} in~\cref{appendix:manifold}. An illustration of the aforementioned concepts
pertaining smooth manifolds is given in~\cref{fig:mandef}.

\begin{fact}[Tensors of fixed-rank~\cite{holtz2012manifolds}]\label{example:TTrank}
  Given $\vect{r} \coloneqq (1, r_1, \ldots, r_{N-1}, 1) \in \IntegerPP^{N+1}$, the set of all tensors
  with rank equal to $\vect{r}$ (\cref{def.rank}), namely,
  \begin{align}
    \mathcal{M}_{\vect{r}} \coloneqq \Set*{ \vectcal{X}\in \Real^{I_1\times \cdots \times I_N} \given
      \rank(\vectcal{X}) = \vect{r} } \,,
  \end{align}
  is a Riemannian manifold of dimension $\sum_{j=1}^N r_{j-1}I_j r_j - \sum_{j=1}^{N-1} r_j^2$. The
  Riemannian metric in $\mathcal{M}_{\vect{r}}$ is inherited from the Euclidean inner product of tensors
  (\cref{sec:basic}), \ie, $\innerp{\bm{\xi}}{\bm{\eta}}_\vectcal{X} \coloneqq
  \innerp{\bm{\xi}}{\bm{\eta}}$, where the tangent vectors $\bm{\xi}, \bm{\eta} \in T_{\vectcal{X}} \mathcal{M}_{\vect{r}}$ are tensors in the ambient $\Real^{I_1\times \cdots \times
  I_N}$~\cite{steinlechner2016riemannian}, and $T_{\vectcal{X}} \mathcal{M}_{\vect{r}}$ is the tangent space of $\mathcal{M}_{\vect{r}}$ at $\vectcal{X}$. Necessary and sufficient conditions for
  $\mathcal{M}_{\vect{r}}$ to be nonempty are $r_{j-1}\leq I_j r_j$ and $r_j \leq I_j r_{j-1}$, $\forall
  j \in \llbracket 1, N \rrbracket$~\cite[(9.32)]{uschmajew2020geometric}.
\end{fact}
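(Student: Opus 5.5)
The plan is to realize $\mathcal{M}_{\vect{r}}$ as the image of a quotient of the set of full-rank core tuples, following \cite{holtz2012manifolds, uschmajew2020geometric}. Let $\mathcal{C}^*_{\vect{r}}$ be the set of tuples $(\vectcal{A}_1,\ldots,\vectcal{A}_N)$ with $\vectcal{A}_j\in\Real^{r_{j-1}\times I_j\times r_j}$ such that every left unfolding $\vectcal{A}_j^{\langle 2\rangle}\in\Real^{(r_{j-1}I_j)\times r_j}$ has full column rank and every right unfolding $\vectcal{A}_j^{\langle 1\rangle}\in\Real^{r_{j-1}\times(I_jr_j)}$ has full row rank. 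This set is open in a Euclidean space.

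The first step is to show that the contraction map $\tau\colon(\vectcal{A}_j)_j\mapsto\vectcal{A}_1\times^1\cdots\times^1\vectcal{A}_N$ maps $\mathcal{C}^*_{\vect{r}}$ onto $\mathcal{M}_{\vect{r}}$. The unfolding $\vectcal{X}^{\langle m\rangle}$ factors as $\vect{T}_{m+1}\vect{S}_{m+1}^\intercal$, where $\vect{T}_{m+1}$ is the left interface matrix of size $(I_1\cdots I_m)\times r_m$ and $\vect{S}_{m+1}$ is the right interface. By induction, the full-rank conditions on the cores give each interface full rank $r_m$, so $\rank(\vectcal{X}^{\langle m\rangle})=r_m$. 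Conversely, \cref{alg:ttsvd} run without truncation on $\vectcal{X}\in\mathcal{M}_{\vect{r}}$ produces cores of exactly these sizes, and since the ranks are attained they lie in $\mathcal{C}^*_{\vect{r}}$. Nonemptiness is settled here as well. The conditions $r_{j-1}\le I_jr_j$ and $r_j\le I_jr_{j-1}$ are necessary, because the unfoldings of $\vectcal{A}_j$ must have full rank. They are also sufficient: one can choose generic cores satisfying them, following \cite[(9.32)]{uschmajew2020geometric}.

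The second step is the gauge action. The group $G=\mathrm{GL}(r_1)\times\cdots\times\mathrm{GL}(r_{N-1})$ acts by $\vectcal{A}_j\mapsto \vect{G}_{j-1}^{-1}\times\vectcal{A}_j\times\vect{G}_j$. This action is free and proper on $\mathcal{C}^*_{\vect{r}}$. The fibers of $\tau$ are exactly the $G$-orbits, which follows from uniqueness of full-rank factorizations of each unfolding up to an invertible matrix. Hence $\mathcal{C}^*_{\vect{r}}/G$ is a smooth manifold of dimension $\sum_j r_{j-1}I_jr_j-\sum_{j=1}^{N-1}r_j^2$.

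The third step, and the main obstacle, is to show that the induced map $\mathcal{C}^*_{\vect{r}}/G\to\Real^{I_1\times\cdots\times I_N}$ is an injective immersion that is a homeomorphism onto its image, i.e., an embedding. For the immersion property, I would show that the kernel of $D\tau$ equals the tangent space of the orbit. Using the gauge, I would put the cores in left-orthogonal form for $j<\mu$ and right-orthogonal form for $j>\mu$. Then a variation $\delta\vectcal{A}_j$ with $\vect{T}_j$-orthogonality $\vect{T}_j^\intercal\,\delta(\cdot)=0$ (the standard gauge condition) yields a tangent vector $\sum_j \vect{T}_j\,\delta\vectcal{A}_j\,\vect{S}_{j+1}^\intercal$ whose summands are mutually orthogonal. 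That tangent vector vanishes only if every $\delta\vectcal{A}_j$ vanishes. For the embedding, I expect a cleaner route is to exhibit a local inverse. Near $\vectcal{X}_0$, the interfaces can be recovered continuously from $\vectcal{X}$ via projections onto the dominant column spaces of the unfoldings, as in \cref{alg:ttsvd}, which are smooth because the singular-value gap to zero persists under small perturbations. This gives a smooth local chart.

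Finally, since $\mathcal{M}_{\vect{r}}$ is an embedded submanifold of the Euclidean space $\Real^{I_1\times\cdots\times I_N}$, restricting the inner product $\innerp{\cdot}{\cdot}$ to each $T_{\vectcal{X}}\mathcal{M}_{\vect{r}}$ gives a smoothly varying metric. This makes $\mathcal{M}_{\vect{r}}$ a Riemannian manifold with the stated metric, consistent with \cite{steinlechner2016riemannian}.
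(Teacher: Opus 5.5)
Your proposal is correct and follows the route of the sources this Fact cites: full-rank core tuples modulo the gauge group $\mathrm{GL}(r_1)\times\cdots\times\mathrm{GL}(r_{N-1})$, fibers equal to orbits via uniqueness of full-rank factorizations, immersion via gauge-fixed variations, embedding via a local inverse, the induced Euclidean metric, and nonemptiness from the full-rank unfoldings of the cores, as in \cite{holtz2012manifolds, uschmajew2020geometric}; the paper itself gives no proof, only these citations. The two points you assert without argument are routine. For properness: if $\vectcal{A}^{(k)}\to\vectcal{A}$ and $g_k\cdot\vectcal{A}^{(k)}\to\vectcal{B}$ in $\mathcal{C}^*_{\vect{r}}$, the gauge matrices converge one after another to invertible limits, using pseudo-inverses of the full-column-rank left unfoldings $\vectcal{A}_j^{\langle 2\rangle}$. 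For the local inverse: near $\vectcal{X}_0$, take $\vectcal{X}^{\langle m\rangle}\vect{Z}_m$ as the $m$th left interface, with $\vect{Z}_m$ fixed so that $\vectcal{X}_0^{\langle m\rangle}\vect{Z}_m$ has full column rank; this avoids the non-uniqueness of singular vectors in TT-SVD.
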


A parameterization of the tangent space $T_{\vectcal{X}} \mathcal{M}_{\vect{r}}$ is given
in~\cref{fact:mr.tangentspace} in~\cref{appendix:manifold}.  Then, under the light
of~\cref{fact:mr.tangentspace.proj} in~\cref{appendix:manifold}, the Riemannian gradient
\eqref{eq:def.rgrad} of a smooth function $\mathcal{L} \colon \mathcal{M}_{\vect{r}} \to \Real$ is given
by the orthogonal projection of the Euclidean gradient $\nabla \mathcal{L} (\vectcal{X})$ onto the
tangent space, \ie, $\grad \mathcal{L} (\vectcal{X}) = P_{T_\vectcal{X} \mathcal{M}_{\vect{r}}}(\nabla
\mathcal{L} (\vectcal{X}))$~\cite{absil2008manifold}; see also \cref{fig:rgd.step}.

The exponential map for manifold $\mathcal{M}_{\vect{r}}$ does not admit a closed-form expression. To
surmount this obstacle and to relax the burden of computing the exponential map, the following retraction
map $R_{\vectcal{X}} \colon T_\vectcal{X} \mathcal{M}_{\vect{r}} \to \mathcal{M}_{\vect{r}} \colon
\vectgr{\xi} \mapsto R_{\vectcal{X}} (\vectgr{\xi})$ with
\begin{align}
  R_{\vectcal{X}} (\vectgr{\xi}) \coloneqq \text{TT-SVD} (\vectcal{X} + \vectgr{\xi})
  \,, \label{eq:retract.fixedrank}
\end{align}
was introduced in~\cite{steinlechner2016riemannian}, with TT-SVD provided by \cref{alg:ttsvd}. More
details on \eqref{eq:retract.fixedrank} can be found in \cref{remark:retraction.tt} of
\cref{appendix:manifold}.

The Riemannian geometry of positive-definite matrices, which will be used in the sequel, is discussed in
detail in \cref{example:pd} of \cref{appendix:manifold}; its treatment is deferred there to avoid
overcrowding the present section with mathematical details.

\section{The KReTTaH Framework}\label{sec:modeling}

\subsection{Navigator data and landmark points}\label{sec:landmark}

In the following, $\vectcal{Y}$ denotes the \textit{observed}\/ $I_1 \times \cdots \times I_N$ data
tensor. In the context of imputation, not all of its entries are observed. To this end, let $\Omega$
denote all those indices $(i_1, \ldots, i_N) \in \llbracket 1, I_1 \rrbracket \times \cdots \times
\llbracket 1, I_N \rrbracket$ where $\vectcal{Y}(i_1, \ldots, i_N)$ are observed. The ``sampling map''
$\samp \colon \Real^{I_1 \times \cdots \times I_N} \to \Real^{I_1 \times \cdots \times I_N}$ is defined
by $\samp( \vectcal{Y} )(i_1, \ldots, i_N) \coloneqq \vectcal{Y} (i_1, \ldots, i_N)$, if $(i_1, \ldots,
i_N) \in \Omega$, and $\samp(\vectcal{Y}) (i_1, \ldots, i_N) \coloneqq 0$, otherwise.

To capture structural information from sparsely and irregularly observed tensor data, the concept of
\textit{navigator data}\/, has been introduced in MRI reconstruction and motion correction.  Even when
only a small fraction of entries are observed, groups of those entries often exhibit strong spatial and
temporal dependencies. Accordingly, let $\Set{ \Omega_k }_{k=1}^{ N_{\text{nav}} }$, for some $N_{
  \text{nav} } \in \IntegerPP$, be user-defined index sets with $\Omega_k \subset \Omega$ and $\lvert
\Omega_k \rvert = D_{\text{nav}}$, $\forall k$, where $D_{\text{nav}}$ is chosen to be smaller than
$\lvert \Omega \rvert$. The collection $\Set{ \Omega_k }_{k=1}^{ N_{\text{nav}} }$ is designed to
accommodate diverse settings and domains; modeling image patches or temporal windows, for instance. The
data $\Set{ \vectcal{Y}(i_1, \ldots, i_N) \given (i_1, \ldots, i_N) \in \Omega_k }$ are vectorized to
form $\vect{y}_k \in \Real^{D_{\text{nav}}}$, and the vectors $\Set{ \vect{y}_k }_{ k = 1 }^{ N_{
    \text{nav} } }$ constitute the navigator data.

Most likely, $N_{\text{nav}}$ and $D_{\text{nav}}$ are large. It is conceivable that the most meaningful
information in $\Set{ \vect{y}_k }_{ k = 1 }^{ N_{ \text{nav} } }$ lies in a space of dimensionality
$D_l$, much smaller than $D_{\text{nav}}$. Motivated by~\cite{roweis2000nonlinear, elhamifar2011sparse},
the popular local linear embedding (LLE) technique is employed to reduce the dimensionality of the
navigator data to $D_l$ and yield the lower-dimensional representations $\Set{ \check{\vect{y}}_k
}_{k=1}^{N_\text{nav}} \subset \Real^{D_l}$ of the original navigator data $\Set{ \vect{y}_k }_{ k = 1
}^{ N_{ \text{nav} } } \subset \Real^{ D_{\text{nav}} }$. The small dimensionality $D_l$ will also
facilitate the optimization of covariance matrices on the $\PD^{D_l}$ manifold (see~\cref{example:pd}
and~\eqref{eq:aniso.gaussian}). Further, to reduce the computational burden imposed by a large
$N_{\text{nav}}$, vectors $\Set{ \mathbfit{l}_j }_{j=1}^{N_{\mathit{l}}} \subset \Real^{D_l}$, referred
to as \textit{landmark points}~\cite{williams2000using, de2004sparse}, with $N_{\mathit{l}} <
N_{\text{nav}}$, are selected from $\Set{ \check{\vect{y}}_k }_{k=1}^{N_\text{nav}}$ via a user-defined
strategy. Landmark points may be selected in various ways---randomly, via greedy max-min-distance
sampling, or via clustering~\cite{williams2000using, de2004sparse, multilkrim}. The max-min
strategy~\cite{de2004sparse} is generally preferable, as the resulting $\Set{ \mathbfit{l}_j
}_{j=1}^{N_{\mathit{l}}}$ provides good coverage of the original point cloud $\Set{ \check{\vect{y}}_k
}_{k=1}^{N_\text{nav}}$.

\subsection{Reproducing kernel Hilbert spaces (RKHSs)}\label{sec:rkhs}

To model intricate ``nonlinear'' correlations present in the landmark points $\Set{ \mathbfit{l}_j
}_{j=1}^{N_{\mathit{l}}}$, this paper utilizes the well-established theory of reproducing kernel Hilbert
spaces (RKHS)~\cite{aronszajn1950theory, kimeldorf1971some, scholkopf2002learning}. An RKHS $\mathscr{H}$
is a Hilbert space of functions $f\colon \Real^{D_l} \to \Real \colon \mathbfit{l} \mapsto
f(\mathbfit{l})$, equipped with an inner product $\innerp{\cdot}{\cdot}_{\mathscr{H}}$ and a
\textit{reproducing kernel}\/ $\kappa(\cdot, \cdot) \colon \Real^{D_l}\times \Real^{D_l} \to \Real$ that
satisfies $\kappa(\mathbfit{l}, \cdot)\in \mathscr{H}$, $\forall \mathbfit{l}\in \Real^{D_l}$, as well as
the \textit{reproducing property:}\/ $f(\mathbfit{l}) = \innerp{f}{ \kappa(\mathbfit{l}, \cdot)
}_{\mathscr{H}}$, $\forall f \in\mathscr{H}$, $\forall \mathbfit{l}\in \Real^{D_l}$. The kernel function
defines also the feature map $\varphi \colon \Real^{D_l} \to \mathscr{H} \colon \mathbfit{l} \mapsto
\varphi( \mathbfit{l} ) \coloneqq \kappa(\mathbfit{l}, \cdot)$. The reproducing property establishes the
\textit{kernel trick:}\/ $\innerp{ \varphi(\mathbfit{l}^{\prime}) }{ \varphi(\mathbfit{l})
}_{\mathscr{H}} = \innerp{ \kappa(\mathbfit{l}^{\prime}, \cdot) }{ \kappa(\mathbfit{l}, \cdot)
}_{\mathscr{H}} = \kappa( \mathbfit{l}^{\prime}, \mathbfit{l} )$, $\forall ( \mathbfit{l}^{\prime},
\mathbfit{l} ) \in \Real^{D_l} \times \Real^{D_l}$, that is, inner products in the (potentially high- or
infinite-dimensional) feature space $\mathscr{H}$ can be computed implicitly via simple evaluations of
the kernel function, without explicitly constructing the feature map $\varphi (\cdot)$.

The landmark points $\Set{ \mathbfit{l}_j }_{j=1}^{N_{\mathit{l}}} \subset \Real^{D_l}$ are mapped to
$\Set{ \varphi(\mathbfit{l}_j) }_{j=1}^{N_{\mathit{l}}} \subset \mathscr{H}$, and the kernel matrix
$\vect{K}$ is defined as the $N_{\mathit{l}} \times N_{\mathit{l}}$ matrix with entries $\vect{K} (j,
j^{\prime}) \coloneqq \innerp{\varphi(\mathbfit{l}_j)}{\varphi(\mathbfit{l}_{j^{\prime}})}_{\mathscr{H}}
= \kappa(\mathbfit{l}_j, \mathbfit{l}_{j^{\prime}})$---see \eqref{eq:aniso.gaussian} for an example based
on the celebrated anisotropic Gaussian reproducing kernel. Loosely speaking, $\vect{K}$ encodes
``linear'' correlations among $\Set{ \varphi(\mathbfit{l}_j) }_{j=1}^{N_{\mathit{l}}}$, or alternatively,
``nonlinear'' correlations among the landmark points $\Set{ \mathbfit{l}_j }_{j=1}^{N_{\mathit{l}}}$.

Kernel functions have already been incorporated within tensor models for imputation tasks; these models
include CPD~\cite{bazerque2012nonparametric, larsen2024tensor, li2025bayesian},
TKD~\cite{zhao2013kernel}, TTD~\cite{gorodetsky2018gradient}, and TRD~\cite{huang2025kernel}. Among these
approaches, \cite{bazerque2012nonparametric, larsen2024tensor, zhao2013kernel, gorodetsky2018gradient}
approximate missing entries via RKHS-based arguments, with coefficients learned through alternating
minimization or gradient descent. In contrast, \cite{li2025bayesian, huang2025kernel} adopt a variational
Bayesian-inference approach by placing Gaussian priors on latent variables and core tensors in the
decomposition.

\subsection{Data modeling and regression}\label{sec:data.modeling}

To exploit the structural information contained in the navigator data and the associated landmark points gathered via~\cref{sec:landmark}, each entry of the tensor $\vectcal{Y}$ is approximated by nonlinear functions belonging to an RKHS described in~\cref{sec:rkhs}. 
In particular, for a chosen splitting index $m \in \llbracket 1,N-1 \rrbracket$, the $(i_1, i_2, \ldots, i_N)$th entry of
$\vectcal{Y}$ is approximated as
\begin{align}
  \vectcal{Y} (i_1, i_2, \ldots, i_N) & \approx f_{i_1,\ldots,i_m}( \check{\bm{\mu}}_{i_{m+1}, \ldots,
    i_N} ) = \innerp{f_{i_1,\ldots,i_m}}{ \varphi( \check{\bm{\mu}}_{i_{m+1},\ldots,i_N}) }_{\mathscr{H}}
  \,, \label{xij.reproducing.property}
\end{align}
where $f_{i_1, \ldots, i_m}(\cdot) \colon \Real^{D_l} \to \Real$, taken from the user-defined RKHS
$\mathscr{H}$ equipped with a reproducing kernel $\kappa$, is a function to be identified, $\check{\bm{\mu}}_{i_{m+1}, \ldots, i_N}$ is a $D_l \times
1$ vector to be also inferred, and the last equality in~\eqref{xij.reproducing.property} is because of
the reproducing property of $\mathscr{H}$~\cite{aronszajn1950theory, scholkopf2002learning}.
As noted in~\cref{sec:rkhs}, thanks to the reproducing property, the unknown entities $f_{i_1, \ldots, i_m}$ and $\check{\bm{\mu}}_{i_{m+1}, \ldots, i_N}$ can be estimated via the kernel function $\kappa$, evaluated at the landmark points.
Specifically, motivated by
the representer theorem~\cite{scholkopf2002learning, kimeldorf1971some}, $f_{i_1, \ldots, i_m}$ is
assumed to belong to the linear span of $\Set{ \varphi( \mathbfit{l}_j) }_{j=1}^{ N_{\mathit{l}} }$, \ie,
$f_{i_1,\ldots,i_m} = \sum_{j=1}^{N_{\mathit{l}}} u_{i_1, \ldots, i_m, j} \varphi( \mathbfit{l}_j)$, for
some learnable parameters $\Set{ u_{i_1, \ldots, i_m, j} } \subset \Real$. Similarly,
$\varphi(\check{\bm{\mu}}_{i_{m+1},\ldots,i_N}) = \sum_{j^{\prime} = 1}^{N_{\mathit{l}}} v_{j^{\prime},
  i_{m+1}, \ldots, i_N} \varphi( \mathbfit{l}_{j^{\prime}} )$, for some also learnable parameters $\Set{
  v_{j^{\prime}, i_{m+1}, \ldots, i_N} } \subset \Real$, so that the linear property of the inner product
in \eqref{xij.reproducing.property} yields the kernel-based regression $\vectcal{Y} (i_1, i_2, \ldots,
i_N) \approx \sum_{j, j^{\prime}} u_{i_1, \ldots, i_m, j}\, v_{j^{\prime}, i_{m+1}, \ldots, i_N} \kappa(
\mathbfit{l}_{j}, \mathbfit{l}_{j^{\prime}})$. 
Towards concise data modeling, define tensors $\vectcal{U}
\in \Real^{I_1\times \cdots \times I_m \times N_{\mathit{l}}}$ and $\vectcal{V} \in \Real^{N_{\mathit{l}}
  \times I_{m+1}\times \cdots \times I_N}$, with $\vectcal{U}(i_1, \ldots, i_m, j) \coloneqq u_{i_1,
  \ldots, i_m, j}$ and $\vectcal{V}(j, i_{m+1}, \ldots, i_N) \coloneqq v_{j, i_{m+1}, \ldots, i_N}$,
$j\in \llbracket 1, N_{\mathit{l}} \rrbracket$.  As a result, it can be verified
by~\cref{def:generalcontraction} that the original tensor $\vectcal{Y}$ is approximated as
\begin{align*}
  \vectcal{Y} \approx \vectcal{U} \times^1 \vect{K} \times^1 \vectcal{V} \,,
\end{align*}
where $\vect{K}$ is the kernel matrix appearing in \cref{sec:rkhs}. To enforce low-rank structure and
lend greater flexibility to the previous model, several extensions are introduced as modeling assumptions
below.

\begin{assumptions}\label{modeling.asss}\mbox{}
  \begin{assslist}
  \item\label{assume:multikernel} The data tensor is approximated via the following multi-kernel model
    \begin{align}
      \vectcal{Y} \approx \sum_{\nu=1}^{N_K} \underbrace{ \left( \odot_{p=1}^{P} \vectcal{U}_{\nu,p} \right)
      }_{\vectcal{U}_{\nu}} \times^1\, \vect{K}_{\nu}\, \times^1 \underbrace{ \left( \odot_{q=1}^{Q}
        \vectcal{V}_{\nu,q} \right) }_{\vectcal{V}_{\nu}} \label{eq:model.general} \,,
    \end{align}
    where the kernel matrices $\Set{ \vect{K}_{\nu}\in \Real^{N_\mathit{l} \times N_\mathit{l}} \given
      \nu \in \llbracket 1, N_K \rrbracket }$ are defined as follows,
    \begin{align}
      \vect{K}_{\nu}(j,j') \coloneqq \exp [\, -\tfrac{1}{2} (\mathbfit{l}_j^{(\nu)} -
        \mathbfit{l}_{j'}^{(\nu)})^\intercal (\vect{C}_{\nu})^{-1} (\mathbfit{l}_j^{(\nu)} -
        \mathbfit{l}_{j'}^{(\nu)}) \,] \,, \label{eq:aniso.gaussian}
    \end{align}
    the covariance matrices $\Set{ \vect{C}_{ \nu } \given \nu \in \llbracket 1, N_K \rrbracket }$, which
    are \textit{also to be learned,} belong to the Riemannian manifold of positive definite matrices
    $\PD^{D_l}$ introduced in~\cref{example:pd}, and $\Set{ \mathbfit{l}_j^{(\nu)} \in \Real^{D_l} \given
      j\in \llbracket 1, N_\mathit{l} \rrbracket, \nu \in \llbracket 1, N_K \rrbracket}$ are landmark
    points generated according to~\cref{sec:landmark}.

  \item\label{assume:HO} Tensors $\vectcal{U}_{\nu}$ and $\vectcal{V}_{\nu}$ are overparameterized via
    the Hadamard product as $\vectcal{U}_{\nu} \coloneqq \odot_{p=1}^{P} \vectcal{U}_{\nu,p}$ and
    $\vectcal{V}_{\nu} \coloneqq \odot_{q=1}^{Q} \vectcal{V}_{\nu,q}$ by the learnable tensors $\Set{
      \vectcal{U}_{\nu,p} }_{\nu,p} \subset \Real^{ I_1\times \cdots \times I_m \times N_{\mathit{l}} }$
    and $\Set{ \vectcal{V}_{\nu,q} }_{\nu,q} \subset \Real^{N_{\mathit{l}} \times I_{m+1} \times \cdots
      \times I_N}$.

  \item\label{assume:manifolds} For user-defined ranks $\vect{r}_1, \vect{r}_2$, tensors $\Set{
    \vectcal{U}_{\nu, p} \in \Real^{ I_1\times \cdots \times I_m \times N_{\mathit{l}}} \given \nu \in
      \llbracket 1, N_K \rrbracket, p\in \llbracket 1, P \rrbracket }$ belong to the Riemannian manifold
    $\mathcal{M}_{\vect{r}_1}$ of rank $\vect{r}_1$, and $\Set{ \vectcal{V}_{\nu, q} \in
      \Real^{N_{\mathit{l}} \times I_{m+1} \times \cdots \times I_N} \given \nu \in \llbracket 1, N_K
      \rrbracket, p\in \llbracket 1, P \rrbracket }$ belong to the Riemannian manifold
    $\mathcal{M}_{\vect{r}_2}$ of rank $\vect{r}_2$---see~\cref{example:TTrank}.
  \end{assslist}
\end{assumptions}

\cref{modeling.asss} significantly extend earlier works~\cite{multilkrim, nguyen2025imputation,
  nguyen:apsipa25} from two-way to multi-way data. While~\cite{multilkrim, nguyen2025imputation} adopt an
implicit manifold-learning perspective, in which the underlying manifold remains unspecified, the present
framework \textit{explicitly}\/ constrains the solution to Riemannian manifolds of fixed-rank
tensors.
Furthermore, the implicit manifold assumption in~\cite{multilkrim, nguyen2025imputation} induces a composite-convex subproblem (involving nonsmooth $\ell_1$-norm regularization and affine constraints that needs to be solved iteratively), which can be computationally heavy.
\cref{assume:multikernel} employs multiple kernels that can capture correlations within
different sets of landmark points. Multiple kernel functions $\kappa_{\nu}$, corresponding to different
RKHSs $\mathscr{H}_{\nu}$, diversify the feature spaces for nonlinear regression. Moreover, by the
addition property of TT tensors in \cref{fact.add}, when $N_K>1$, it is possible to set the ranks of
$\vectcal{U}_{\nu,p}$ and $\vectcal{V}_{\nu,q}$ smaller than those required when $N_K=1$. This can lead
to more efficient computation and narrower grid-search ranges for the ranks, since $\vectcal{U}_{\nu,p}$
and $\vectcal{V}_{\nu,q}$ may reside on manifolds of smaller ranks when $N_K>1$.

By treating $\vect{C}_{\nu}$ as a learnable parameter on the manifold of positive-definite matrices, the
manual tuning of kernel hyperparameters---\eg, the bandwidth of Gaussian kernels, as required
in~\cite{multilkrim, nguyen2025imputation, nguyen:apsipa25} and the conference version of this
paper~\cite{nguyen:icassp2026}---can be sidestepped. A popular alternative to Riemannian manifolds is to
factorize the covariance matrix via its Cholesky decomposition $\vect{C}_{\nu} = \vect{L}_{\nu}
\vect{L}_{\nu}^\intercal$ (with $\vect{L}_{\nu}$ lower triangular) and learn the entries of
$\vect{L}_{\nu}$ in a standard Euclidean space~\cite{pinheiro1996unconstrained}. Nevertheless, to ensure
positive definiteness, the diagonal entries of $\vect{L}_{\nu}$ are typically re-parameterized as
$\exp(\alpha_i)$ for learnable parameters $\alpha_i$. When composed with the Gaussian kernel, this
re-parameterization yields a double-exponential expression that can cause numerical instabilities such as
exploding or vanishing gradients. In contrast, by directly learning $\vect{C}_{\nu}$ on the Riemannian
manifold of symmetric positive-definite matrices (see~\cref{example:pd}), \krettah naturally respects the
geometric constraints of the search space, avoids intricate re-parameterizations, and integrates
seamlessly with the Riemannian optimization already performed on the TT tensors.

\cref{assume:HO} is motivated by recent research~\cite{hoff2017lasso, li2023tail, ziyin2023spred,
  kolb2024smoothing}, which, perhaps contrary to intuition, demonstrates that the overparameterization
$\vectcal{U} = \odot_{p=1}^P \vectcal{U}_p$, when combined with \textit{smooth}\/ regularization of the
associated inverse problem via the Frobenius norms of the Hadamard factors, $\sum_{p=1}^P
\norm{\vectcal{U}_p}_{\text{F}}^2$ (as in~\eqref{eq:inv.problem.general}), promotes sparsity in
$\vectcal{U}$ (likewise in $\vectcal{V}$). Indeed, this approach effectively induces the non-convex and
non-smooth quasi-norm regularizer $\norm{\vectcal{U}}_{2/P}^{2/P}$ for $P > 2$, which yields sparser and
less biased solutions than classical $\ell_1$-norm methods~\cite{kolb2024smoothing}. This strategy,
namely introducing additional parameters to promote sparsity, can also improve accuracy through
increasing the degrees of freedom and hence the representation capacity of the
model~\cite{kolb2024smoothing}.  Additionally, based on the property in \cref{fact.hadamard}, when $P>1$
(or $Q>1$), the ranks of $\vectcal{U}_p$ (or $\vectcal{V}_q$) can be set smaller than those when $P=1$
(or $Q=1$). Again, this may lead to cheaper computational costs and rank-search.

\cref{assume:manifolds} enhances dimensionality reduction by constraining the coefficient tensors
$\Set{\vectcal{U}_{\nu,p}}_{\nu,p}$ and $\Set{\vectcal{V}_{\nu,q}}_{\nu,q}$ to reside in the low-dimensional
manifolds $\mathcal{M}_{\vect{r}_1}$ and $\mathcal{M}_{\vect{r}_2}$, respectively. These manifolds are
chosen to utilize the rich geometric structure and algorithmic benefits of Riemannian
geometry~\cite{RobbinSalamon:22, absil2008manifold}. Relying on manifolds of fixed rank structure for
coefficient tensors rather than the data tensor appears to be novel in the literature on kernel-based
tensor regression and functional approximation.

\subsection{Inverse problem and its algorithmic solution}\label{sec:inv.problem}

The previous discussion motivates the following inverse problem:
\begin{subequations}\label{eq:inv.problem.general}
  \begin{alignat}{3}
    \min_{ \vectgr{\Theta} }\
    && \mathcal{L} (\vectgr{\Theta}) & {} \coloneqq {} && \underbrace{\tfrac{1}{2} \norm{
        \samp(\vectcal{Y}) - \samp (\vectcal{X})
      }^2_{\textnormal{F}}}_{F(\vectgr{\Theta})} + \mathcal{R}(\vectgr{\Theta}) \notag \\
    &&&&& + \underbrace{\frac{\lambda}{2} \sum\nolimits_{\nu=1}^{N_K} \left( \sum\nolimits_{p=1}^P
      \norm{\vectcal{U}_{\nu,p}}_{\textnormal{F}}^2 +
      \sum\nolimits_{q=1}^Q
      \norm{\vectcal{V}_{\nu,q}}_{\textnormal{F}}^2 \right)}_{H(\vectgr{\Theta})}
    \,, \label{eq:inv.problem.loss} \\
    \text{subject to} && \vectcal{X} & \coloneqq && \sum\nolimits_{\nu=1}^{N_K} \left( \odot_{p=1}^P
    \vectcal{U}_{\nu,p} \right) \times^1\, \vect{K}_{\nu} \, \times^1 \left( \odot_{q=1}^Q \vectcal{V}_{\nu,q}
    \right) \,, \label{eq:inv.problem.factorize} \\
    && \vectgr{\Theta} & \coloneqq &&
    \left( \Set{\vectcal{U}_{\nu,p}}, \Set{\vectcal{V}_{\nu,q}}, \Set{\vect{C}_{\nu}} \right) \in \mathfrak{M}
    \,, \label{eq:Theta} \\
    && \mathfrak{M} & \coloneqq && \mathcal{M}_{\vect{r}_1}^{N_KP} \times \mathcal{M}_{\vect{r}_2}^{N_KQ} \times \PD^{N_KD} \,. \label{eq:cartesian.manifold}
  \end{alignat}
\end{subequations}
The data-fidelity term $F(\vectgr{\Theta})$ enforces learning of parameters $\vectgr{\Theta}$ such that
model~\eqref{eq:inv.problem.factorize} approximates the observed data $\samp(\vectcal{Y})$. The
regularization term $H(\vectgr{\Theta})$, arising from the Hadamard overparameterization
in~\cref{assume:HO}, promotes sparsity in the coefficient tensors $\vectcal{U}_{\nu}$ and
$\vectcal{V}_{\nu}$ of kernel regression. The hyperparameter $\lambda \in \RealPP$ controls the sparsity
of $\vectcal{U}_{\nu}$ and $\vectcal{V}_{\nu}$ when $P, Q > 1$.  The smooth regularizer
$\mathcal{R}(\vectgr{\Theta})$ imposes prior knowledge specific to the application at hand, as
in~\cref{sec:tests.flows}, but can also be set to zero, as in~\cref{sec:tests.fmri}.

Owing to the modeling assumptions, the objective in~\eqref{eq:inv.problem.loss} is smooth and the
Cartesian-product manifold $\mathfrak{M}$ in~\eqref{eq:cartesian.manifold} is itself a Riemannian
manifold, thereby enabling the powerful toolbox of Riemannian optimization~\cite{absil2008manifold}. The
procedure summarized in~\cref{alg:krettah}, and schematically depicted in~\cref{fig:rgd.step},
addresses~\eqref{eq:inv.problem.general} via a Riemannian gradient-descent (R-GD) method equipped with
line search for accelerated convergence and theoretical convergence
guarantees~\cite{absil2008manifold}. All entries of $\vectgr{\Theta} \in \mathfrak{M}$
in~\eqref{eq:Theta} are updated concurrently at each R-GD step, so that parallel implementations can
further boost computational efficiency.

Due to the Cartesian product of Riemannian manifolds, the Riemannian gradient $\grad \mathcal{L}
({\vectgr{\Theta}}^{(n)})$ in Line~\ref{alg.step:grad} of \cref{alg:krettah} consists of the partial
Riemannian gradients
\begin{align*}
  \grad \mathcal{L} ({\vectgr{\Theta}}^{(n)}) = \left( \ldots, \grad_{\vectcal{U}_{\nu,p}} \mathcal{L},
\ldots, \grad_{\vectcal{V}_{\nu,q}} \mathcal{L}, \ldots, \grad_{\vect{C}_{\nu}} \mathcal{L}, \ldots \right)
\left({\vectgr{\Theta}}^{(n)} \right) \,,
\end{align*}
computed according to~\cref{prop.gradients}. Additionally, the retraction in
Line~\ref{alg.step:retraction} of \cref{alg:krettah} is computed by TT-SVD---see \cref{alg:ttsvd}---for
the coefficient tensors, and by~\eqref{exp.AffI} for the covariance matrices.
\begin{align*}
  R_{{\vectgr{\Theta}}^{(n)}} = \left( \ldots, R_{{\vectcal{U}}_{\nu,p}^{(n)}}, \ldots,
  R_{{\vectcal{V}}_{\nu,q}^{(n)}}, \ldots, R_{{\vect{C}}_{\nu}^{(n)}}, \ldots \right) \,.
\end{align*}
Lines~\ref{alg.step:linesearch} and~\ref{alg.step:retraction} involve constants $\alpha \in \RealPP$,
$\beta \in (0,1)$, and $\gamma \in (0,1)$. In practice, one can simply choose $\alpha = 1, \beta = 0.5$,
and $\gamma = 10^{-4}$~\cite{nocedal2006numerical}. Provided that the step sizes are not too large,
TT-SVD is a retraction that maps a point from the tangent space back to the manifold. While the precise
conditions on the step sizes guaranteeing that the retraction indeed maps a point back to the manifold
remain an open theoretical question (see also the discussion below~\eqref{eq:retract.fixedrank}),
evidence suggests that step sizes found by line search are sufficiently small for the retraction to be
valid~\cite{vandereycken2013low, steinlechner2016riemannian}; see also~\cref{sec:performance}.

\begin{algorithm}[!t]
  \caption{Solving~\eqref{eq:inv.problem.general} via Riemannian gradient descent}\label{alg:krettah}
  \begin{algorithmic}[1]
    \setlength{\abovedisplayskip}{2pt}%
    \setlength{\belowdisplayskip}{2pt}%

    \ENSURE Sequence of iterates $\{{\vectgr{\Theta}}^{(n)}\}_{n\in\IntegerPP,n\leq N_{\text{iter}}}$.

    \STATE Initialize ${\vectgr{\Theta}}^{(0)} \in \mathfrak{M}$, then compute ${\vectcal{X}}^{(n)}$.

    \FOR{$n = 1$ \TO $N_{\text{iter}}$}
      \STATE Available is ${\vectgr{\Theta}}^{(n)} = (\ldots, {\vectcal{U}}_{\nu,p}^{(n)}, \ldots,
             {\vectcal{V}}_{\nu,q}^{(n)}, \ldots, {\vect{C}}_{\nu}^{(n)}, \ldots )$.

      \STATE\label{alg.step:grad} Compute the Riemannian gradient $\bm{\xi}^{(n)} \coloneqq \grad
      \mathcal{L} ({\vectgr{\Theta}}^{(n)})$.

      \STATE Set the descent direction as $\bm{\eta}^{(n)} = -\bm{\xi}^{(n)}$.

      \STATE\label{alg.step:linesearch} (Backtracking line search) Find the smallest integer $t$ s.t.\
      \begin{align*}
          \mathcal{L} ({\vectgr{\Theta}}^{(n)}) - \mathcal{L} (R_{{\vectgr{\Theta}}^{(n)}}(\alpha
          \beta^t \bm{\eta}^{(n)})) \geq -\gamma \innerp{\bm{\xi}^{(n)}}{\alpha \beta^t \bm{\eta}^{(n)}} \,.
      \end{align*}

      \STATE\label{alg.step:retraction} Update by retraction ${\vectgr{\Theta}}^{(n+1)} =
      R_{{\vectgr{\Theta}}^{(n)}}(t_n^{\textnormal{A}} \bm{\eta}^{(n)})$, where the Armijo step-size
      is $t_n^{\textnormal{A}}\coloneqq \alpha \beta^t$.

      \STATE Compute ${\vectcal{X}}^{(n)}$ via~\eqref{eq:Xn}.

      \IF{$\norm{ {\vectcal{X}}^{(n)} - {\vectcal{X}}^{(n-1)} }_{\text{F}} /
        \norm{{\vectcal{X}}^{(n)}}_{\text{F}} < \epsilon$}\label{alg.step:stop}%
        \STATE Terminate
      \ENDIF
    \ENDFOR

  \end{algorithmic}
\end{algorithm}

It is worth noting that \cref{alg:krettah} departs from the block coordinate descent and alternating
minimization approaches commonly adopted in tensor methods, which are often more sensitive to
initialization and more prone to overfactoring due to separate block updates~\cite{chen2020tensor,
  gao2024riemannian}. Instead, the proposed R-GD scheme jointly updates all factors along a smooth
trajectory on the product manifold, reducing the risk of overfitting to mode-specific noise. Line search
(Line~\ref{alg.step:linesearch}) adaptively controls step sizes, ensuring stable descent and mitigating
sensitivity to poor initialization. Interestingly, the proposed R-GD approach also facilitates extension
of \cref{alg:krettah} to stochastic and online learning from streaming data; a promising direction for
future work.

\begin{proposition}[Computing Riemannian gradients of $\mathcal{L}$
    in~\eqref{eq:inv.problem.loss}]\label{prop.gradients} To reduce notational
  clutter, all variables---including $\vectgr{\Theta}$ below---carry an implicit index $(n)$ denoting
  their values at the $n$th iteration, which is omitted throughout.
  Based on the $n$th estimate of $\vectgr{\Theta}$, let
  \begin{subequations}%
    \begin{align}
      \vectcal{X} &\coloneqq \sum_{\nu=1}^{N_K} (\odot_{p=1}^P \vectcal{U}_{\nu,p}) \times^1\, \vect{K}_{\nu} \,
      \times^1 (\odot_{q=1}^Q \vectcal{V}_{\nu,q}) \,, \label{eq:Xn} \\
      \vect{K}_{\nu}(j,j') &\coloneqq \exp( -\tfrac{1}{2} {(\mathbfit{l}_j^{(\nu)} -
        \mathbfit{l}_{j'}^{(\nu)})}^\intercal (\vect{C}_{\nu})^{-1} (\mathbfit{l}_j^{(\nu)} -
      \mathbfit{l}_{j'}^{(\nu)}) ) \,.
    \end{align}
    Let also
    \begin{align}
      \vectgr{\Delta} &\coloneqq \samp(\vectcal{X} - \vectcal{Y}) + \nabla_{\vectcal{X}}
      \mathcal{R}(\vectcal{X}) \,, \\
      \vectcal{U}_{\nu} &\coloneqq \odot_{p=1}^P \vectcal{U}_{\nu,p} \,, \\
      \vectcal{U}_{\nu,\neq p} &\coloneqq \odot_{i=1,i\neq p}^P \vectcal{U}_{\nu,i} \,, \\
      \vectcal{V}_{\nu} &\coloneqq \odot_{q=1}^Q \vectcal{V}_{\nu,q} \,, \\
      \vectcal{V}_{\nu,\neq q} &\coloneqq \odot_{j=1,j\neq q}^Q \vectcal{V}_{\nu,j} \,, \\
      \tilde{\vect{K}}_{\nu}(j,j') &\coloneqq \left( \vectgr{\Delta} \times_{1,\ldots,m}^{1,\ldots,m}
      \vectcal{U}_{\nu}(:,\ldots,:,j) \right) \times_{1,\ldots,N-m}^{2,\ldots,N-m+1}
      \vectcal{V}_{\nu}(j',:,\ldots,:) \,,
      \\
      \tilde{\vectcal{C}}_{\nu}(j,j',:,:) &\coloneqq \frac{1}{2} \vect{K}_{\nu}(j,j') (\vect{C}_{\nu})^{-1}
      (\mathbfit{l}_j^{(\nu)} - \mathbfit{l}_{j'}^{(\nu)}) {(\mathbfit{l}_j^{(\nu)} -
        \mathbfit{l}_{j'}^{(\nu)})}^\intercal (\vect{C}_{\nu})^{-1} \,.
    \end{align}
    \begin{enumerate}
    \item The Euclidean gradient w.r.t.\ $\vect{C}_{\nu}$ is
      \begin{align}
        \nabla_{\vect{C}_{\nu}} \mathcal{L} (\vectgr{\Theta}) = \tilde{\vect{K}}_{\nu} \times_{1,2}^{1,2}
        \tilde{\vectcal{C}}_{\nu} \,.
      \end{align}
      In the case of the AffI metric, the Riemannian gradient w.r.t.\ $\vect{C}_{\nu}$ is computed
      via~\eqref{eq:pd.rgrad} in~\cref{example:pd}
      \begin{align*}
        \grad_{\vect{C}_{\nu}} \mathcal{L}(\vectgr{\Theta}) = \vect{C}_{\nu} \nabla_{\vect{C}_{\nu}} \mathcal{L}
        (\vectgr{\Theta}) \vect{C}_{\nu} \,.
      \end{align*}
    \item The Euclidean gradients w.r.t.\ $\vectcal{U}_{\nu,p}$ and $\vectcal{V}_{\nu,q}$ are
      \begin{align}
        \nabla_{\vectcal{U}_{\nu,p}} \mathcal{L} (\vectgr{\Theta}) &= \left( \vectgr{\Delta}
        \times^{m+1,\ldots,N}_{m+1,\ldots,N} \vectcal{V}_{\nu} \times^1 \vect{K}_{\nu} \right) \odot
        \vectcal{U}_{\nu,\neq p} + \lambda \vectcal{U}_{\nu,p} \,, \\
        \nabla_{\vectcal{V}_{\nu,q}} \mathcal{L} (\vectgr{\Theta}) &= \left[ \vect{K}_{\nu} \times^1 \left(
          \vectcal{U}_{\nu} \times^{1,\ldots,m}_{1,\ldots,m} \vectgr{\Delta} \right) \right] \odot
        \vectcal{V}_{\nu,\neq q} + \lambda \vectcal{V}_{\nu,q} \,.
      \end{align}
      The Riemannian gradients are computed by the orthogonal projections of the Euclidean gradients onto
      the respective tangent spaces via~\cref{fact:mr.tangentspace.proj} in~\cref{appendix:manifold}:
      \begin{align}
        \grad_{\vectcal{U}_{\nu,p}} \mathcal{L}(\vectgr{\Theta}) &= P_{T_{\vectcal{U}_{\nu,p}}
          \mathcal{M}_{\vect{r}_1}} \left( \nabla_{\vectcal{U}_{\nu,p}} \mathcal{L} (\vectgr{\Theta}) \right)
        \,, \\
        \grad_{\vectcal{V}_{\nu,q}} \mathcal{L}(\vectgr{\Theta}) &= P_{T_{\vectcal{V}_{\nu,q}}
          \mathcal{M}_{\vect{r}_2}} \left( \nabla_{\vectcal{V}_{\nu,q}} \mathcal{L} (\vectgr{\Theta}) \right)
        \,.
      \end{align}
    \end{enumerate}
  \end{subequations}
\end{proposition}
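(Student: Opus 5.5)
The proof is a chain-rule computation: first the Euclidean gradients of $\mathcal{L}$ on the ambient space $\Real^{I_1\times\cdots\times I_m\times N_{\mathit{l}}}\times\Real^{N_{\mathit{l}}\times I_{m+1}\times\cdots\times I_N}\times\Real^{D_l\times D_l}$, then the conversion to Riemannian gradients. The conversion uses the known geometry: $\mathcal{M}_{\vect{r}_1},\mathcal{M}_{\vect{r}_2}$ carry the induced Euclidean metric, so the Riemannian gradient is the orthogonal tangent-space projection (\cref{fact:mr.tangentspace.proj}). On $\PD^{D_l}$ with the affine-invariant metric, \eqref{eq:pd.rgrad} gives $\vect{C}\nabla\mathcal{L}\vect{C}$. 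The Riemannian gradient on the product manifold $\mathfrak{M}$ is the tuple of partial Riemannian gradients, because the product metric is the sum of the component metrics. So all the work is in the Euclidean gradients.

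\textbf{Differential of $\vectcal{X}$.} I would write the directional derivative of $\mathcal{L}$ as $D\mathcal{L}[\cdot]=\innerp{\vectgr{\Delta}}{D\vectcal{X}[\cdot]}+DH[\cdot]$. This holds because $\samp$ is a self-adjoint idempotent, so $DF = \innerp{\samp(\vectcal{X}-\vectcal{Y})}{D\vectcal{X}}$. For $\mathcal{R}$, I read the statement as treating it as a function of $\vectcal{X}$ and apply the chain rule through $\vectcal{X}$. Clearly $\nabla_{\vectcal{U}_{\nu,p}}H=\lambda\vectcal{U}_{\nu,p}$, and likewise for $\vectcal{V}_{\nu,q}$.

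\textbf{Gradient w.r.t.\ $\vectcal{U}_{\nu,p}$.} Perturbing $\vectcal{U}_{\nu,p}$ by $\vectcal{E}$ changes $\vectcal{U}_\nu$ by $\vectcal{E}\odot\vectcal{U}_{\nu,\neq p}$, by multilinearity of $\odot$. Hence
\[
D\vectcal{X}=(\vectcal{E}\odot\vectcal{U}_{\nu,\neq p})\times^1\vect{K}_\nu\times^1\vectcal{V}_\nu .
\]
The step is to move operators across the inner product, using adjointness of contraction, i.e.\ $\innerp{\vectcal{A}\times^1\vect{B}}{\vectgr{\Delta}}=\innerp{\vectcal{A}}{\text{(contraction of }\vectgr{\Delta}\text{ with }\vect{B})}$, together with $\innerp{\vectcal{E}\odot\vectcal{W}}{\vectcal{Z}}=\innerp{\vectcal{E}}{\vectcal{W}\odot\vectcal{Z}}$. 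Writing everything entrywise as $\sum_{i_1..i_N}\Delta(i)\sum_{j,j'}U(i_{1:m},j)K(j,j')V(j',i_{m+1:N})$ makes this transparent. Contracting $\vectgr{\Delta}$ with $\vectcal{V}_\nu$ over modes $m{+}1,\ldots,N$ gives an $I_1\times\cdots\times I_m\times N_{\mathit{l}}$ tensor indexed by $j'$. Then applying $\times^1\vect{K}_\nu$ (with $\vect{K}_\nu$ symmetric) returns index $j$. This yields the stated formula for $\nabla_{\vectcal{U}_{\nu,p}}\mathcal{L}$. The $\vectcal{V}_{\nu,q}$ case is symmetric.

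\textbf{Gradient w.r.t.\ $\vect{C}_\nu$, the main obstacle.} Here index bookkeeping must match the stated contractions. Entrywise, $\partial\mathcal{L}/\partial\vect{K}_\nu(j,j')=\sum_i\Delta(i)\,U_\nu(i_{1:m},j)\,V_\nu(j',i_{m+1:N})=\tilde{\vect{K}}_\nu(j,j')$. Next, with $\vect{d}=\mathbfit{l}_j^{(\nu)}-\mathbfit{l}_{j'}^{(\nu)}$, I use $D(\vect{C}^{-1})[\vect{E}]=-\vect{C}^{-1}\vect{E}\vect{C}^{-1}$. This gives
\[
DK_\nu(j,j')[\vect{E}]=\tfrac12K_\nu(j,j')\,\vect{d}^\intercal\vect{C}^{-1}\vect{E}\vect{C}^{-1}\vect{d}=\innerp{\tilde{\vectcal{C}}_\nu(j,j',:,:)}{\vect{E}}.
\]
Summing over $(j,j')$ gives $\tilde{\vect{K}}_\nu\times_{1,2}^{1,2}\tilde{\vectcal{C}}_\nu$. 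This is symmetric because each $\tilde{\vectcal{C}}_\nu$ slice is symmetric, so it is a valid gradient in the symmetric-matrix ambient space required by \eqref{eq:pd.rgrad}.
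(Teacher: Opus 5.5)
Your proposal is correct and follows essentially the same chain-rule computation as the paper's proof. The paper's steps are $\partial\mathcal{L}/\partial\vectcal{X} = \vectgr{\Delta}$, then $\partial\mathcal{L}/\partial\vect{K}_{\nu}(j,j') = \tilde{\vect{K}}_{\nu}(j,j')$ and $\partial\vect{K}_{\nu}(j,j')/\partial\vect{C}_{\nu} = \tilde{\vectcal{C}}_{\nu}(j,j',:,:)$ for the covariance, and linearity of the model in each Hadamard factor for $\vectcal{U}_{\nu,p}$ and $\vectcal{V}_{\nu,q}$. You also make explicit two points the paper leaves implicit: symmetry of $\vect{K}_{\nu}$ reconciles the $\times^1\vect{K}_{\nu}$ contraction order with the true index pattern, and the resulting $\vect{C}_{\nu}$-gradient is symmetric, as \eqref{eq:pd.rgrad} requires.
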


  \begin{proof} 
    See~\cref{appendix:gradients}.
  \end{proof}
\subsection{Performance analysis}\label{sec:performance}

By~\cite[Thm. 4.3.1]{absil2008manifold}, every limit point ${\vectgr{\Theta}}^{(*)}$ of the sequence of
iterates $\Set{{\vectgr{\Theta}}^{(n)}}$ in~\cref{alg:krettah} is a critical point of the smooth loss
function $\mathcal{L}$, \ie, $\grad \mathcal{L} ({\vectgr{\Theta}}^{(*)})=\vect{0}$. However, since $\mathcal{M}_{\vect{r}}$ is
not closed---its closure is the set of tensors of ranks bounded by $\vect{r}$---
the limit points of the coefficient tensors' iterates may not reside in the respective manifolds~\cite{vandereycken2013low,steinlechner2016riemannian}.  One
workaround is regularization~\cite{vandereycken2013low,steinlechner2016riemannian}, which penalizes
\begin{align*}
\underbrace{\frac{\lambda}{2} \left( \sum_{i=1}^{m} \norm{\vectcal{U}_{\nu,p}^{\langle i
      \rangle}}_{\textnormal{F}}^2 + \sum_{j=1}^{N-m} \norm{\vectcal{V}_{\nu,q}^{\langle j
      \rangle}}_{\textnormal{F}}^2 \right)}_{\text{Term 1}} + \underbrace{\frac{\lambda}{2} \left(
  \sum_{i=1}^{m} \norm{(\vectcal{U}_{\nu,p}^{\langle i \rangle})^{\dagger}}_{\textnormal{F}}^2 +
  \sum_{j=1}^{N-m} \norm{(\vectcal{V}_{\nu,p}^{\langle j \rangle})^{\dagger}}_{\textnormal{F}}^2
  \right)}_{\text{Term 2}} .
\end{align*}
Note that Term 1 is incidentally included in $\mathcal{L}$ in~\eqref{eq:inv.problem.loss}, but in tandem
with the Hadamard product for the purpose of sparsification.  Meanwhile, Term 2 prevents the singular
values from approaching zeros. With these extra regularization terms, it can be shown that there exists a limit point in the manifold, and \( \lim_{n\to
  \infty} \norm{\grad \mathcal{L}({\vectgr{\Theta}}^{(n)})}_{{\vectgr{\Theta}}^{(n)}} = 0
\)~\cite{vandereycken2013low, steinlechner2016riemannian}.
For a matrix $\vect{X}$ of rank $r$, the gradient of $\vect{X} \mapsto \norm{\vect{X}}_{\textnormal{F}}^2 + \norm{\vect{X}^{\dagger}}_{\textnormal{F}}^2$ is
$2\vect{U}(\vectgr{\Sigma} - \vectgr{\Sigma}^{-3})\vect{V}^\intercal$ where $\vect{X}=\vect{U}\vectgr{\Sigma}\vect{V}^\intercal$ and the diagonal elements of $\vectgr{\Sigma}$ are $\sigma_1 > \sigma_2 > \ldots > \sigma_r > 0$~\cite{vandereycken2013low}.
Note also that $\lambda$ can be chosen
arbitrarily small, \eg, equal to the machine-precision level. In fact, numerical
tests~\cite{vandereycken2013low,steinlechner2016riemannian} suggest that convergence still holds in
practice even when $\lambda = 0$.
Therefore, this paper does not employ Term 2.

Convergence rates can also be obtained for~\cref{alg:krettah}. Specifically, \cref{alg:krettah} can
achieve linear convergence rate w.r.t.\ the value of the loss function, up to a constant that depends on
the eigenvalues of the Riemannian Hessian at the critical points;
see~\cite[Thm. 4.5.6]{absil2008manifold}.

\subsection{Computational complexity}

Assume that $I_1=I_2=\cdots=I_N=I$, and $\vect{r}_1=\vect{r}_2=(1,r,\ldots,r,1)$.  Computing the kernel
matrices costs $O(N_K(D_l^3 + N_\mathit{l}^2 D_l))$, and the computation of the loss over $\Omega$
requires $O(|\Omega|N_K N_l^2)$ operations.  The cost of computing the Euclidean gradients
w.r.t.\ $\vectcal{U}_{\nu,p}$ and $\vectcal{V}_{\nu,q}$ is $O(|\Omega| N_K (P+Q) N_\mathit{l})$, and
$O(N_K (P+Q) N r^2 I )$ operations are required for finding the subsequent Riemannian gradients.  The
step of computing the Riemannian gradients w.r.t.\ the covariance matrices costs $O(|\Omega|N_K
N_\mathit{l}^2 )$.  The retractions of $\vectcal{U}_{\nu,p}$ and $\vectcal{V}_{\nu,q}$ add a
computational overhead of $O(N_K (P+Q) N r^3 I)$, while the retraction of $\vect{C}_{\nu}$ costs $O(N_K
D_l^3)$. In practice, $N_K \leq 10, D_l \leq 10$, $N_\mathit{l} \leq 100$, and $P, Q \leq 2$ can achieve
a good trade-off between reconstruction quality and runtime.

\section{Applications}\label{sec:applications}

\subsection{4D functional MRI reconstruction}\label{sec:tests.fmri}

The proposed method is evaluated with two publicly available 4D-fMRI datasets. In both cases, the data
tensor $\mathcal{Y} \in \mathbb{R}^{I_1 \times I_2 \times I_3 \times I_4}$ has three spatial modes
%(voxel dimensions after preprocessing and normalization)
and one temporal mode.

\textbf{COBRE Dataset} \quad The Center for Biomedical Research Excellence (COBRE) dataset contains
resting-state fMRI scans which were acquired on a 3T scanner using a T2$^*$-weighted gradient-echo EPI
sequence (TR = 2\,s, TE = 29\, ms, flip angle = 75$^\circ$, 32 axial slices, voxel size
3$\times$3$\times$4\,mm$^3$). The preprocessing pipeline follows~\cite{bellec2015impact}, including
slice-timing correction, motion correction, spatial normalization to the Montreal Neurological Institute
(MNI) space, and spatial smoothing using a Gaussian kernel with a Full Width at Half Maximum (FWHM) of 5
mm. After preprocessing, the resulting 4D tensor has dimensions (53, 64, 52, 146).

\textbf{BOLD5000 Dataset}~\cite{chang2019bold5000} \quad The BOLD5000 dataset is a %large-scale, slow
event-related fMRI study where participants viewed 5000 naturalistic images across multiple functional
sessions.  Preprocessing was performed using fMRIprep~\cite{esteban2019fmriprep}, a robust and automated
neuroimaging pipeline. The steps involved motion correction, susceptibility-derived distortion estimation
and correction, and precise co-registration to the individual T1w anatomical templates.  After
preprocessing, the resulting dimensions are (72, 92, 70, 194).

The proposed method \krettah is compared against several state-of-the-art tensor completion methods:
RTTC~\cite{steinlechner2016riemannian}, a fixed-rank manifold optimization method (see
also~\cite{tt-AdaliGroup:21}), VKBTR~\cite{huang2025kernel}, a Bayesian TRD framework, and
NCP~\cite{bazerque2012nonparametric}, a kernel-based (nonparametric) CPD.  Observed entries are uniformly
randomly sampled at sampling ratios $s \in \Set{0.1, 0.2, 0.3, 0.4, 0.5}$.  The reported results are
averages of 10 runs with different sampled entries and initializations.  The methods ran on an 8-core
Intel(R) i7-11700 2.5~GHz CPU with 32~GB RAM.  Note that VKBTR~\cite{huang2025kernel} consumes a large
amount of memory, and the memory can only fit the initial TR-ranks setting of~20.

As a figure of merit, the normalized root mean squared error (NRMSE) is employed, defined as $\text{NRMSE} \coloneqq
{\norm{{\vectcal{X}}^{(*)} - \vectcal{Y}}_\textnormal{F}} / \norm{\vectcal{Y}}_\textnormal{F}$ (the lower the
better), where ${\vectcal{X}}^{(*)}$ is the reconstruction computed from the output of the algorithm.  All methods are finely tuned to reach
their lowest NRMSE.  The reported results are averages of \num{10} runs with different sampled index sets
and initializations.  Navigator data $\Set{\vect{y}_n}_{n=1}^{N_\text{nav}}$ are formed by the columns of
$P_{\Omega}(\vectcal{Y})^{\langle m\rangle}$ for $m \in \Set{1,2,3}$ in order to cover multi-way dependencies.
Landmark points are then selected with the greedy max-min-(Euclidean)-distance strategy~\cite{de2004sparse},
then compressed to vectors in $\Real^{D_l}$ by using locally linear embedding
(LLE)~\cite{roweis2000nonlinear} to capture the low-dimensional geometry as well as to enable the
optimization of $\vect{C}_{\nu}$.  Hyperparameters are selected by grid search: $N_{\mathit{l}}=10l$ for
$l\in \llbracket 5, 15\rrbracket$; $\vect{r}_1=(1,8r_1,8r_1,\ldots, 8r_1,1)$ and
$\vect{r}_2=(1,8r_2,8r_2,\ldots,8r_2,1)$ for $r_1, r_2 \in \llbracket 1,12 \rrbracket$; $P, Q \in
\Set{1,2}$.  The hyperparameter $\lambda$ in~\eqref{eq:inv.problem.loss} can be selected using a
well-known L-curve criterion~\cite{lu2022data} by plotting the trade-off between the data-fidelity loss
$F$ and the regularization term H at different $\lambda$, then locate the $\lambda$ at the corner of the
L-curve; see~\cref{fig:lcurve.12}.

\cref{fig:nrmse.fmri} reports NRMSE of competing methods across different sampling ratios. \krettah
achieves the lowest error across all tested sampling ratios in both datasets. \cref{fig:time.fmri}
compares the computational time of tested methods, where \krettah is second only to
NCP~\cite{bazerque2012nonparametric} in speed.  By optimizing coefficient tensors instead of the whole
data tensor, \krettah runs faster than RTTC~\cite{steinlechner2016riemannian}.
\cref{fig:time.fmri.tuning} plots the computational times which include the hyperparameter-search step of
the ranks and the kernel hyperparameters of each competing method.

\begin{figure}
  \centering
  \subfloat[COBRE \label{fig:plot.cobre}] {\includegraphics[width = .4\columnwidth]{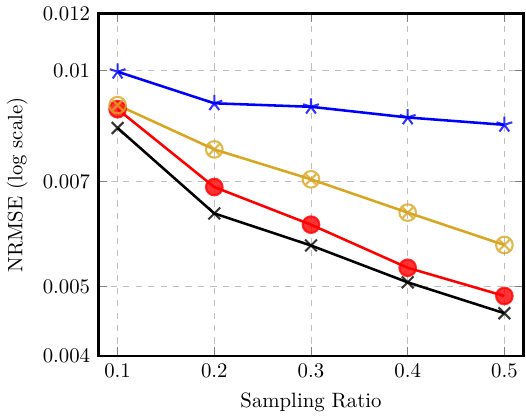}}
  \subfloat[BOLD \label{fig:plot.bold}] {\includegraphics[width = .4\columnwidth]{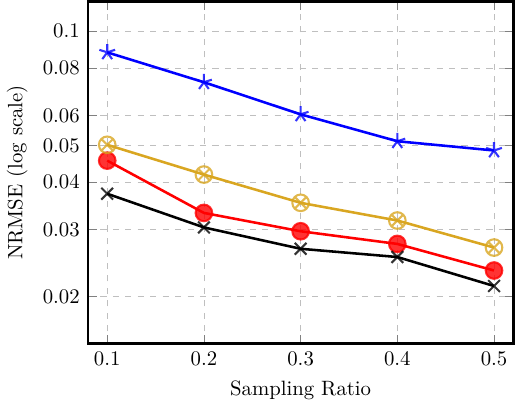}}
  \caption[]{Mean NRMSE value curves ($\downarrow$) vs.\ sampling
    ratios. RTTC~\cite{steinlechner2016riemannian}:~\tikz{ \node[mark size=3pt, red, line width =
        1pt,]{\pgfuseplotmark{*}}; }, VKBTR~\cite{huang2025kernel}:~\tikz{ \node[mark size=3pt, blue,
        line width = 1pt,]{\pgfuseplotmark{star}}; }, NCP~\cite{bazerque2012nonparametric}:~\tikz{
      \node[mark size=3pt, goldenrod, line width = 1pt,]{\pgfuseplotmark{otimes}}; }, \krettah
    $(N_K=1)$:~\tikz{ \node[mark size=3pt, black, line width = 1pt,]{\pgfuseplotmark{x}}; }~.  }
  \label{fig:nrmse.fmri}
\end{figure}

\begin{figure}
  \centering
  \resizebox{.75\linewidth}{!} { \includegraphics{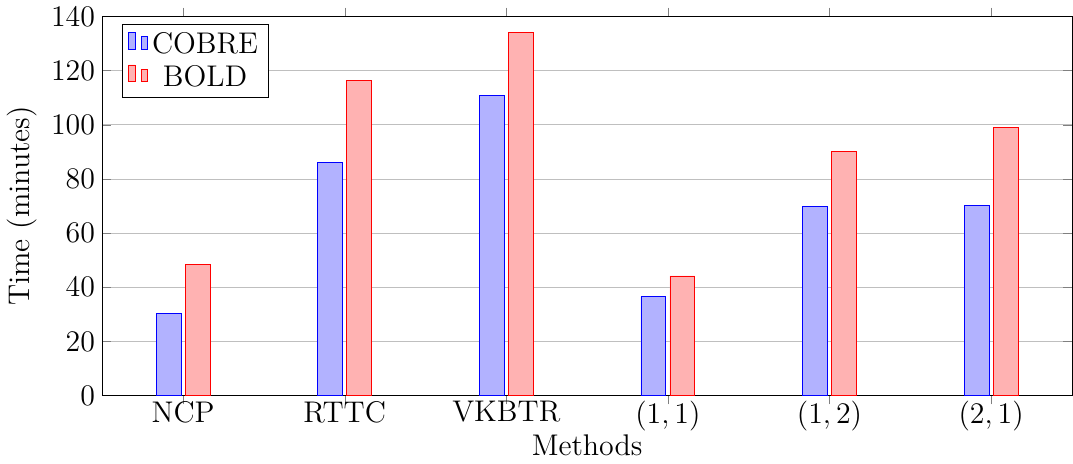} }
  % \vspace{-0.3cm}
  \caption{Average run-time (in minutes) across all sampling ratios (with hyperparameters achieving the
    lowest NRMSE) in the 4D-fMRI application. The horizontal axis labels in brackets denote \krettah at
    different $(P, Q)$. Time is measured until a stopping criterion, similar to the one in
    Line~\ref{alg.step:stop} of~\cref{alg:krettah} with $\epsilon = 10^{-4}$, is satisfied.}
  \label{fig:time.fmri}
\end{figure}

\begin{figure}
  \centering
  \resizebox{.75\linewidth}{!} { \includegraphics{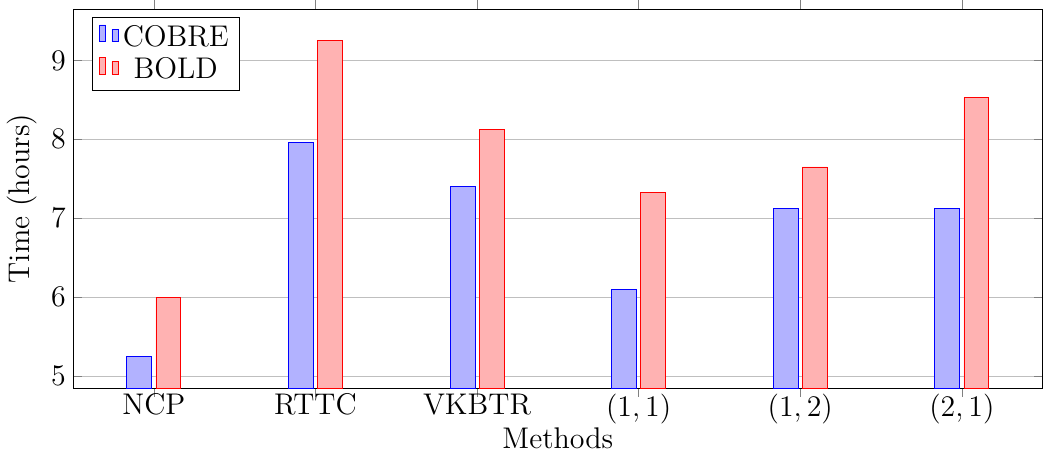} }
  % \vspace{-0.3cm}
  \caption{Average run-time (in hours) across all sampling ratios in the 4D-fMRI application. The
    horizontal axis labels in brackets denote \krettah at different $(P, Q)$. Measured time includes the
    hyperparameter-tuning step.} \label{fig:time.fmri.tuning}
\end{figure}

\subsection{Graph-flow imputation}\label{sec:tests.flows}

A graph $G = ({V}, {E})$ is defined by a set of nodes ${V}$ and a set of edges
${E} \subseteq {V} \times {V}$. Let $\mathcal{T}$ be the set of its triangles,
defined as fully connected subgraphs, each comprising three nodes. Let $N_0 = \lvert {V} \rvert$
be the number of nodes, $N_1 = \lvert{E}\rvert$ the number of edges, and $N_2 = \lvert
\mathcal{T} \rvert$ the number of triangles of $G$~\cite{giblin:10, lim2020hodge, schaub2021signal,
  barbarossa2020topological}.  The incidence matrix $\vect{B}_1 \in \Real^{N_0 \times N_1}$ captures the
node-to-edge adjacencies, while $\vect{B}_2 \in \Real^{N_1 \times N_2}$ encodes the edge-to-triangle
ones. These incidence matrices depend on an arbitrary choice of orientation: the $e$-th edge is oriented as
an ordered pair of nodes $(i,j)$, and the $\tau$-th triangle as $(i,j,w)$. Accordingly, the entries of the
node-to-edge incidence matrix are $\vect{B}_1(i,e) = -\vect{B}_1(j,e) = -1$, with $\vect{B}_1(w,e) = 0$
for all $w \neq i, j$. In the incidence matrix $\vect{B}_2$, the entry $\vect{B}_2(e,\tau)$ is zero
unless $e$ is an edge of $\tau$; it equals~1 if $e$ and $\tau$ share the same orientation and -1
otherwise. An example of a graph $G$ with its incidence matrices is shown in~\cref{fig:sc.example}. Further details are available in~\cite{giblin:10, lim2020hodge, schaub2021signal,
  barbarossa2020topological}.

\begin{figure}
  \centering \resizebox{\linewidth}{!}{\input{./figs+imgs/graph_example.tex}}
  \caption{An example of a graph, $G=({V}, {E})$, and its set of triangles, $\mathcal{T}$.}
  \label{fig:sc.example}
\end{figure}

Time-varying edge-flow signals, such as traffic or communication flows, are arranged in a tensor $\vectcal{Y} \in
\Real^{I_1 \times I_2 \times I_3}$, where $\vectcal{Y}(i_1,i_2,i_3)$ denotes the flow on edge $i_1$ at the $i_2$th time
point of the $i_3$th time interval. Here, $I_1 = N_1$ is the number of edges, $I_2$ the number of time points per
interval (\eg, hours per day), and $I_3$ the number of intervals (\eg, days). Each column of the 1st unfolding
$\vectcal{Y}^{\langle 1 \rangle} \in \Real^{I_1 \times I_2 I_3}$ thus represents a snapshot of flows at a particular
time $t \in \llbracket 1, I_2 I_3 \rrbracket$. Common assumptions for edge flows are that they are typically approximately divergence-free, \ie,
nearly conserved at nodes, which can be expressed as $\norm{ \vect{B}_1 \vectcal{Y}^{\langle 1 \rangle} }_{\mathrm{F}}
\approx 0$, or approximately curl-free, \ie, summing to nearly zero around triangles, that is, $\norm{ \vect{B}_2^{\intercal}
\vectcal{Y}^{\langle 1 \rangle} }_{\mathrm{F}} \approx 0$~\cite{battiston2020networks, barbarossa2020topological,
schaub2021signal}.

Incidence matrices have been combined with vector autoregressive (VAR) modeling, generating the
simplicial VAR (S-VAR) model~\cite{krishnan2024simplicial, money2024evolution}, which approximates an
edge signal by the simplicial shifts of its past signals.  Hodge Laplacians~\cite{lim2020hodge}, defined
from the incidence matrices, have been incorporated into Gaussian processes on
edges~\cite{yang2024hodge}.  Simplicial filters, defined from polynomials of Hodge Laplacians, have also
been incorporated into NNs~\cite{ebli2020simplicial, roddenberry2021principled, yang2022simpnn,
  wu2023simplicial, papamarkou2024position}.

In this context, \krettah is tested on traffic-flow data for the Eastern-Massachusetts (EMA) network
(\num{74} nodes, \num{258} edges, and \num{33} triangles) and the Berlin-Friedrichshain (BF) network
(\num{224} nodes, \num{523} edges, and \num{67} triangles)~\cite{transportation_networks}.  Time-varying
edge-flow signals of $I_1$ edges over $I_2$ time points are generated by the traffic-flow simulator
UXsim~\cite{seo2025uxsim}, which runs $I_3$ times with different initial states and traffic volumes to
generate an $I_1 \times I_2 \times I_3$ tensor $\vectcal{Y}$: $258\times 400\times 7$ for the EMA network
and $523\times 350 \times 8$ for the BF network.

\krettah is compared against the state-of-the-art edge-flow imputation methods
PS~\cite{roddenberry2023signal}, S-VAR~\cite{krishnan2024simplicial, money2024evolution}, and the
NN-based HodgeNet~\cite{rodd2019hodgenet}, which operate on the 1st unfolding of the data tensor. The
network used in this paper is larger than those used in the studies
of~\cite{roddenberry2023signal,krishnan2024simplicial,money2024evolution,rodd2019hodgenet}. Also, because
these methods are not designed for tensor data, they take $\vectcal{Y}^{\langle 1 \rangle}$ as input.
Competing tensor methods are RTTC~\cite{steinlechner2016riemannian}, a fixed-rank manifold
optimization method (see also~\cite{tt-AdaliGroup:21}), STTC~\cite{yu2025robust}, a TRD for traffic-data
imputation, NCP~\cite{bazerque2012nonparametric}, a kernel-based (nonparametric) CPD, and
VKBTR~\cite{huang2025kernel}, a parametric Bayesian TRD-based framework.

NRMSE is again used as the figure of merit. % the CPU settings are like
                                                     % in~\cref{sec:tests.fmri}.
All methods are finely tuned to
reach their lowest NRMSE. The reported results are averages of~10 runs with different sampled index sets and
initializations.
With sampling ratio $s \in \Set{0.1, 0.2, 0.3, 0.4, 0.5}$, signals of $\ceil{I_1 \cdot s}$ edges are randomly sampled
per time instant $t \in \llbracket 1,I_2 I_3 \rrbracket$, where $\ceil{\cdot}$ is the ceiling function.  This sampling pattern
suggests that the number of observations is consistent over time. Navigator data and landmark points are also built in the same manner as in~\cref{sec:tests.fmri}. 
% Several kernel functions $\kappa$ are tested, such as the Gaussian, polynomial, and
% Matern~\cite{williams2006gaussian}. Gaussian kernels are found to produce generally lower NRMSE values.
Hyperparameters are selected by grid search: $N_{\mathit{l}}=10l$ for $l\in \llbracket 5, 15\rrbracket$;
$\vect{r}_1=(1,8r_1,8r_1,\ldots, 8r_1,1)$ and $\vect{r}_2=(1,8r_2,8r_2,\ldots,8r_2,1)$ for $r_1, r_2 \in \llbracket 1,12 \rrbracket$;
$P, Q \in \Set{1,2,3}$.

The inverse problem~\eqref{eq:inv.problem.general} can include smooth regularization functions depending
on domains. In the particular case of edge flows, where the common prior is being approximately
divergence-free or curl-free, $\mathcal{R}(\vectgr{\Theta}) \coloneqq (\lambda_l/2) \norm{\vect{B}_1
  \vectcal{X}^{\langle 1 \rangle}}_{\textnormal{F}}^2 + (\lambda_u/2) \norm{\vect{B}_2^\intercal
  \vectcal{X}^{\langle 1 \rangle}}_{\textnormal{F}}^2$ in~\cref{eq:inv.problem.general}.  It is important
to note that via the numerical tests, it appears that \krettah achieves its lowest NRMSE values even
without such priors, \ie, when $\lambda_l=\lambda_u=0$. Setting $\lambda_l, \lambda_u \geq 10^{-3}$
actually degrades the performance of \krettah.

\begin{figure}
  \centering \subfloat[Eastern-Massachusetts network \label{fig:plot.ema}] {\includegraphics[width =
      .4\columnwidth]{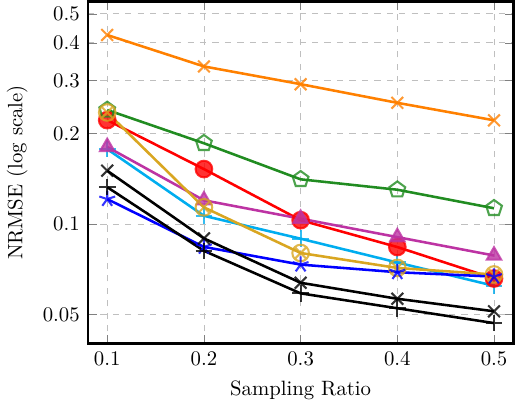}} \subfloat[Berlin-Friedrickshain network \label{fig:plot.bf}]
             {\includegraphics[width = .4\columnwidth]{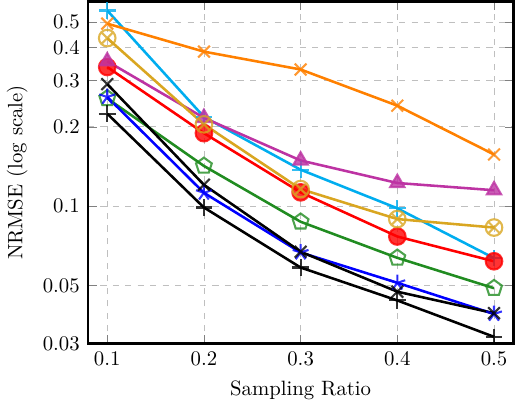}}
             %  \vspace{-0.3cm}
  \caption[]{Mean NRMSE value curves ($\downarrow$) vs.\ sampling
    ratios. HodgeNet~\cite{rodd2019hodgenet}:~\tikz{ \node[mark size=3pt, cyan, line width =
        1pt,]{\pgfuseplotmark{+}}; }, S-VAR~\cite{krishnan2024simplicial, money2024evolution}:~\tikz{
      \node[mark size=3pt, orange, line width = 1pt,]{\pgfuseplotmark{x}}; },
    PS~\cite{roddenberry2023signal}:~\tikz{ \node[mark size=3pt, forestgreen, line width =
        1pt,]{\pgfuseplotmark{pentagon}}; }, RTTC~\cite{steinlechner2016riemannian}:~\tikz{ \node[mark
        size=3pt, red, line width = 1pt,]{\pgfuseplotmark{*}}; }, STTC~\cite{yu2025robust}:~\tikz{
      \node[mark size=3pt, byzantine, line width = 1pt,]{\pgfuseplotmark{triangle*}}; },
    NCP~\cite{bazerque2012nonparametric}:~\tikz{ \node[mark size=3pt, goldenrod, line width =
        1pt,]{\pgfuseplotmark{otimes}}; }, VKBTR~\cite{huang2025kernel}:~\tikz{ \node[mark size=3pt,
        blue, line width = 1pt,]{\pgfuseplotmark{star}}; },
    % $\krettahsigma$ $(N_K=1)$:~\tikz{ \node[mark size=3pt, black,
    %     line width = 1pt,]{\pgfuseplotmark{square}}; },
    \krettah $(N_K=1)$:~\tikz{ \node[mark size=3pt, black,
        line width = 1pt,]{\pgfuseplotmark{x}}; },
    % $\krettahsigma$ $(N_K=6)$:~\tikz{ \node[mark size=3pt, black,
    %     line width = 1pt,]{\pgfuseplotmark{o}}; },
    \krettah $(N_K=6)$:~\tikz{ \node[mark size=3pt, black,
        line width = 1pt,]{\pgfuseplotmark{+}}; }~.
        }
  \label{fig:nrmse}
\end{figure}

\cref{fig:nrmse} reports NRMSE of competing methods across different sampling ratios.
\krettah achieves the lowest errors except in the BF network at $s=0.1$, where it ranks second to
PS~\cite{roddenberry2023signal}. HodgeNet~\cite{rodd2019hodgenet} performs the second-best in the EMA
dataset but degrades sharply in the BF one, while S-VAR~\cite{krishnan2024simplicial,money2024evolution}
exhibits the highest errors across all ratios due to reliance on incomplete past
data. \cref{fig:nrmse.sen} reports the effect of $(P,Q)$ on NRMSE, with $(1,2)$ consistently giving the
lowest NRMSE in both datasets, supporting the use of HP.

\begin{figure}
  \centering \subfloat[Eastern-Massachusetts network \label{fig:plot.ema.sen}]
             {\includegraphics[width = .4\columnwidth]{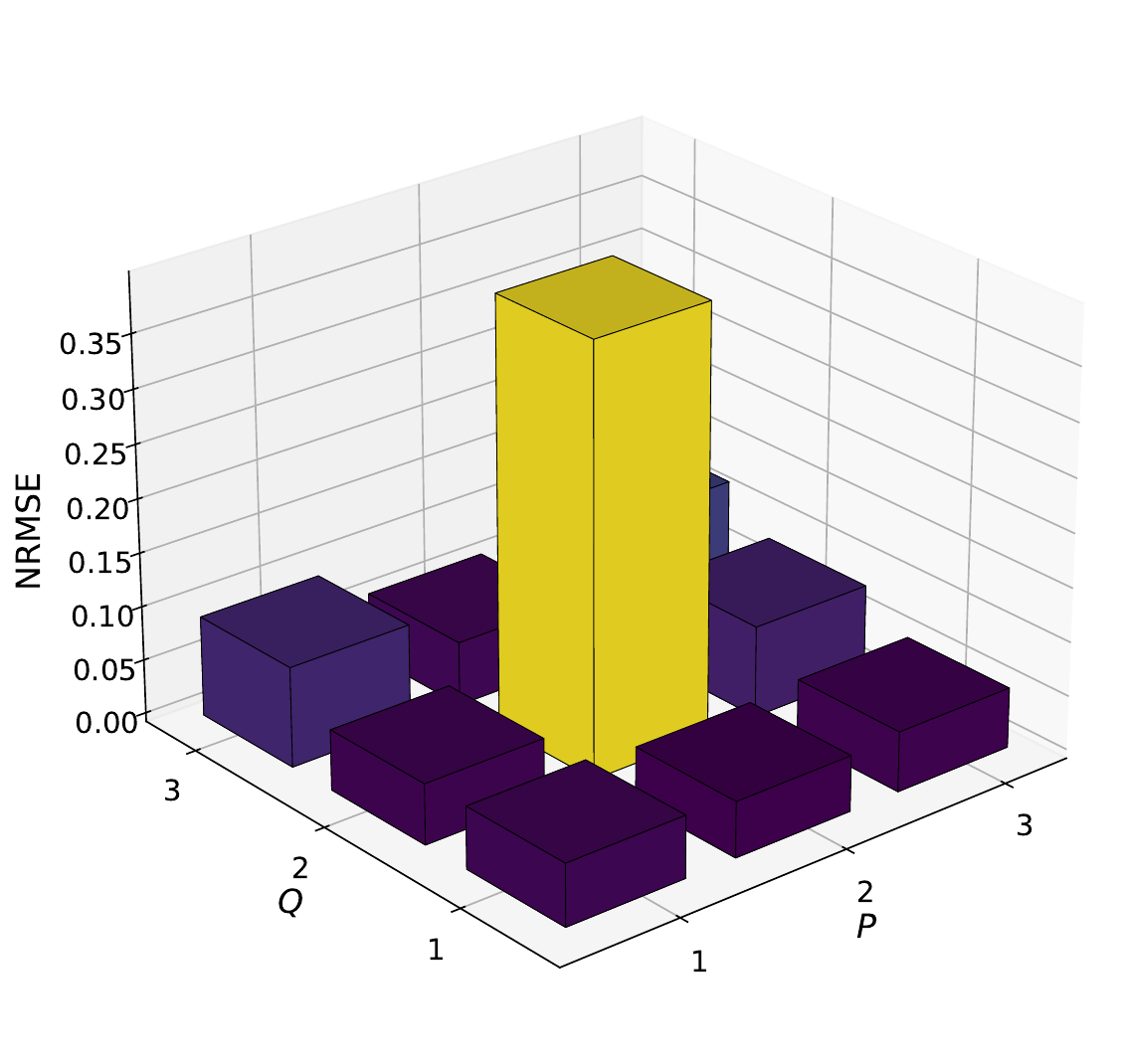}} \subfloat[Berlin-Friedrickshain
               network \label{fig:plot.bf.sen}] {\includegraphics[width = .4\columnwidth]{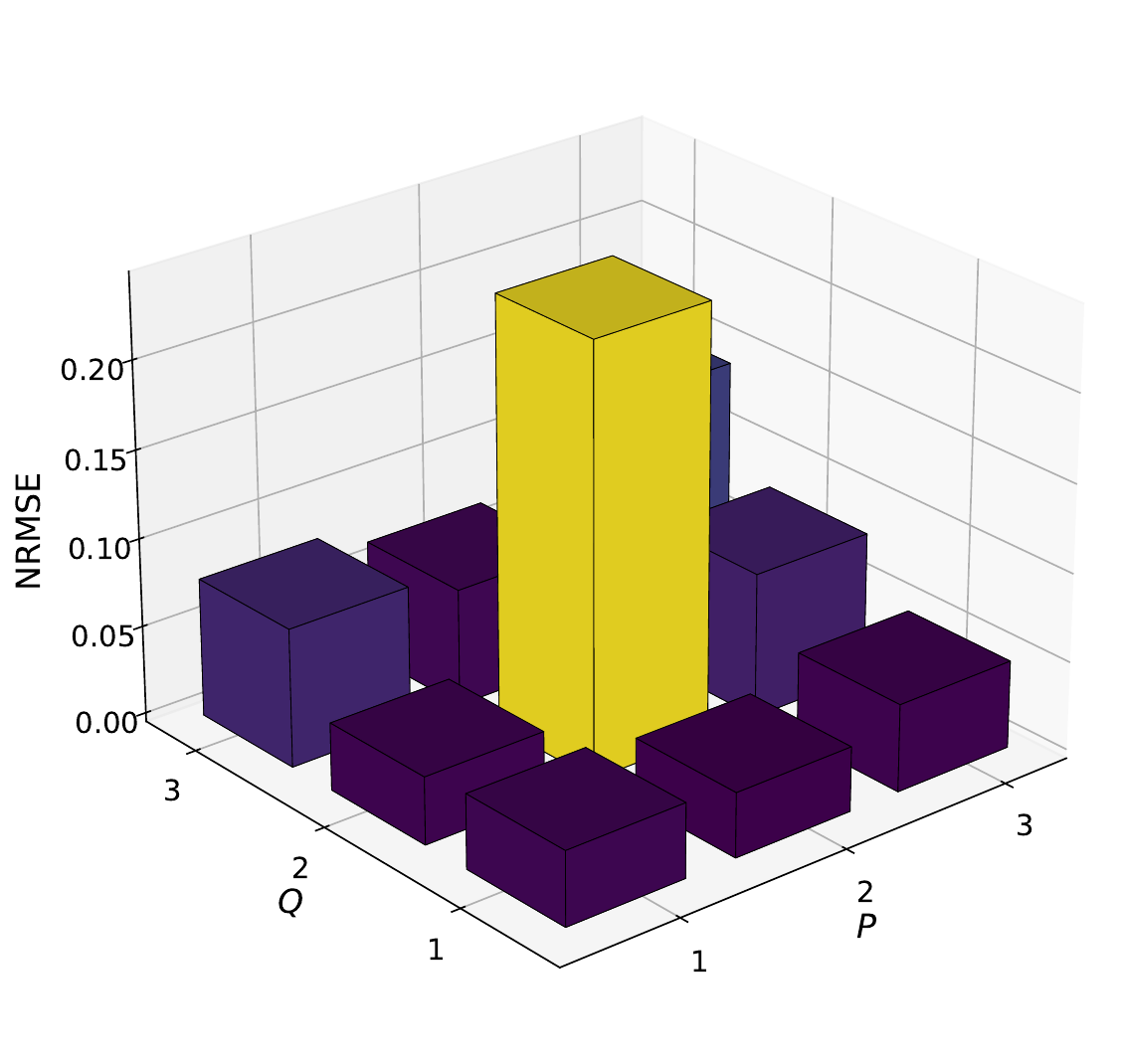}}
                %  \vspace{-0.3cm}
  \caption[]{The impact of different $(P,Q)$ values in \eqref{eq:model.general} ($N_K=1$) on
    performance. NRMSE values are averaged over all sampling ratios.
  }
  \label{fig:nrmse.sen}
\end{figure}

\cref{fig:time.flow} further underscores the efficiency of \krettah: at the lowest-NRMSE setting $(P,Q) =
(1,2)$, the run-time is lower than that of HodgeNet~\cite{rodd2019hodgenet}, STTC~\cite{yu2025robust},
and NCP~\cite{bazerque2012nonparametric}, and comparable to that of
RTTC~\cite{steinlechner2016riemannian}.  Additionally, \cref{fig:time.flow.tuning} plots the
computational times which include the hyperparameter-tuning step of the competing methods, such as the
selection of the ranks and kernel hyperparameters in the TD methods, and the selection of the NN
architecture in HodgeNet.

\begin{figure}
  \centering
  \resizebox{.75\linewidth}{!} { \includegraphics{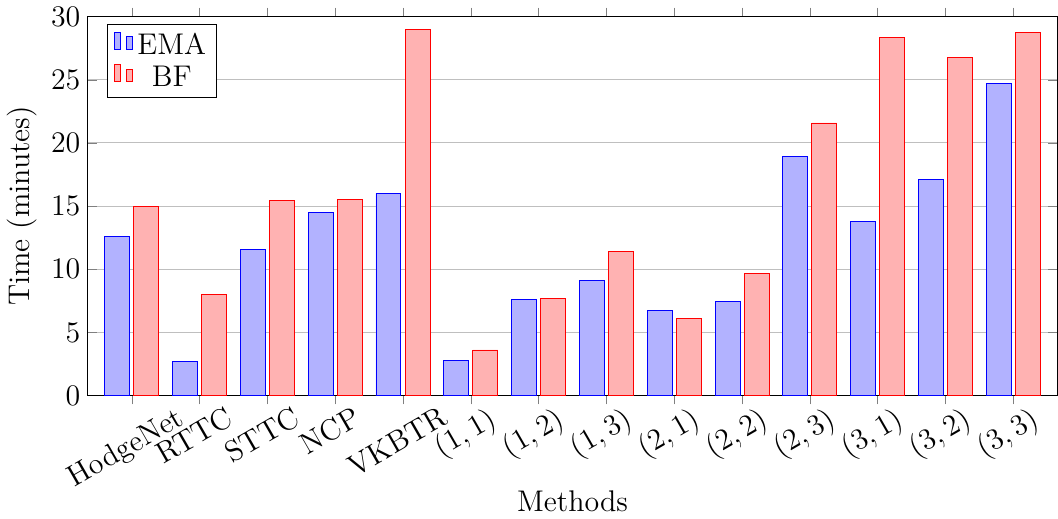} }
  % \vspace{-0.3cm}
  \caption{Average run-time (in minutes) across all sampling ratios (with hyperparameters achieving the
    lowest NRMSE) in the edge-flow estimation application. The horizontal axis labels in brackets are
    \krettah at different $(P, Q)$. Time is measured until a stopping criterion, similar to the one in
    Line~\ref{alg.step:stop} of~\cref{alg:krettah} with $\epsilon = 10^{-4}$, is satisfied. The
    notation $(P, Q)$ refers to \krettah with $P, Q$ corresponding to the Hadamard products
    in~\eqref{eq:model.general}.  Run-times for PS~\cite{roddenberry2023signal} and
    S-VAR~\cite{krishnan2024simplicial,money2024evolution} are not displayed ($<1$ minute).}
  \label{fig:time.flow}
\end{figure}

\begin{figure}
  \centering
  \resizebox{.75\linewidth}{!} { \includegraphics{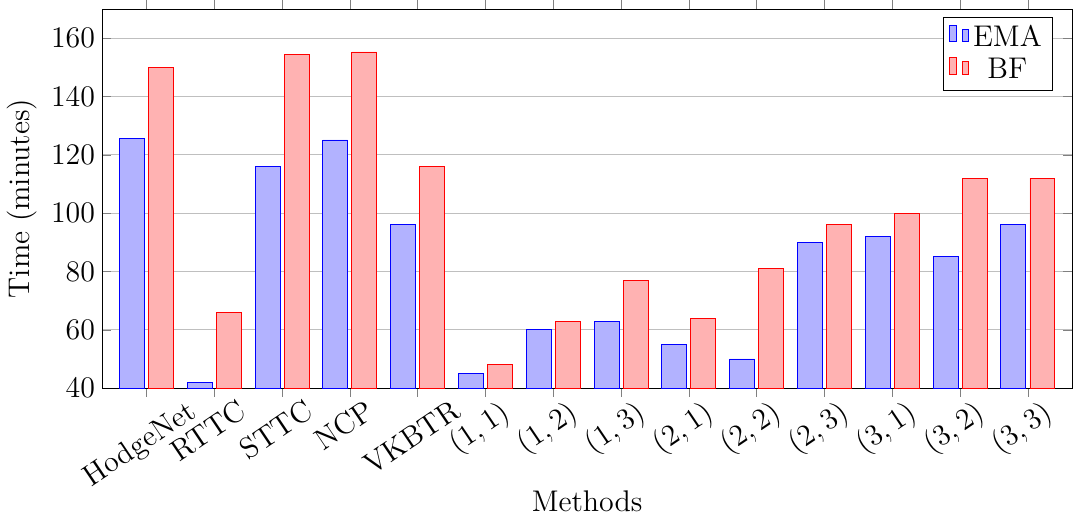} }
  % \vspace{-0.3cm}
  \caption{Average run-time (in minutes) across all sampling ratios in the edge flow imputation task. The
    horizontal axis labels in brackets denote \krettah at different $(P, Q)$. Measured time includes the
    hyperparameter-tuning step.} \label{fig:time.flow.tuning}
\end{figure}

\begin{figure}
  \centering
  \resizebox{.75\linewidth}{!} { \includegraphics{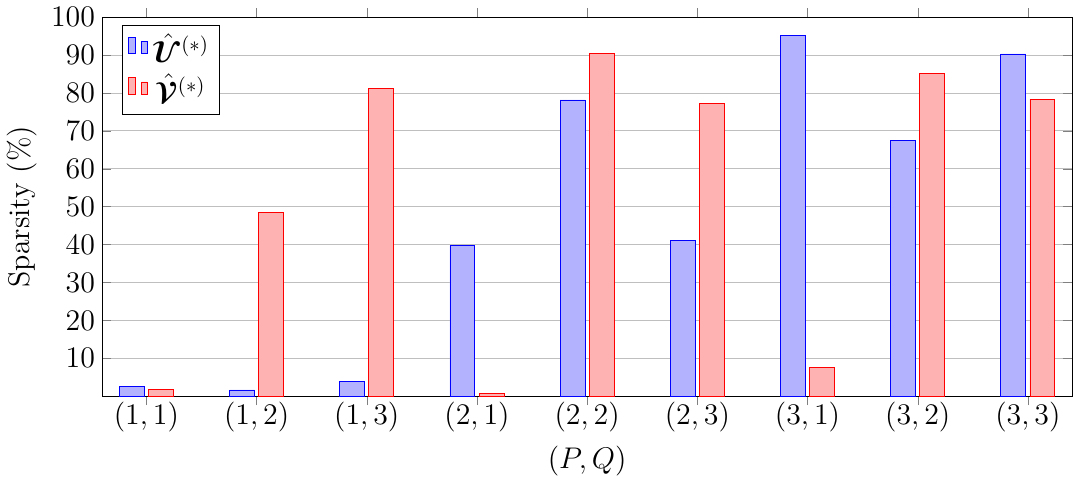} }
  % \vspace{-0.3cm}
  \caption{Average sparsity of the $\vectcal{U}_{\nu}^{(*)}$ and $\vectcal{V}_{\nu}^{(*)}$ tensors
    depending on the number of HP factors $P$ and $Q$ in the EMA dataset, when
    $N_K=1$. $\vectcal{U}_{\nu}^{(*)}$ and $\vectcal{V}_{\nu}^{(*)}$ are calculated from the outputs
    of~\cref{alg:krettah}.  The sparsity is measured as the percentage of the tensor entries with
    absolute value less than $10^{-3}$ after having normalized all entries to a maximum of unity. }
  \label{fig:sparsity}
\end{figure}

\begin{figure}
  \centering \subfloat[Eastern-Massachusetts network \label{fig:plot.ema.sigma}] {\includegraphics[width
      = .4\columnwidth]{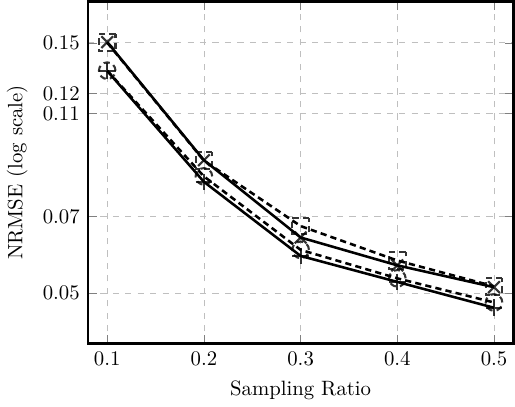}} \subfloat[Berlin-Friedrickshain
    network \label{fig:plot.bf.sigma}] {\includegraphics[width = .4\columnwidth]{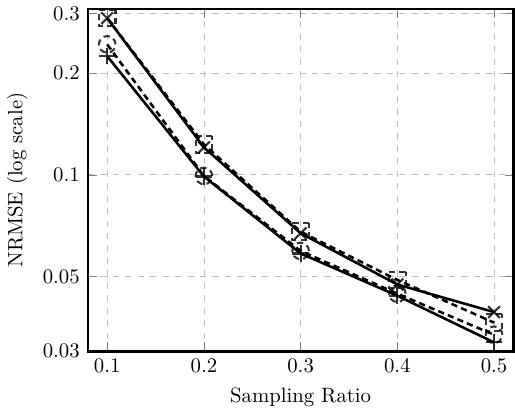}}
  \caption[]{Mean NRMSE value curves ($\downarrow$) vs.\ sampling ratios.  $\krettahsigma$
    $(N_K=1)$:~\tikz{ \node[mark size=3pt, black, line width = 1pt, densely
        dashed]{\pgfuseplotmark{square}}; }, \krettah $(N_K=1)$:~\tikz{ \node[mark size=3pt, black, line
        width = 1pt,]{\pgfuseplotmark{x}}; }, $\krettahsigma$ $(N_K=6)$:~\tikz{ \node[mark size=3pt,
        black, line width = 1pt, densely dashed]{\pgfuseplotmark{o}}; }, \krettah $(N_K=6)$:~\tikz{
      \node[mark size=3pt, black, line width = 1pt,]{\pgfuseplotmark{+}}; }~.  $\krettahsigma$ (dashed
    line) denote the case of \krettah where, for all $\nu \in \llbracket 1,N_K \rrbracket$,
    $\vect{C}_{\nu}=\sigma_z \vect{I}_{D_l}$ is fixed and $\sigma_z$ need to be tuned to achieve to lowest
    NRMSE.  } \label{fig:nrmse.flow.sigma}
\end{figure}

As seen in~\cref{fig:sparsity}, using larger $P$ and $Q$ increases sparsity. Notably, the $(1,2)$ setting
achieves the lowest NRMSE while reducing parameter storage by more than \num{40}\%.
In \cref{fig:nrmse.flow.sigma}, $N_K=6$ performs better than $N_K=1$ for both \krettah and
$\krettahsigma$, suggesting the benefits of using multiple kernels. Moreover, the best ranks found for
$N_K=6$ are indeed smaller than those for $N_K=1$ by three to eight times, advocating the
rank-increase property of addition in \cref{fact.add}.  In another view, denote $\krettahsigma$ the
case where all covariance matrices of the kernels are fixed, \ie, for each $\nu \in \llbracket 1,N_K
\rrbracket$, $\vect{C}_{\nu}=\sigma_z \vect{I}_{D_l}$, where the bandwidths $\sigma_z \in [10^{-2}, 10]$ need
to be carefully chosen to achieve the lowest NRMSE.  The performance of \krettah matches that of
$\krettahsigma$, showing that \krettah manages to overcome the arduous kernel hyperparameter tuning task.
Meanwhile, \cref{fig:ranks.sen} shows the performance at different rank-selection for one dataset from
each application, evaluated at sampling ratio $s=0.5$, $P=Q=1, N_\mathit{l}=100, D_l=5$.

Rank selection is a crucial and challenging aspect of TD methods. While this paper proposes \krettah that
uses grid-search to locate the ranks, it acknowledges that there are techniques to improve, \eg, the
sparsifying effect in Bayesian TD~\cite{li2025bayesian,huang2025kernel} or the classic
MDL~\cite{gong2020mdl}. This will be a future direction for a general problem of unknown and possibly
dynamic ranks.

\begin{figure}
  \centering
  \subfloat[COBRE \label{fig:plot.cobre.ranks}] {\includegraphics[width =
      .4\columnwidth]{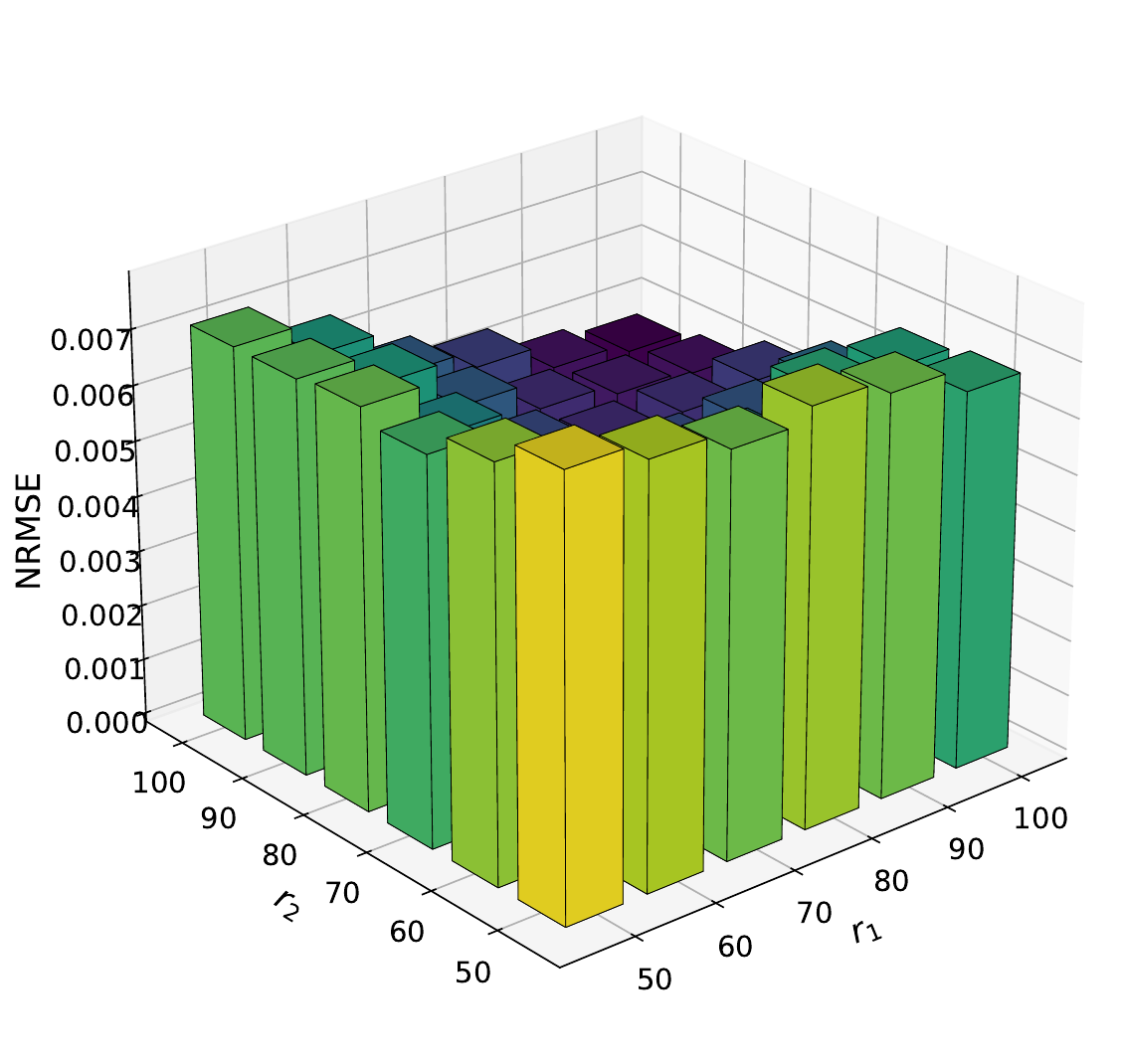}}
  \subfloat[EMA \label{fig:plot.ema.ranks}] {\includegraphics[width = .4\columnwidth]{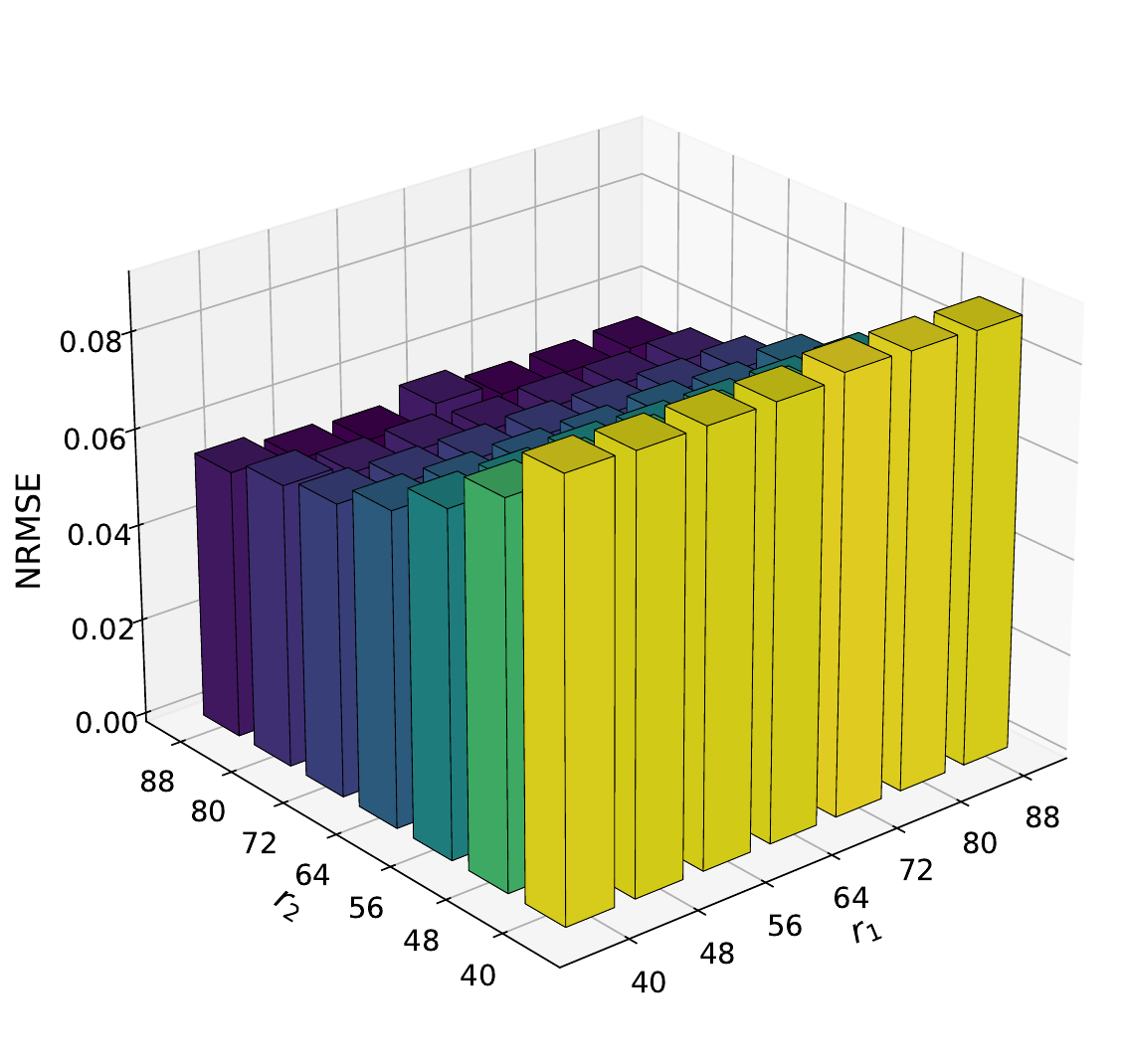}}
  \caption[]{Effect of ranks $\vect{r}_1$ and $\vect{r}_2$. Both datasets are set at a sampling ratio
    $s=0.5$. $P=Q=1 \,, N_\mathit{l}=100, D_l=5$. }
  \label{fig:ranks.sen}
\end{figure}

\begin{figure}
  \centering
  \resizebox{.7\linewidth}{!} { \includegraphics{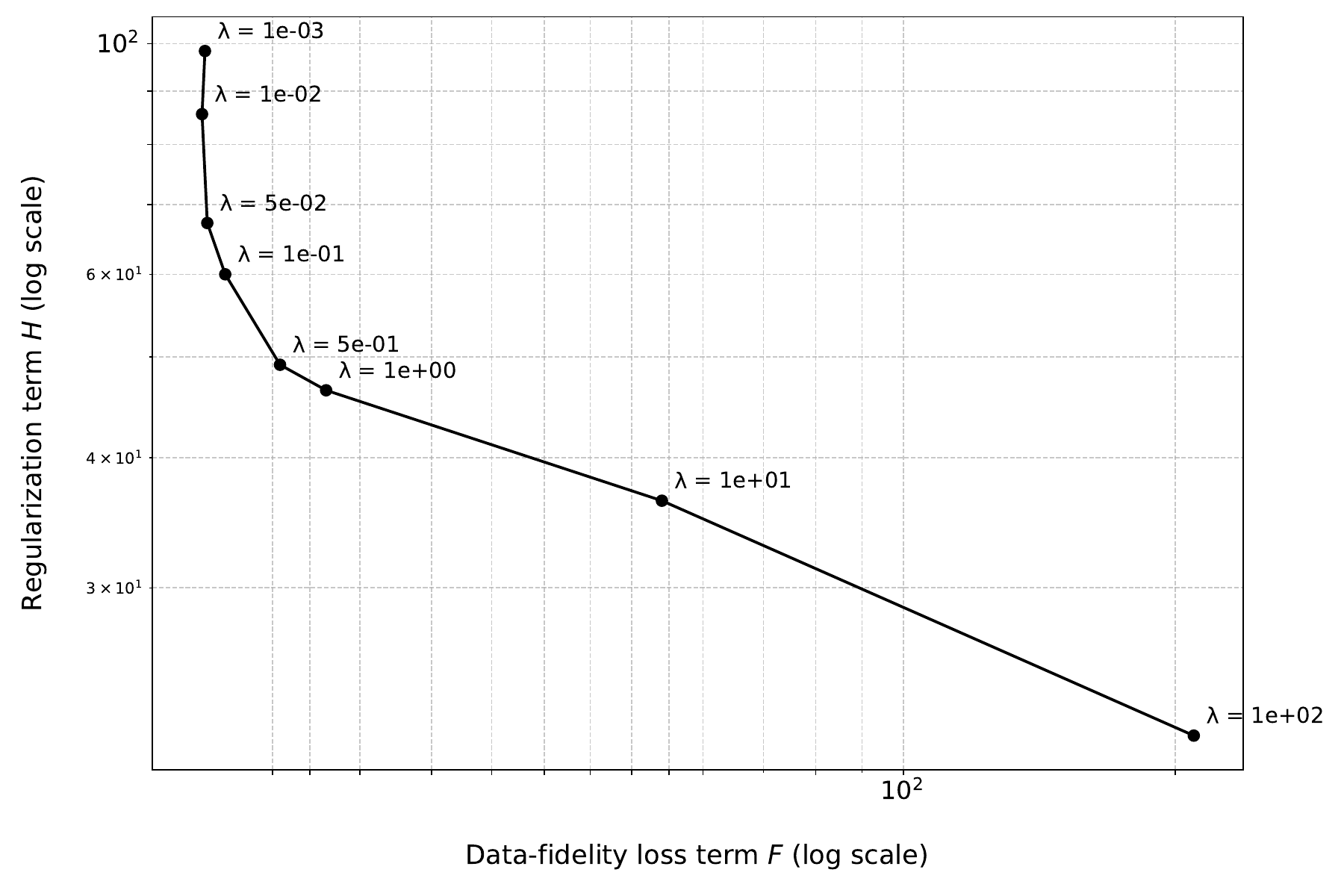} }
  \caption{L-curve for the selection of the regularization hyperparameter \(\lambda\)
    in~\cref{eq:inv.problem.loss}. The plot displays the data-fidelity loss term $F$ versus the regularization term
    $H$ in logarithmic scale at different $\lambda$ (\cref{eq:inv.problem.loss}). According
    to~\cite{lu2022data}, a good value of $\lambda$ is located at the corner of the L-shaped curve
    ($\lambda=0.1$).}
  \label{fig:lcurve.12}
\end{figure}

\section{Conclusions}\label{sec:conclude}

This paper proposed \krettah, a multi-way data imputation framework using kernel regression and Hadamard
overparameterization via Riemannian optimization on manifolds of fixed-rank tensors and positive
definite matrices. \krettah approximated the data using kernels and sparse coding, with tensor factors
constrained to lie on fixed-rank manifolds. To enable smooth and tractable optimization on manifolds,
Hadamard overparameterization was employed for sparsity-promoting regularization. \krettah was validated
with 4D fMRI reconstruction and dynamic graph flow imputation, where it was shown to outperform existing
alternatives while incurring an advantageous computational cost.  Possible future research directions
include automatic rank selection methods, the use of more abundant kernel functions beyond the
Gaussian ones, and extensions to online or stochastic learning from streaming multi-way data.

\section*{Acknowledgements}

The work of D.~T.~Nguyen was supported by JST SPRING, Japan Grant Number JPMJSP2180. The work of
D.~A.~Pados was supported by the US AFOSR under Grant W911NF-20-1-028.

\begingroup
\singlespacing
\setlength{\bibsep}{0pt}
\footnotesize
\bibliographystyle{IEEEtran} % Use the appropriate bibliography style
\bibliography{bib/ref.bib} % Reference the .bib file
\endgroup

\appendix
\crefalias{section}{appendix}

\section{Background on Tensor Trains}\label{appendix:ttd}

\begin{fact}[{\cite[Thm.~2.1]{oseledets2011tensor}}]\label{fact:quasi.optimal}
  Given $\vect{r} \coloneqq (1, r_1, \ldots, r_N, 1) \in \IntegerPP^{N+1}$, any $\vectcal{X} \in
  \Real^{I_1 \times \cdots \times I_N}$ has a TTD $\vectcal{A}^*_{ \vectcal{X}, \vect{r} }$. Computing
  $\vectcal{A}^*_{ \vectcal{X}, \vect{r}}$ is an NP-hard problem in general~\cite{hillar2013most}. The TT
  singular value decomposition (TT-SVD)~\cite{oseledets2011tensor}---see~\cref{alg:ttsvd}---computes a TT
  $\vectcal{A}^{\text{SVD}}_{ \vectcal{X}, \vect{r} }$, of size $I_1 \times \cdots \times I_N$ and TT
  compression rank $\cranktt( \vectcal{A}^{\text{SVD}}_{ \vectcal{X}, \vect{r} } ) = \vect{r}$, that
  turns out to be ``quasi-optimal'':
  \begin{align}
    \norm{ \vectcal{X} - \vectcal{A}^*_{ \vectcal{X}, \vect{r} } }_{\text{F}} \leq \norm{ \vectcal{X} -
      \vectcal{A}^{\text{SVD}}_{ \vectcal{X}, \vect{r} } }_{\text{F}} \leq \sqrt{N-1}\, \norm{
      \vectcal{X} - \vectcal{A}^*_{ \vectcal{X}, \vect{r} } }_{\text{F}} \,. \label{quasi.optimal}
  \end{align}
\end{fact}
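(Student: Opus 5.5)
The plan has four parts: the left inequality of \eqref{quasi.optimal}, existence of a minimizer, the cited NP-hardness, and the right inequality. The right inequality is the only substantial step, and I would follow the classical argument of~\cite{oseledets2011tensor}. The left inequality is immediate once existence is known: \cref{alg:ttsvd} outputs a TT with $\cranktt(\vectcal{A}^{\text{SVD}}_{\vectcal{X},\vect{r}})\preceq\vect{r}$, so it is feasible in \eqref{def.ttd}. The NP-hardness statement I would simply quote from~\cite{hillar2013most}.

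\emph{Existence.} I would identify $\text{TT}_{\preceq\vect{r}}$ with $\mathcal{S}_{\vect{r}}\coloneqq\Set{\vectcal{X}\given\rank(\vectcal{X})\preceq\vect{r}}$. The inclusion $\text{TT}_{\preceq\vect{r}}\subset\mathcal{S}_{\vect{r}}$ is \cref{fact:crank.rank}. For the converse, let $\vectcal{X}\in\mathcal{S}_{\vect{r}}$ and run \cref{alg:ttsvd} on it. At step $m$ the matrix $(\vect{T}_m\otimes\vect{I}_{I_m})^\intercal\vectcal{X}^{\langle m\rangle}$ has rank at most $\rank(\vectcal{X}^{\langle m\rangle})\le r_m$, so no information is discarded and the output reproduces $\vectcal{X}$ exactly. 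I would pad cores with zeros where needed so that the compression rank is at most $\vect{r}$. Each condition $\rank(\vectcal{X}^{\langle m\rangle})\le r_m$ is the vanishing of all $(r_m+1)$-minors, which defines a closed set, so $\mathcal{S}_{\vect{r}}$ is closed. Since $\vectcal{A}\mapsto\norm{\vectcal{X}-\vectcal{A}}_{\text{F}}^2$ is continuous and coercive, a minimizer exists.

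\emph{Upper bound.} Write $\varepsilon_m$ for the best rank-$r_m$ approximation error of $\vectcal{X}^{\langle m\rangle}$ in Frobenius norm, i.e., the norm of its discarded singular values. Because $\vectcal{A}^*\coloneqq\vectcal{A}^*_{\vectcal{X},\vect{r}}$ satisfies $\rank((\vectcal{A}^*)^{\langle m\rangle})\le r_m$, we have $\varepsilon_m\le\norm{\vectcal{X}-\vectcal{A}^*}_{\text{F}}$ for every $m$. It therefore suffices to prove
\begin{align*}
  \norm{\vectcal{X}-\vectcal{A}^{\text{SVD}}_{\vectcal{X},\vect{r}}}_{\text{F}}^2\le\sum_{m=1}^{N-1}\varepsilon_m^2 \,.
\end{align*}
By induction, each $\vect{T}_{m+1}$ in \cref{alg:ttsvd} has orthonormal columns. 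Define the projector $\Pi_m\coloneqq\vect{T}_{m+1}\vect{T}_{m+1}^\intercal$, acting on the row index of the $m$th unfolding. Its range is nested: $\operatorname{range}\Pi_m\subset\operatorname{range}(\Pi_{m-1}\otimes\vect{I}_{I_m})$. The output then equals $\Pi_{N-1}\cdots\Pi_1\vectcal{X}$, with each $\Pi_m$ applied to the appropriate unfolding. This yields the telescoping decomposition
\begin{align*}
  \vectcal{X}-\vectcal{A}^{\text{SVD}}_{\vectcal{X},\vect{r}}=\sum_{m=1}^{N-1}(\mathrm{I}-\Pi_m)\,\Pi_{m-1}\cdots\Pi_1\vectcal{X} \,.
\end{align*}
By the nesting, the summands are mutually orthogonal, so Pythagoras gives the squared norm as a sum of squares. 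For the $m$th term, write $\vect{M}_m\coloneqq(\vect{T}_m\otimes\vect{I}_{I_m})^\intercal\vectcal{X}^{\langle m\rangle}$. The term's norm is the error of rank-$r_m$ SVD truncation of $\vect{M}_m$. Since $\vect{M}_m$ is $\vectcal{X}^{\langle m\rangle}$ multiplied on the left by a matrix with orthonormal rows, its singular values are dominated by those of $\vectcal{X}^{\langle m\rangle}$. Hence this truncation error is at most $\varepsilon_m$, and the claimed bound follows.

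The main obstacle is the bookkeeping in the last step: verifying the telescoping identity, the nested-range orthogonality, and the interlacing of singular values with the paper's index-ordering and reshaping conventions. That is where I would spend the careful work. One gap remains: the TT-SVD output has compression rank exactly $\vect{r}$ only when each $\vect{M}_m$ has at least $r_m$ columns. Otherwise the compression rank is $\preceq\vect{r}$, and I would handle that case by zero-padding the cores.
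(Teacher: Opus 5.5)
Your proposal is correct and is essentially the classical argument of~\cite{oseledets2011tensor}, which the paper cites without giving a proof. That argument gets existence from the closedness of the set of tensors whose unfolding ranks are bounded by $\vect{r}$, together with exactness of TT-SVD on that set. It gets the upper bound from $\varepsilon_m\le\norm{\vectcal{X}-\vectcal{A}^*_{\vectcal{X},\vect{r}}}_{\text{F}}$ combined with $\norm{\vectcal{X}-\vectcal{A}^{\text{SVD}}_{\vectcal{X},\vect{r}}}_{\text{F}}^2\le\sum_{m=1}^{N-1}\varepsilon_m^2$; your nested-projector telescoping is simply Oseledets' induction on $N$ written out non-recursively.

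The ``gap'' you flag at the end concerns only the well-posedness of the statement itself, not your proof. Line~\ref{alg:ttsvd.step.svd} of \cref{alg:ttsvd} requires $\vect{U}_m$ to have at least $r_m$ columns; with a full SVD that is the condition $r_m\le r_{m-1}I_m$ on the number of rows of $\vect{M}_m$, not on its number of columns. Zero-padding the cores after the fact leaves the output tensor unchanged and resolves it.
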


\begin{fact}[{\cite[Thm.~2.1]{oseledets2011tensor}}]\label{fact:exact.rank}
  In the case where $\vect{r} \coloneqq \rank( \vectcal{X} )$, a TTD $\vectcal{A}^*_{ \vectcal{X},
    \vect{r} } = \vectcal{A}^*_{ \vectcal{X}, \rank( \vectcal{X} ) }$ achieves $\rank( \vectcal{X} )$, in
  the sense of $\cranktt ( \vectcal{A}^*_{ \vectcal{X}, \rank( \vectcal{X} ) } ) = \rank( \vectcal{X} )$,
  and at the same time nullifies the least-squares loss in \eqref{def.ttd}, that is, $\vectcal{X} =
  \vectcal{A}^*_{ \vectcal{X}, \rank( \vectcal{X} ) }$. Consequently, \textit{any}\/ order-$N$ tensor
  $\vectcal{X}$ can be expressed as a TT with compression rank equal to $\rank ( \vectcal{X} )$. In this
  case, the TT-SVD is able to compute the TTD of $\vectcal{X}$, that is, $\vectcal{A}^{ \text{SVD} }_{
    \vectcal{X}, \rank ( \vectcal{X} ) } = \vectcal{A}^*_{ \vectcal{X}, \rank ( \vectcal{X} ) }$, as
  suggested by \eqref{quasi.optimal}. Notice also by $\vectcal{X} = \vectcal{A}^*_{ \vectcal{X},
    \rank( \vectcal{X} ) }$ that
  \begin{align*}
    \cranktt ( \vectcal{A}^{\text{SVD}}_{ \vectcal{X}, \rank( \vectcal{X} ) } ) & = \cranktt (
    \vectcal{A}^*_{ \vectcal{X}, \rank( \vectcal{X} ) } ) = \rank( \vectcal{X} ) \\
    & = \rank( \vectcal{A}^*_{ \vectcal{X}, \rank( \vectcal{X} ) } ) = \rank( \vectcal{A}^{\text{SVD}}_{
      \vectcal{X}, \rank( \vectcal{X} ) } )\,;
  \end{align*}
  see also \cref{fact:crank.rank}.
\end{fact}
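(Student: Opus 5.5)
The plan is to analyse \cref{alg:ttsvd} run with target $\vect{r} \coloneqq \rank(\vectcal{X}) = (1, r_1, \ldots, r_{N-1}, 1)$, where $r_m = \rank(\vectcal{X}^{\langle m\rangle})$. I will show by induction on $m$ that the TT-SVD output reproduces $\vectcal{X}$ exactly. Every other assertion then follows at once.

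\textbf{Inductive invariant.} For $m \in \llbracket 1, N \rrbracket$, the matrix $\vect{T}_m \in \Real^{(I_1\cdots I_{m-1}) \times r_{m-1}}$ satisfies two properties:
\begin{itemize}
\item[(a)] its columns are orthonormal;
\item[(b)] its range equals the column space of $\vectcal{X}^{\langle m-1\rangle}$, with the convention $\vect{T}_1 = [1]$ and $\vectcal{X}^{\langle 0\rangle}$ the $1 \times (I_1 \cdots I_N)$ row $\tovec^{\intercal}(\vectcal{X})$.
\end{itemize}
Orthonormality propagates directly. Since $\vect{T}_{m+1} = (\vect{T}_m\otimes\vect{I}_{I_m})\vectcal{A}_m^{\langle 2\rangle}$, both factors have orthonormal columns: $\vect{T}_m\otimes\vect{I}_{I_m}$ by (a), and $\vectcal{A}_m^{\langle 2\rangle}$ because it consists of left singular vectors.

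\textbf{Key step (the main obstacle).} I must show that the truncation at step $m$ discards nothing. This rests on a nesting of column spaces. Each column of $\vectcal{X}^{\langle m\rangle}$, indexed by $(i_{m+1},\ldots,i_N)$, is obtained by stacking $I_m$ columns of $\vectcal{X}^{\langle m-1\rangle}$. Hence the column space of $\vectcal{X}^{\langle m\rangle}$ lies in $\operatorname{range}(\vectcal{X}^{\langle m-1\rangle}) \otimes \Real^{I_m}$, which by (b) equals $\operatorname{range}(\vect{T}_m\otimes\vect{I}_{I_m})$. Because the latter has orthonormal columns, its projector $(\vect{T}_m\otimes\vect{I}_{I_m})(\vect{T}_m\otimes\vect{I}_{I_m})^\intercal$ fixes $\vectcal{X}^{\langle m\rangle}$.

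Two consequences follow:
\begin{itemize}
\item[(i)] $\vect{M}_m \coloneqq (\vect{T}_m\otimes\vect{I}_{I_m})^\intercal\vectcal{X}^{\langle m\rangle}$ has rank exactly $r_m$, since left multiplication by an isometry on a subspace containing the columns preserves rank;
\item[(ii)] $\vectcal{X}^{\langle m\rangle} = (\vect{T}_m\otimes\vect{I}_{I_m})\vect{M}_m$.
\end{itemize}
By (i), $\vect{M}_m$ has exactly $r_m$ nonzero singular values, so its first $r_m$ left singular vectors $\vectcal{A}_m^{\langle 2\rangle}$ span $\operatorname{range}(\vect{M}_m)$. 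Therefore $\operatorname{range}(\vect{T}_{m+1}) = (\vect{T}_m\otimes\vect{I}_{I_m})\operatorname{range}(\vect{M}_m) = \operatorname{range}(\vectcal{X}^{\langle m\rangle})$ by (ii). This establishes (b) at step $m+1$ and closes the induction.

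\textbf{Exact reconstruction.} From the invariant at $m = N$, $\vect{T}_N\vect{T}_N^\intercal\vectcal{X}^{\langle N-1\rangle} = \vectcal{X}^{\langle N-1\rangle}$. The last line of the algorithm sets $\vectcal{A}_N^{\langle 2\rangle} = \vect{T}_N^\intercal\vectcal{X}^{\langle N-1\rangle}$. Unwinding $\vect{T}_N = (\cdots((\vectcal{A}_1^{\langle 2\rangle}\otimes\vect{I})\vectcal{A}_2^{\langle 2\rangle})\cdots)$ against the entrywise formula of \cref{def:tt.format} shows that the $(N-1)$th unfolding of $\vectcal{A}_1\times^1\cdots\times^1\vectcal{A}_N$ equals $\vect{T}_N\vectcal{A}_N^{\langle 2\rangle}$. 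Hence $\vectcal{A}^{\text{SVD}}_{\vectcal{X},\rank(\vectcal{X})} = \vectcal{X}$, and its cores have sizes $r_{m-1}\times I_m\times r_m$, so its compression rank is $\rank(\vectcal{X})$.

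\textbf{Remaining claims.} This TT lies in $\text{TT}_{\preceq\rank(\vectcal{X})}$ and attains loss $0$ in \eqref{def.ttd}. Since the loss is nonnegative, it is a minimizer, which shows existence. Any minimizer $\vectcal{A}^*$ has zero loss, so $\vectcal{A}^* = \vectcal{X}$. Then \cref{fact:crank.rank} gives $\rank(\vectcal{X}) = \rank(\vectcal{A}^*) \preceq \cranktt(\vectcal{A}^*) \preceq \rank(\vectcal{X})$, forcing equality throughout.

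The displayed chain of equalities then follows, because both $\vectcal{A}^*_{\vectcal{X},\rank(\vectcal{X})}$ and $\vectcal{A}^{\text{SVD}}_{\vectcal{X},\rank(\vectcal{X})}$ coincide with $\vectcal{X}$. The statement that \emph{any} $\vectcal{X}$ is a TT of compression rank $\rank(\vectcal{X})$ is exactly the reconstruction just proved.
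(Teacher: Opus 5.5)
Your argument is correct, but it takes a different route from the paper. The paper does not prove this fact itself.

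\textbf{The paper's route.} The existence of a zero-loss TT with $\cranktt = \rank(\vectcal{X})$ is imported from \cite[Thm.~2.1]{oseledets2011tensor}. That proof inducts on the order $N$, using a full-rank (skeleton) factorization of the first unfolding and bounding the unfolding ranks of the remaining factor. Exactness of TT-SVD is then read off the quasi-optimality bound \eqref{quasi.optimal}:
$\norm{\vectcal{X} - \vectcal{A}^{\text{SVD}}_{\vectcal{X},\rank(\vectcal{X})}}_{\text{F}} \leq \sqrt{N-1}\,\norm{\vectcal{X} - \vectcal{A}^*_{\vectcal{X},\rank(\vectcal{X})}}_{\text{F}} = 0$.
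The displayed chain follows from $\vectcal{X} = \vectcal{A}^*_{\vectcal{X},\rank(\vectcal{X})}$ and \cref{fact:crank.rank}, just as in your final paragraphs.

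\textbf{Your route.} You analyse \cref{alg:ttsvd} directly. The invariant that $\vect{T}_m$ has orthonormal columns spanning the column space of $\vectcal{X}^{\langle m-1\rangle}$, combined with the nesting of column spaces of consecutive unfoldings, shows that $\vect{M}_m$ has rank exactly $r_m$. Hence no truncation step discards anything. Existence of a TTD, zero loss, exactness of TT-SVD and the rank identities then all come out of one constructive induction.

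\textbf{What each buys.} Your proof is self-contained. It needs neither \eqref{quasi.optimal}, which is heavier and is stated in the paper in terms of an optimal $\vectcal{A}^*$ whose existence is again taken from Thm.~2.1, nor the skeleton-decomposition induction. In effect you re-prove Oseledets' existence theorem with SVDs, your column-space nesting playing the role of his bound on the unfolding ranks of the remainder. The paper's route is shorter and treats the exact and truncated cases uniformly. Your invariant breaks down as soon as $\vect{r} \prec \rank(\vectcal{X})$, where an error-accumulation argument would be needed instead.

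(Minor point: the base case of (b) tacitly needs $\vectcal{X} \neq \vect{0}$. The statement presupposes this anyway, since TT ranks are positive integers. In any case, only the inclusion $\operatorname{range}(\vectcal{X}^{\langle m-1\rangle}) \subseteq \operatorname{range}(\vect{T}_m)$ is actually used there.)
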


\begin{remark}
  TT-SVD in~\cref{alg:ttsvd} is able to compute the ``exact'' TTD of a tensor $\vectcal{X}$ whenever its
  rank is known \textit{precisely.} On the other hand, if the rank of a tensor is not known precisely,
  and if the user-defined $\vect{r}$ is set so that $\vect{r} \prec \rank( \vectcal{X} )$, then the
  output $\vectcal{A}^{ \text{SVD} }_{ \vectcal{X}, \vect{r} }$ of \cref{alg:ttsvd} is ``quasi-optimal''
  (see~\cref{fact:quasi.optimal}). Moreover, if $\rank( \vectcal{X} ) \preceq \vect{r}$, then
  $\vectcal{A}^*_{ \vectcal{X}, \rank ( \vectcal{X} ) }\in \text{TT}_{ \preceq\, \vect{r} }$ solves
  \eqref{def.ttd} according to \cref{fact:exact.rank}.  Hence, $\vectcal{A}^*_{ \vectcal{X}, \rank (
    \vectcal{X} ) }$ serves also as a solution $\vectcal{A}^*_{ \vectcal{X}, \vect{r} }$, or, by a slight
  abuse of notation, $\vectcal{A}^*_{ \vectcal{X}, \vect{r} } = \vectcal{A}^*_{ \vectcal{X}, \rank (
    \vectcal{X} ) }$, where equality here is considered with regards to their entries as size $I_1 \times
  \cdots \times I_N$ tensors.

  Consequently, to be able to compute an ``exact'' TTD of $\vectcal{X}$ in cases where $\rank (
  \vectcal{X} )$ is unknown to the user, as in the real-world applications of \cref{sec:applications}, a
  rule of thumb is to choose a large-enough $\vect{r}$ to over-estimate $\rank ( \vectcal{X} )$, and then
  mobilize TT-SVD to compute $\vectcal{A}^{ \text{SVD} }_{ \vectcal{X}, \vect{r} }$, which turns out to
  be equal to $\vectcal{X}$, because whenever $\rank( \vectcal{X} ) \preceq \vect{r}$, $\vectcal{A}^*_{
    \vectcal{X}, \vect{r} } = \vectcal{A}^*_{ \vectcal{X}, \rank ( \vectcal{X} ) }$, then $\vectcal{X} =
  \vectcal{A}^*_{ \vectcal{X}, \rank ( \vectcal{X} ) } = \vectcal{A}^*_{ \vectcal{X}, \vect{r} }$ by
  \cref{fact:exact.rank}, and thus $\vectcal{X} = \vectcal{A}^*_{ \vectcal{X}, \vect{r} } =
  \vectcal{A}^{\text{SVD}}_{ \vectcal{X}, \vect{r} }$ by~\cref{fact:quasi.optimal}. This suggests also by
  \cref{fact:crank.rank} that
  \begin{align*}
    \vect{r} = \cranktt( \vectcal{A}^{\text{SVD}}_{ \vectcal{X}, \vect{r} } ) \succeq \rank(
    \vectcal{A}^{\text{SVD}}_{ \vectcal{X}, \vect{r} } ) = \rank ( \vectcal{X} )
    \,. %\label{rank.svd.overestimate}
  \end{align*}
  Evidently, a drawback of overestimating the actual rank of a tensor in TT-SVD computations is that the
  core tensors of $\vectcal{A}^{ \text{SVD} }_{ \vectcal{X}, \vect{r} }$ are in general of larger
  size than those of $\vectcal{A}^*_{ \vectcal{X}, \rank ( \vectcal{X} ) } = \vectcal{A}^{\text{SVD}}_{
    \vectcal{X}, \rank ( \vectcal{X} ) }$.
\end{remark}

\begin{definition}[$m$-orthogonal TT]\label{def:m.orth}
  A TT $\vectcal{A}$, with core tensors $\Set{ \vectcal{A}_j \in \Real^{r_{j-1}\times I_j \times r_j}
    \given j \in \llbracket 1, N \rrbracket }$, is called \textit{$m$-orthogonal}\/ if
  \begin{align*}
    \begin{cases}
      (\vectcal{A}_j^{\langle 2 \rangle})^\intercal \vectcal{A}_j^{\langle 2 \rangle} = \vect{I}_{r_j}
      \,, & \forall j\in \llbracket 1, m-1 \rrbracket \,, \\
      \vectcal{A}_j^{\langle 1 \rangle} (\vectcal{A}_j^{\langle 1 \rangle})^\intercal =
      \vect{I}_{r_{j-1}} \,, & \forall j\in \llbracket m+1, N \rrbracket \,.
    \end{cases}
  \end{align*}
  It can be verified that the output of TT-SVD in~\cref{alg:ttsvd} is an $N$-orthogonal TT.  Note that
  $m$-orthogonalizing a given TT $\vectcal{A}$ can be achieved via recursive QR
  decompositions~\cite{steinlechner2016riemannian}, which only change the TT cores of $\vectcal{A}$
  instead of its entries.
\end{definition}

\section{Background on Riemannian Manifolds}\label{appendix:manifold}

\begin{definition}[Retraction map~{\cite[Def.~1]{absil2012projection}}]\label{def:retraction}
  Let $\mathfrak{M}$ be a smooth manifold embedded in an Euclidean space $\mathcal{E}$. Denote as
  $0_{\vectgr{\Theta}}$ the zero element of $T_{\vectgr{\Theta}} \mathfrak{M}$. A map $R\colon
  T\mathfrak{M} \to \mathfrak{M}$ is a retraction on $\mathfrak{M}$ around $\vectgr{\Theta} \in
  \mathfrak{M}$ if there exists a neighborhood $U$ of $(\vectgr{\Theta}, 0_{\vectgr{\Theta}})$ within
  $T\mathfrak{M}$ s.t.\ the following properties hold true:
  \begin{enumerate*}[label = \textbf{ (\roman*) }]
  \item $U \subseteq \text{dom}(R)$ and $R\colon U \to \mathfrak{M}$ is smooth;
  \item $R(\vectgr{\Theta}', 0_{\vectgr{\Theta}'})=\vectgr{\Theta}'$ for all $(\vectgr{\Theta}',
    0_{\vectgr{\Theta}'}) \in U$; and
  \item the derivative $DR(\vectgr{\Theta}', 0_{\vectgr{\Theta}'})$ satisfies the \textit{local
    rigidity}\/ condition $DR(\vectgr{\Theta}', 0_{\vectgr{\Theta}'})[0_{\vectgr{\Theta}'}, \vectgr{\xi}]
    = \vectgr{\xi}$ for all $(\vectgr{\Theta}', \vectgr{\xi}) \in U$.
  \end{enumerate*}
  Hereafter, retraction refers to the restriction of $R$ to $T_{\vectgr{\Theta}}\mathfrak{M}$, denoted by
  $R_{\vectgr{\Theta}} \colon T_{ \vectgr{\Theta} }\mathfrak{M} \to \mathfrak{M}$.  The domain of the
  retraction needs not be all of $T_{ \vectgr{\Theta} } \mathfrak{M}$---a retraction only needs to be
  locally defined to facilitate optimization over manifolds~\cite{absil2012projection,
    vandereycken2013low, steinlechner2016riemannian}.
\end{definition}

\begin{fact}[Parameterization of $T_\vectcal{X} \mathcal{M}_{ \vect{r}
    }$~{\cite[Thm. 4.2]{holtz2012manifolds}}]\label{fact:mr.tangentspace} Suppose $\vectcal{X} \in
  \mathcal{M}_{\vect{r}}$ admits the following $N$- and 1-orthogonal TT formats, respectively
  (see~\cref{fact:exact.rank,def:m.orth}),
  \begin{alignat*}{3}
    \vectcal{X} & = \vectcal{A}_1 \times^1 \vectcal{A}_2 \times^1 \cdots \times^1 \vectcal{A}_N, \qquad
    && (\vectcal{A}_j^{\langle 2 \rangle})^\intercal \vectcal{A}_j^{\langle 2 \rangle} = \vect{I}_{r_j},
    \quad && \forall j \in \llbracket 1, N-1 \rrbracket\,, \\
    & = \vectcal{A}_1' \times^1 \vectcal{A}_2' \times^1 \cdots \times^1 \vectcal{A}_N', \qquad &&
    \vectcal{A}_j'^{\langle 1 \rangle} (\vectcal{A}_j'^{\langle 1 \rangle})^\intercal =
    \vect{I}_{r_{j-1}}, \quad && \forall j \in \llbracket 2, N \rrbracket \,.
  \end{alignat*}
  Then, any tangent vector $\vectgr{\xi}\in T_\vectcal{X} \mathcal{M}_{\vect{r}}$ can be expressed as
  follows
  \begin{align*}
    \vectgr{\xi} {} = {} & \vectgr{\Delta}_1 \times^1 \vectcal{A}_2' \times^1 \cdots \times^1
    \vectcal{A}_N' \notag \\
    & + \vectcal{A}_1 \times^1 \vectgr{\Delta}_2 \times^1 \vectcal{A}_3' \times^1 \cdots \times^1
    \vectcal{A}_N' \notag \\
    & + \cdots + \vectcal{A}_1 \times^1 \vectcal{A}_2 \times^1 \cdots \times^1 \vectgr{\Delta}_N \,,%
  \end{align*}
  for some $\Set{ \vectgr{\Delta}_j \in \Real^{r_{j-1} \times I_j \times r_j} \given j \in \llbracket 1,
    N \rrbracket }$ with $(\vectcal{A}_j^{\langle 2 \rangle})^\intercal \vectgr{\Delta}_j^{\langle 2
    \rangle} = \vect{0}_{r_j \times r_j}, \forall j \in \llbracket 1, N-1 \rrbracket$.
\end{fact}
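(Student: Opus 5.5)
The plan is to differentiate a core parameterization of $\mathcal{M}_{\vect{r}}$ and then remove the gauge redundancy by dimension counting. Consider the multilinear map $\tau\colon (\vectcal{B}_1,\ldots,\vectcal{B}_N)\mapsto \vectcal{B}_1\times^1\cdots\times^1\vectcal{B}_N$, defined on cores $\vectcal{B}_j\in\Real^{r_{j-1}\times I_j\times r_j}$ whose unfoldings have full rank. By \cref{fact:exact.rank} and \cref{fact:crank.rank}, $\tau$ maps this open set onto $\mathcal{M}_{\vect{r}}$. Given a smooth curve $\gamma$ in $\mathcal{M}_{\vect{r}}$ with $\gamma(0)=\vectcal{X}$, I would first show it lifts locally to a smooth curve of cores. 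Concretely, apply the TT-SVD steps of \cref{alg:ttsvd} with the QR-based orthogonalization of \cref{def:m.orth}; for fixed-rank unfoldings these steps are smooth in the entries. Differentiating at $t=0$ with the Leibniz rule then gives $\dot\gamma(0)=\sum_{j}\vectcal{B}_1\times^1\cdots\times^1\dot{\vectcal{B}}_j\times^1\cdots\times^1\vectcal{B}_N$. Conversely, every such sum is the velocity of $t\mapsto\tau(\vectcal{B}_1+t\dot{\vectcal{B}}_1,\ldots)$, which remains in $\mathcal{M}_{\vect{r}}$ for small $t$. Hence $T_{\vectcal{X}}\mathcal{M}_{\vect{r}}$ equals the range of $D\tau$.

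The next step is to put the $j$th term into mixed-orthogonal form. The $N$-orthogonal and $1$-orthogonal representations are related by invertible gauge matrices $\vect{G}_j\in\Real^{r_j\times r_j}$, which satisfy
\[
\vectcal{A}_1\times^1\cdots\times^1\vectcal{A}_j\times^1\vect{G}_j=\vectcal{A}_1'\times^1\cdots\times^1\vectcal{A}_j'.
\]
These $\vect{G}_j$ come from the uniqueness of full-rank factorizations of the unfoldings $\vectcal{X}^{\langle j\rangle}$. I would differentiate $\tau$ at the representation whose first $j-1$ cores are $\vectcal{A}_i$ and whose last $N-j$ cores are $\vectcal{A}_i'$, and absorb the $\vect{G}_j$ into the varying core. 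This turns the range of $D\tau$ into the span of all terms
\[
\vectcal{A}_1\times^1\cdots\times^1\vectcal{A}_{j-1}\times^1\vectgr{\Delta}_j\times^1\vectcal{A}_{j+1}'\times^1\cdots\times^1\vectcal{A}_N',
\]
with $\vectgr{\Delta}_j$ arbitrary. This step needs care: varying core $k\neq j$ in one representation must be re-expressed in the mixed representation, which I would do by inserting $\vect{G}_i\vect{G}_i^{-1}$ between cores.

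The main step is imposing the gauge condition $(\vectcal{A}_j^{\langle2\rangle})^\intercal\vectgr{\Delta}_j^{\langle2\rangle}=\vect{0}$ for $j\le N-1$. I would proceed by downward induction from $j=1$ to $j=N-1$. Decompose $\vectgr{\Delta}_j^{\langle2\rangle}=\vectcal{A}_j^{\langle2\rangle}\vect{M}+\vectgr{\Delta}_j^{\perp\langle2\rangle}$ with $\vect{M}=(\vectcal{A}_j^{\langle2\rangle})^\intercal\vectgr{\Delta}_j^{\langle2\rangle}$, which is valid because $\vectcal{A}_j^{\langle2\rangle}$ has orthonormal columns. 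The component $\vectcal{A}_j\times^1\vect{M}$ is then pushed into the next core: $\vect{M}\times^1\vectcal{A}_{j+1}'$ is a core of size $r_j\times I_{j+1}\times r_{j+1}$, so it becomes an addition to $\vectgr{\Delta}_{j+1}$. The last core $\vectgr{\Delta}_N$ stays unconstrained and absorbs everything. This yields the stated representation.

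Finally, I would check that the representation is a parameterization rather than merely a spanning set. The free parameters number $\sum_j r_{j-1}I_jr_j-\sum_{j=1}^{N-1}r_j^2$, which equals $\dim\mathcal{M}_{\vect{r}}$ by \cref{example:TTrank}. The $N$ terms are mutually orthogonal in the Frobenius inner product, using the gauge conditions together with the orthogonality of the $\vectcal{A}_j$ and $\vectcal{A}_j'$. Each term is injective in $\vectgr{\Delta}_j$, since the left and right interface matrices have full rank. So the map is injective, and it onto the tangent space.

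I expect the smooth-lifting argument in the first paragraph to be the main obstacle. If it proves awkward, I would fall back on the local-chart statement of \cite{holtz2012manifolds}. Alternatively, I would show directly that the range of $D\tau$ has dimension $\dim\mathcal{M}_{\vect{r}}$ and that it lies in $T_{\vectcal{X}}\mathcal{M}_{\vect{r}}$; equality of dimensions then finishes the argument.
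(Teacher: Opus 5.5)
Your proof is correct, but it does not follow the paper's route, because the paper proves nothing here. It imports the statement from \cite{holtz2012manifolds}. There, $\mathcal{M}_{\vect{r}}$ is shown to be a manifold by quotienting the open set of full-rank cores by the gauge group $\prod_{j=1}^{N-1}\mathrm{GL}(r_j)$. The kernel of $D\tau$ is identified with the tangent space of the gauge orbit, of dimension $\sum_{j=1}^{N-1} r_j^2$, and the conditions $(\vectcal{A}_j^{\langle 2\rangle})^\intercal\vectgr{\Delta}_j^{\langle 2\rangle}=\vect{0}$ are a choice of complement to that kernel. Passing to the mixed left/right-orthogonal form is then a matter of absorbing gauge matrices into $\vectgr{\Delta}_j$, exactly as in your second step.

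That route gives the manifold structure, the dimension formula and the tangent-space description at once, with uniqueness built in. Yours takes the manifold and its dimension (\cref{example:TTrank}) as given, and in exchange is elementary and constructive. Your left-to-right sweep, pushing $\vectcal{A}_j\times^1\vect{M}$ into the next core, is essentially the computation behind \cref{fact:mr.tangentspace.proj}. Your orthogonality observation explains why that projection splits term by term. It only needs left-orthogonality of the prefix and the gauge condition on the lower-index term; right-orthogonality is what makes each term an isometry in $\vectgr{\Delta}_j$.

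Two small repairs are needed.

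First, \cref{fact:crank.rank} only gives $\rank\preceq\vect{r}$ on the image of $\tau$. Equality needs a short induction: full-rank core unfoldings make the interface matrices (the $j$th unfolding of the product of the first $j$ cores, and the first unfolding of the product of the last $N-j$ cores) have full rank $r_j$.

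Second, TT-SVD is not smooth in the entries. Individual singular vectors can jump when singular values coincide. Only the dominant $r_j$-dimensional singular subspace varies smoothly, since the rank stays exactly $r_j$ along the curve. A smooth lift should therefore be built from QR of a fixed set of $r_j$ columns that are independent at $t=0$; these stay independent and span the column space for small $t$.

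In fact you need neither the lift nor the sweep, because your fallback is already a complete proof:
\begin{enumerate}
\item Each gauge-constrained term is the velocity at $t=0$ of $t\mapsto\tau(\vectcal{A}_1,\ldots,\vectcal{A}_{j-1},\vectcal{A}_j\times^1\vect{G}_j+t\vectgr{\Delta}_j,\vectcal{A}_{j+1}',\ldots,\vectcal{A}_N')$. This curve stays in $\mathcal{M}_{\vect{r}}$ for small $t$, so the map lands in $T_{\vectcal{X}}\mathcal{M}_{\vect{r}}$.
\item The map is injective by your orthogonality argument.
\item Its domain has dimension $\sum_j r_{j-1}I_jr_j-\sum_{j=1}^{N-1}r_j^2=\dim T_{\vectcal{X}}\mathcal{M}_{\vect{r}}$, since the gauge constraints are independent given that $\vectcal{A}_j^{\langle 2\rangle}$ has orthonormal columns.
\end{enumerate}
Hence the map is onto the tangent space.
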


\begin{fact}[Projection onto
    $T_\vectcal{X}\mathcal{M}_{\vect{r}}$~\cite{lubich2015time}]\label{fact:mr.tangentspace.proj}
  Consider an arbitrarily fixed $\vectcal{X} \in \mathcal{M}_{\vect{r}}$ and its $N$- and 1-orthogonal TT
  formats, as in~\cref{fact:mr.tangentspace}. Define, then, $\vectcal{X}_{\leq j } \coloneqq
  \vectcal{A}_1 \times^1 \vectcal{A}_2 \times^1 \cdots \times^1 \vectcal{A}_j$ and $\vectcal{X}_{\geq j}
  \coloneqq \vectcal{A}_j' \times^1 \vectcal{A}_{j+1}' \times^1 \cdots \times^1 \vectcal{A}_N'$, $\forall
  j \in \llbracket 1,N \rrbracket$. The orthogonal-projection mapping $P_{ T_\vectcal{X}
    \mathcal{M}_{\vect{r}} }$ onto $T_\vectcal{X} \mathcal{M}_{\vect{r}}$ takes then the following form:
  $P_{ T_\vectcal{X} \mathcal{M}_{\vect{r}} } \colon \Real^{ I_1 \times I_2 \times \cdots \times I_N }
  \to T_\vectcal{X} \mathcal{M}_{\vect{r}} \colon \vectcal{Z} \mapsto P_{ T_\vectcal{X}
    \mathcal{M}_{\vect{r}} }( \vectcal{Z} )$, where%
  \begin{subequations}%
    \begin{align}
      P_{ T_\vectcal{X} \mathcal{M}_{\vect{r}} }( \vectcal{Z} ) {} = {} & \vectgr{\Delta}_1 \times^1
      \vectcal{A}_2' \times^1 \cdots \times^1 \vectcal{A}_N' \notag \\
      & + \vectcal{A}_1 \times^1 \vectgr{\Delta}_2 \times^1 \vectcal{A}_3' \times^1 \cdots \times^1
      \vectcal{A}_N' \notag \\
      & + \cdots + \vectcal{A}_1 \times^1 \vectcal{A}_2 \times^1 \cdots \times^1 \vectgr{\Delta}_N \,,%
    \end{align}
    with
    \begin{align}
      \vectgr{\Delta}_j^{\langle 2 \rangle} \coloneqq \begin{cases}
        ( \vect{I}_{r_{j-1} I_j} - \vectcal{A}_j^{\langle 2 \rangle} (\vectcal{A}_j^{\langle 2
          \rangle})^\intercal ) ( \vect{I}_{I_j} \otimes (\vectcal{X}_{\leq j-1 }^{\langle j
          \rangle})^\intercal ) \vectcal{Z}^{\langle j \rangle} (\vectcal{X}_{\geq j+1}^{\langle 1
          \rangle})^\intercal \,, & j \in \llbracket 1, N-1 \rrbracket \,, \\
        ( \vect{I}_{I_j} \otimes (\vectcal{X}_{\leq j-1 }^{\langle j \rangle})^\intercal )
        \vectcal{Z}^{\langle j \rangle} \,, & j = N \,,
      \end{cases}
    \end{align}%
  \end{subequations}%
  and $\vectcal{X}_{\leq 0 }^{\langle 1 \rangle} \coloneqq 1$.
\end{fact}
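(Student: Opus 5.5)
The plan is to show that $T_\vectcal{X}\mathcal{M}_{\vect{r}}$, as parameterized in \cref{fact:mr.tangentspace}, is an \emph{orthogonal} direct sum of $N$ subspaces. Each summand is the isometric image of an explicitly known parameter space. The projection then splits into $N$ independent projections, each computed through the adjoint of that isometry.

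\textbf{Step 1: the summands.} For $j \in \llbracket 1, N\rrbracket$, let
\[
\mathcal{V}_j \coloneqq \Set{ \vectcal{X}_{\leq j-1} \times^1 \vectgr{\Delta}_j \times^1 \vectcal{X}_{\geq j+1} \given \vectgr{\Delta}_j \in \mathcal{G}_j }.
\]
Here $\mathcal{G}_j \coloneqq \Set{ \vectgr{\Delta}_j \given (\vectcal{A}_j^{\langle 2\rangle})^\intercal \vectgr{\Delta}_j^{\langle 2\rangle} = \vect{0} }$ for $j\le N-1$, and $\mathcal{G}_N$ is the whole space $\Real^{r_{N-1}\times I_N\times 1}$. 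By \cref{fact:mr.tangentspace}, $T_\vectcal{X}\mathcal{M}_{\vect{r}} = \mathcal{V}_1+\cdots+\mathcal{V}_N$.

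Consider the linear map $E_j\colon \vectgr{\Delta}_j \mapsto \vectcal{X}_{\leq j-1}\times^1\vectgr{\Delta}_j\times^1\vectcal{X}_{\geq j+1}$. Its $j$th unfolding is
\[
(E_j\vectgr{\Delta}_j)^{\langle j\rangle} = (\vect{I}_{I_j}\otimes \vectcal{X}_{\leq j-1}^{\langle j-1\rangle})\,\vectgr{\Delta}_j^{\langle 2\rangle}\,\vectcal{X}_{\geq j+1}^{\langle 1\rangle},
\]
with the index ordering as in the statement; I will fix conventions once and not dwell on them. The left-orthogonality $(\vectcal{A}_i^{\langle 2\rangle})^\intercal\vectcal{A}_i^{\langle 2\rangle}=\vect{I}$ for $i\le j-1$ gives, by induction on $i$, that $\vectcal{X}_{\leq j-1}$ has orthonormal columns in the appropriate unfolding. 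The right-orthogonality of $\vectcal{A}'_i$ for $i\ge j+1$ gives, by the symmetric induction, that $\vectcal{X}_{\geq j+1}^{\langle 1\rangle}$ has orthonormal rows. Hence $E_j$ is an isometry, and its adjoint is
\[
E_j^*\vectcal{Z} = (\vect{I}_{I_j}\otimes (\vectcal{X}_{\leq j-1})^\intercal)\,\vectcal{Z}^{\langle j\rangle}\,(\vectcal{X}_{\geq j+1}^{\langle 1\rangle})^\intercal,
\]
reshaped to $r_{j-1}\times I_j\times r_j$. This is exactly the non-projected factor in the claimed formula.

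\textbf{Step 2: mutual orthogonality.} I expect this to be the main obstacle. Take $i<j$, $\vectgr{\xi}_i\in\mathcal{V}_i$ and $\vectgr{\xi}_j\in\mathcal{V}_j$. In $\vectgr{\xi}_j$, the cores in positions $1,\ldots,i$ are the left-orthogonal $\vectcal{A}_1,\ldots,\vectcal{A}_i$. In $\vectgr{\xi}_i$, they are $\vectcal{A}_1,\ldots,\vectcal{A}_{i-1},\vectgr{\Delta}_i$.

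Compute $\innerp{\vectgr{\xi}_i}{\vectgr{\xi}_j}$ by first contracting over modes $1,\ldots,i-1$. Orthonormality of $\vectcal{X}_{\leq i-1}$ collapses these shared cores to an identity. The next contraction, over mode $i$ and the left rank index, produces the factor $(\vectcal{A}_i^{\langle 2\rangle})^\intercal\vectgr{\Delta}_i^{\langle 2\rangle}=\vect{0}$, which holds because $i\le N-1$. Hence the inner product vanishes.

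The care needed here is in the index bookkeeping: one must verify that the contraction really factors as $(\text{stuff})\cdot(\vectcal{A}_i^{\langle 2\rangle})^\intercal\vectgr{\Delta}_i^{\langle 2\rangle}\cdot(\text{stuff})$. I would do this by writing both tensors in the $i$th unfolding,
\[
(\vect{I}\otimes\vectcal{X}_{\leq i-1})\,\vectcal{A}_i^{\langle 2\rangle}\,(\cdots) \quad\text{versus}\quad (\vect{I}\otimes\vectcal{X}_{\leq i-1})\,\vectgr{\Delta}_i^{\langle 2\rangle}\,(\cdots),
\]
and using the trace form of the Frobenius inner product.

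\textbf{Step 3: assembly.} Because the $\mathcal{V}_j$ are mutually orthogonal, $P_{T_\vectcal{X}\mathcal{M}_{\vect{r}}}=\sum_j P_{\mathcal{V}_j}$. Since $E_j$ is an isometry, $P_{\mathcal{V}_j}=E_j\,P_{\mathcal{G}_j}\,E_j^*$.

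For $j\le N-1$, the orthogonal projection onto $\mathcal{G}_j$ in the Frobenius inner product acts on the $\langle 2\rangle$-unfolding as left multiplication by $\vect{I}_{r_{j-1}I_j}-\vectcal{A}_j^{\langle 2\rangle}(\vectcal{A}_j^{\langle 2\rangle})^\intercal$. This is valid because $\vectcal{A}_j^{\langle 2\rangle}$ has orthonormal columns. For $j=N$, $P_{\mathcal{G}_N}$ is the identity.

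Substituting the adjoint from Step 1 yields precisely $\vectgr{\Delta}_j^{\langle 2\rangle}$ as stated, with the convention $\vectcal{X}_{\leq 0}^{\langle 1\rangle}=1$. The resulting $\sum_j E_j\vectgr{\Delta}_j$ is the claimed formula. It lies in $T_\vectcal{X}\mathcal{M}_{\vect{r}}$, and $\vectcal{Z}$ minus it is orthogonal to every $\mathcal{V}_j$, which confirms it is the orthogonal projection.
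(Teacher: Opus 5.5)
Your proof is correct. It follows essentially the same route as the source this Fact is imported from: the paper gives no proof of its own and cites \cite{lubich2015time} (see also \cite{steinlechner2016riemannian}). That route splits $T_{\vectcal{X}}\mathcal{M}_{\vect{r}}$ orthogonally into gauged summands, each the isometric image of its core space, and then projects summand by summand.

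One small point to tighten. The tangent-space Fact as stated only gives $T_{\vectcal{X}}\mathcal{M}_{\vect{r}} \subseteq \sum_j \mathcal{V}_j$, whereas Step~3 uses equality. Equality follows from your Steps~1--2 by a dimension count: the $E_j$ are injective and the $\mathcal{V}_j$ are mutually orthogonal, so $\dim \sum_j \mathcal{V}_j = \sum_{j=1}^{N} r_{j-1} I_j r_j - \sum_{j=1}^{N-1} r_j^2 = \dim \mathcal{M}_{\vect{r}}$.
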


\begin{remark}[Discussion on \eqref{eq:retract.fixedrank}]\label{remark:retraction.tt}
  Looking at Step~\ref{alg:ttsvd.step.svd} of~\cref{alg:ttsvd}, it can be verified that if there exists a
  zero $r_j$ singular value, for some $j \in \llbracket 1, N-1 \rrbracket$, then the output of TT-SVD is
  not in $\mathcal{M}_{\vect{r}}$. In other words, strictly speaking, the domain of the retraction
  $R_{\vectcal{X}}$ may not be the entire $T_\vectcal{X} \mathcal{M}_{\vect{r}}$.  Fortunately, TT-SVD
  defines a retraction map from a neighborhood around the origin of $T_{\vectcal{X}}
  \mathcal{M}_{\vect{r}}$ to a neighborhood around $\vectcal{X}$ in
  $\mathcal{M}_{\vect{r}}$~\cite[Prop.~4.3]{steinlechner2016riemannian}, which suffices for Riemannian
  optimization algorithms. Provided that $\vectgr{\xi}$ is ``not too large'', TT-SVD maps $\vectcal{X} +
  \vectgr{\xi}$ back to $\mathcal{M}_{\vect{r}}$ in \eqref{eq:retract.fixedrank}.  Even though a
  theoretical characterization of such neighborhoods remains an open research question, it has been shown
  in practice that small step sizes, found by methods such as line search for example, are sufficient to
  ensure a well-defined retraction~\cite{vandereycken2013low, steinlechner2016riemannian}; see
  also~\cref{sec:performance}.
\end{remark}

\begin{fact}[Positive definite matrices~{\cite[\S6.5.3]{RobbinSalamon:22}}]\label{example:pd}
  The set $\PD^{D_l}$ of all real-valued $D_l\times D_l$ positive-definite matrices is a Riemannian
  manifold, with tangent space $T_{ \vect{C} } \PD^{D_l} = \{ \vectgr{\Gamma} \in \Real^{D_l \times D_l}
  \given \vectgr{\Gamma}^{\intercal} = \vectgr{\Gamma} \} \eqqcolon \mathbb{S}^{ D_l }$, $\forall
  \vect{C} \in \PD^{D_l}$. The most popular Riemannian metric is the affine-invariant (AffI)
  one~\cite[(6.5.11)]{RobbinSalamon:22}: $\forall \vect{C} \in \PD^{D_l}$, $\forall (\vectgr{\Gamma}_1,
  \vectgr{\Gamma}_2) \in T_{\vect{C}}\PD^{D_l} \times T_{\vect{C}}\PD^{D_l}$,
  \begin{equation}
    \innerp{\vectgr{\Gamma}_{1}}{\vectgr{\Gamma}_{2}}^{\textnormal{AffI}}_{\vect{C}} \coloneqq
    \trace(\vect{C}^{-1} \vectgr{\Gamma}_{1} \vect{C}^{-1} \vectgr{\Gamma}_{2}) \,, \label{eq:ai.metric}
  \end{equation}
  where $\trace( \cdot )$ stands for the trace of a matrix. The Riemannian gradient \eqref{eq:def.rgrad}
  of a smooth function $\mathcal{L} \colon \PD^{D_l} \to \Real$ with respect to the AffI metric takes the
  following form~\cite[(11.40)]{boumal2023introduction}:
  \begin{align}
    \grad \mathcal{L}(\vect{C}) = \vect{C}\, \nabla \mathcal{L}(\vect{C}) \, \vect{C}
    \,, \label{eq:pd.rgrad}
  \end{align}
  where $\nabla \mathcal{L}(\vect{C}) \in \mathbb{S}^{ D_l }$ stands for the classical gradient of
  $\mathcal{L}$ at $\vect{C}$.

  The exponential map $\Exp_{\vect{C}} (\cdot) \colon T_{\vect{C}} \PD^{D_l} \to \PD^{D_l} \colon
  \vectgr{\Gamma} \mapsto \Exp_{\vect{C}} (\vectgr{\Gamma})$ under the AffI metric takes the following
  form~\cite[(6.5.18)]{RobbinSalamon:22}: $\forall \vect{C} \in \PD^{D_l}$, $\forall \vectgr{\Gamma} \in
  T_{\vect{C}}\PD^{D_l}$,
  \begin{align}
    \Exp_{ \vect{C} } ( \vectgr{\Gamma} )
    & = \vect{C}^{1/2} \exp (\vect{C}^{-1/2}\, \vectgr{\Gamma}\, \vect{C}^{-1/2})
    \vect{C}^{1/2}\,, \label{exp.AffI}
    % \\ \Exp^{\textnormal{BW}}_{ \vect{C} } ( \vectgr{\Gamma} )
    % & \coloneqq (L_{\vect{C}}(\vectgr{\Gamma}) + \vect{I}_{D_l} )\, \vect{C}\,
    % (L_{\vect{C}}(\vectgr{\Gamma}) +
    % \vect{I}_{D_l} ) \,, \label{exp.BusWas}
  \end{align}
  where $\exp(\cdot)$ stands for the classical matrix exponential~\cite[(2.5.4)]{RobbinSalamon:22}. Note
  here that an eigen-decomposition is usually needed to compute $\exp(\cdot)$.
\end{fact}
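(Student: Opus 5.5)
The plan is to verify the five assertions of \cref{example:pd} in order: the manifold structure, the tangent space, that the AffI formula is a Riemannian metric, the gradient formula \eqref{eq:pd.rgrad}, and the exponential map \eqref{exp.AffI}. Each step uses only elementary matrix calculus, with \cite{RobbinSalamon:22,boumal2023introduction} as the reference frame.

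\textbf{Manifold structure and tangent space.} First, $\PD^{D_l}$ is an open subset of the linear space $\mathbb{S}^{D_l}$, which has dimension $D_l(D_l+1)/2$. To see this, note that $\vect{C}\mapsto\lambda_{\min}(\vect{C})$ is continuous on $\mathbb{S}^{D_l}$; equivalently, Sylvester's criterion expresses $\PD^{D_l}$ by finitely many strict polynomial inequalities. An open subset of a Euclidean space is trivially an embedded smooth manifold, with the identity map as a global chart. Its tangent space at every point is the whole ambient space: for any $\vectgr{\Gamma}\in\mathbb{S}^{D_l}$, the curve $\gamma(t)=\vect{C}+t\vectgr{\Gamma}$ stays in $\PD^{D_l}$ for small $|t|$ and has $\dot\gamma(0)=\vectgr{\Gamma}$. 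Conversely, every curve in $\PD^{D_l}$ has symmetric velocity. Hence $T_{\vect{C}}\PD^{D_l}=\mathbb{S}^{D_l}$.

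\textbf{The AffI metric.} Bilinearity and symmetry of \eqref{eq:ai.metric} follow from cyclicity of the trace. Positive definiteness follows from the identity
\[
\trace(\vect{C}^{-1}\vectgr{\Gamma}\vect{C}^{-1}\vectgr{\Gamma})=\norm{\vect{C}^{-1/2}\vectgr{\Gamma}\vect{C}^{-1/2}}_{\textnormal{F}}^2 ,
\]
which is strictly positive for $\vectgr{\Gamma}\neq\vect{0}$. The metric depends smoothly on $\vect{C}$ because matrix inversion is smooth. One further property is needed later: for invertible $\vect{A}$, the map $\vect{C}\mapsto\vect{A}\vect{C}\vect{A}^\intercal$ is an isometry. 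This is checked by direct substitution into \eqref{eq:ai.metric}.

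\textbf{Riemannian gradient.} The Euclidean gradient satisfies $D\mathcal{L}(\vect{C})[\vectgr{\Gamma}]=\trace(\nabla\mathcal{L}(\vect{C})\vectgr{\Gamma})$ for all $\vectgr{\Gamma}\in\mathbb{S}^{D_l}$. Set $\vect{G}\coloneqq\vect{C}\nabla\mathcal{L}(\vect{C})\vect{C}$. Then $\vect{G}$ is symmetric, so $\vect{G}\in T_{\vect{C}}\PD^{D_l}$, and
\[
\trace(\vect{C}^{-1}\vect{G}\vect{C}^{-1}\vectgr{\Gamma})=\trace(\nabla\mathcal{L}(\vect{C})\vectgr{\Gamma}) .
\]
This is exactly \eqref{eq:def.rgrad} for the AffI metric. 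Since the metric is nondegenerate, the Riemannian gradient is unique, so $\grad\mathcal{L}(\vect{C})=\vect{G}$, which is \eqref{eq:pd.rgrad}.

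\textbf{Exponential map.} This is the main obstacle, because it requires the geodesic equation. I would derive that equation by varying the energy
\[
E(\gamma)=\tfrac12\int\trace(\gamma^{-1}\dot\gamma\gamma^{-1}\dot\gamma)\,dt .
\]
Perturb $\gamma\to\gamma+\epsilon\vect{H}$ with $\vect{H}$ vanishing at the endpoints, use $\delta(\gamma^{-1})=-\gamma^{-1}\vect{H}\gamma^{-1}$, and integrate by parts. The expected Euler--Lagrange equation is
\[
\ddot\gamma=\dot\gamma\,\gamma^{-1}\dot\gamma .
\]
Next I would verify the candidate $\gamma(t)=\vect{C}^{1/2}\exp(t\vect{M})\vect{C}^{1/2}$ with $\vect{M}\coloneqq\vect{C}^{-1/2}\vectgr{\Gamma}\vect{C}^{-1/2}$. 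It has $\gamma(0)=\vect{C}$ and $\dot\gamma(0)=\vectgr{\Gamma}$. Its derivatives are $\dot\gamma=\vect{C}^{1/2}\vect{M}e^{t\vect{M}}\vect{C}^{1/2}$ and $\ddot\gamma=\vect{C}^{1/2}\vect{M}^2e^{t\vect{M}}\vect{C}^{1/2}$. Because $\vect{M}$ commutes with $e^{t\vect{M}}$, the product $\dot\gamma\gamma^{-1}\dot\gamma$ collapses to $\ddot\gamma$, so the ODE holds. By uniqueness of solutions of the geodesic ODE, $\Exp_{\vect{C}}(\vectgr{\Gamma})=\gamma(1)$, which gives \eqref{exp.AffI}.

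The curve is defined for all $t$ and remains in $\PD^{D_l}$, since $e^{t\vect{M}}$ is symmetric positive definite. As a cross-check, one can transport the problem to the identity via the isometry $\vect{X}\mapsto\vect{C}^{-1/2}\vect{X}\vect{C}^{-1/2}$. At the identity the geodesic reduces to $t\mapsto e^{t\vect{M}}$, which again satisfies the ODE trivially.
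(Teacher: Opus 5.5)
Your proof is correct. It differs from the paper only because the paper gives no derivation: the statement is presented as a known fact, with the manifold structure, the AffI metric and the exponential map taken from \cite[\S6.5.3]{RobbinSalamon:22} and the gradient formula from \cite[(11.40)]{boumal2023introduction}.

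You replace these citations with first-principles checks:
\begin{enumerate}
\item Openness of $\PD^{D_l}$ in $\mathbb{S}^{D_l}$ gives the manifold structure and the tangent space at once.
\item The Riesz identity for \eqref{eq:ai.metric} reduces to a one-line trace computation.
\item For \eqref{exp.AffI}, you derive the geodesic equation $\ddot\gamma=\dot\gamma\gamma^{-1}\dot\gamma$ from the energy, verify the candidate curve, and invoke ODE uniqueness. When you pass from the first variation to the Euler--Lagrange equation, say explicitly that $\gamma^{-1}(\ddot\gamma-\dot\gamma\gamma^{-1}\dot\gamma)\gamma^{-1}$ is symmetric. Since you may only test against symmetric $\vect{H}$, this is what lets you conclude that the bracket vanishes.
\end{enumerate}

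The citation route is brief and places the formulas inside the broader Riemannian theory of those references. Your route is self-contained. It also makes explicit something the statement uses tacitly: geodesics exist for all $t$ and stay positive definite, so $\Exp_{\vect{C}}$ is defined on all of $T_{\vect{C}}\PD^{D_l}$, as the stated domain requires.

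One point is worth adding to your gradient step. Formula \eqref{eq:pd.rgrad} concerns the gradient taken within $\mathbb{S}^{D_l}$, which is exactly how you used $\nabla\mathcal{L}(\vect{C})$. If $\mathcal{L}$ is instead the restriction of a function on $\Real^{D_l\times D_l}$ whose Euclidean gradient $\vect{G}$ is not symmetric, the correct expression is $\vect{C}\,\tfrac{1}{2}(\vect{G}+\vect{G}^{\intercal})\,\vect{C}$. This is harmless in the paper, because the slices $\tilde{\vectcal{C}}_{\nu}(j,j',:,:)$ in \cref{prop.gradients} are symmetric.
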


\section{Proof of{~\cref{prop.gradients}}}\label{appendix:gradients}

The derivation of the Euclidean gradients are given as follows.
\begin{subequations}
  \begin{enumerate}
  \item The Euclidean gradient w.r.t.\ $\vect{C}_{\nu}$ is derived by the chain rule of
    differentiation as follows
    \begin{align}
      \frac{\partial \mathcal{L}}{\partial \vectcal{X}(i_1, \ldots, i_N)} & = \samp (\vectcal{X} -
      \vectcal{Y})(i_1, \ldots, i_N) + \frac{\partial \mathcal{R}}{\partial \vectcal{X}(i_1, \ldots,
        i_N)} = \vectgr{\Delta}(i_1, \ldots, i_N) \,, \\
      \frac{\partial \vectcal{X}(i_1, \ldots, i_N)}{\partial \vect{K}_{\nu}(j,j')} &=
      \vectcal{U}_{\nu}(i_1, \ldots, i_{m}, j) \vectcal{V}_{\nu}(j', i_{m+1}, \ldots, i_N) \,, \\
      \frac{\partial \mathcal{L}}{\partial \vect{K}_{\nu}(j,j')} &= \sum_{i_1,\ldots,i_N} \frac{\partial
        \mathcal{L}}{\partial \vectcal{X}(i_1,\ldots,i_N)} \frac{\partial
        \vectcal{X}(i_1,\ldots,i_N)}{\partial \vect{K}_{\nu}(j,j')} \\
      &= \sum_{i_1,\ldots,i_N} \vectgr{\Delta}(i_1,\ldots,i_N) \vectcal{U}_{\nu}(i_1, \ldots, i_{m}, j)
      \vectcal{V}_{\nu}(j', i_{m+1}, \ldots, i_N) \\
      &= \left( \vectgr{\Delta} \times_{1,\ldots,m}^{1,\ldots,m}
      \vectcal{U}_{\nu}(:,\ldots,:,j) \right) \times_{1,\ldots,N-m}^{2,\ldots,N-m+1}
      \vectcal{V}_{\nu}(j',:,\ldots,:) \\
      &= \tilde{\vect{K}}_{\nu}(j,j') \,, \\
      \frac{\partial \vect{K}_{\nu}(j,j')}{\partial \vect{C}_{\nu}} &= \vect{K}_{\nu}(j,j')
      \frac{\partial}{\partial \vect{C}_{\nu}} \left( -\frac{1}{2} {(\mathbfit{l}_j^{(\nu)} -
        \mathbfit{l}_{j'}^{(\nu)})}^\intercal (\vect{C}_{\nu})^{-1} (\mathbfit{l}_j^{(\nu)} -
      \mathbfit{l}_{j'}^{(\nu)}) \right) \\
      &= \frac{1}{2} \vect{K}_{\nu}(j,j') (\vect{C}_{\nu})^{-1} (\mathbfit{l}_j^{(\nu)} -
      \mathbfit{l}_{j'}^{(\nu)}) (\mathbfit{l}_j^{(\nu)} -
      \mathbfit{l}_{j'}^{(\nu)})^\intercal (\vect{C}_{\nu})^{-1} \\
      &= \tilde{\vectcal{C}}_{\nu}(j,j',:,:) \,, \\
      \nabla_{\vect{C}_{\nu}} \mathcal{L} (\vectgr{\Theta}) &= \sum_{j,j'} \frac{\partial
        \mathcal{L}}{\partial \vect{K}_{\nu}(j,j')}
      \frac{\partial \vect{K}_{\nu}(j,j')}{\partial \vect{C}_{\nu}} \\
      &= \sum_{j,j'} \tilde{\vect{K}}_{\nu}(j,j') \tilde{\vectcal{C}}_{\nu}(j,j',:,:) \\
      &= \tilde{\vect{K}}_{\nu} \times_{1,2}^{1,2}
      \tilde{\vectcal{C}}_{\nu} \,.
    \end{align}
  \item Notice that the map $\vectcal{U}_{\nu,p} \mapsto (\vectcal{U}_{\nu,p} \odot \vectcal{U}_{\nu,\neq p}) \times^1 \vect{K}_{\nu} \times^1 \vectcal{V}_{\nu}$ is a linear map w.r.t. $\vectcal{U}_{\nu,p}$ when $\vectcal{U}_{\nu,\neq p}$, $\vectcal{V}_{\nu}$, and $\vect{K}_{\nu}$ are considered fixed. Therefore, the Euclidean gradient
    w.r.t.\ $\vectcal{U}_{\nu,p}$ is
    \begin{align}
      \nabla_{\vectcal{U}_{\nu,p}} \mathcal{L}(\vectgr{\Theta}) &= \nabla_{\vectcal{U}_{\nu,p}} (F +
      \mathcal{R})(\vectcal{U}_{\nu,p})  + \lambda \vectcal{U}_{\nu,p} \\
      &= \left( \nabla_{\vectcal{X}} (F+\mathcal{R})(\vectcal{X}) \times^{m+1,\ldots,N}_{m+1,\ldots,N}
      \vectcal{V}_{\nu} \times^1 \vect{K}_{\nu} \right) \odot \vectcal{U}_{\nu,\neq p} + \lambda
      \vectcal{U}_{\nu,p} \\
      &= \left( \vectgr{\Delta} \times^{m+1,\ldots,N}_{m+1,\ldots,N} \vectcal{V}_{\nu} \times^1
      \vect{K}_{\nu} \right) \odot \vectcal{U}_{\nu,\neq p} + \lambda \vectcal{U}_{\nu,p} \,.
    \end{align}
    By similar arguments, the Euclidean gradient w.r.t.\ $\vectcal{V}_{\nu,q}$ is
    \begin{align}
      \nabla_{\vectcal{V}_{\nu,q}} \mathcal{L}(\vectgr{\Theta}) &= \nabla_{\vectcal{V}_{\nu,q}} (F +
      \mathcal{R})(\vectcal{V}_{\nu,q})  + \lambda \vectcal{V}_{\nu,q} \\
      &= \left[ \vect{K}_{\nu} \times^1 \left(
        \vectcal{U}_{\nu} \times^{1,\ldots,m}_{1,\ldots,m} \nabla_{\vectcal{X}}
        (F+\mathcal{R})(\vectcal{X}) \right) \right] \odot
      \vectcal{V}_{\nu,\neq q} + \lambda \vectcal{V}_{\nu,q} \\
      &= \left[ \vect{K}_{\nu} \times^1 \left(
        \vectcal{U}_{\nu} \times^{1,\ldots,m}_{1,\ldots,m} \vectgr{\Delta} \right) \right] \odot
      \vectcal{V}_{\nu,\neq q} + \lambda \vectcal{V}_{\nu,q} \,.
    \end{align}
  \end{enumerate}
\end{subequations}

\end{document}

%% file: preamble.tex
\usepackage{graphicx}
\graphicspath{ {./figs+imgs/} }
\DeclareGraphicsExtensions{.pdf,.png,.jpg}

\usepackage[no-math]{fontspec}

\usepackage{url}
\usepackage{amsmath}
\usepackage{amsthm,amssymb,amsfonts,mathrsfs,mathtools,amsbsy}
\usepackage{bm}
\usepackage{stmaryrd}
\usepackage{algorithmic}
\usepackage{algorithm}
\usepackage{array}
\usepackage[hidelinks]{hyperref}
\usepackage{siunitx}
\allowdisplaybreaks
\usepackage{cuted}
\usepackage{xspace}
\usepackage{setspace} % For setting the line spacing
\usepackage{geometry}
\usepackage[inline]{enumitem}

\usepackage{caption}
\usepackage{subfig}
\usepackage{graphicx}
\usepackage{multirow}
\usepackage{multicol}
\usepackage{threeparttable}
\usepackage{balance}

\usepackage{todonotes}

\usepackage{pgfplots}
\pgfplotsset{compat=newest}
\usepgflibrary[plotmarks]
\usepgfplotslibrary{fillbetween}
\usetikzlibrary{angles, arrows, arrows.meta, shapes, tikzmark, patterns}

\definecolor{brightlavender}{rgb}{0.75,0.58,0.89}
\definecolor{byzantine}{rgb}{0.74,0.2,0.64}
\definecolor{forestgreen}{rgb}{0.13,0.55,0.13}
\definecolor{brickred}{rgb}{0.8,0.25,0.33}
\definecolor{royalblue}{rgb}{0.25,0.41,0.88}
\definecolor{goldenrod}{rgb}{0.85,0.65,0.13}

\definecolor{myblue}{RGB}{31,119,180}
\definecolor{myorange}{RGB}{255,127,14}
\definecolor{mygreen}{RGB}{44,160,44}
\definecolor{myred}{RGB}{214,39,40}
\definecolor{mypurple}{RGB}{148,103,189}
\definecolor{mybrown}{RGB}{140,86,75}
\definecolor{mypink}{RGB}{227,119,194}
\definecolor{mygray}{RGB}{127,127,127}
\definecolor{myolive}{RGB}{188,189,34}

\pgfdeclarelayer{background}
\pgfdeclarelayer{fg}    % ...foreground layer
\pgfdeclarelayer{ffg}   % ...foreforeground layer?
\pgfsetlayers{background,main,fg,ffg}  % set the order of the layers (main is the standard layer)

\usepackage{thmtools}
\usepackage[unq]{unique}

\declaretheoremstyle[%
  headfont=\normalfont\bfseries,
  notefont=\mdseries,
  notebraces={(}{)},
  bodyfont=\normalfont,
  postheadspace=1ex%,
]{mystyle}

\declaretheorem[style=mystyle,
  name=Theorem,
  refname={theorem,theorems},
  Refname={Theorem,Theorems},
]{thm}

\declaretheorem[style = mystyle,
  name = Proposition,
  refname = {proposition, propositions},
  Refname = {Proposition, Propositions},
  numberlike=thm,
]{proposition}

\declaretheorem[style=mystyle,
  name=Modeling~Assumptions,
  refname={assumptions},
  Refname={Assumptions},
  numberlike=thm,
]{assumptions}

\declaretheorem[style=mystyle,
  name=Definition,
  refname={definition,definitions},
  Refname={definition,definitions},
  numberlike=thm,
]{definition}

\declaretheorem[style=mystyle,
  name=Fact,
  refname={fact,facts},
  Refname={fact,facts},
  numberlike=thm,
]{fact}

\declaretheorem[style=mystyle,
  name=Remark,
  refname={remark,remarks},
  Refname={remark,remarks},
  numberlike=thm,
]{remark}

\newlist{assslist}{enumerate}{1}
\setlist[assslist]{label=\textbf{(\roman{*})}, ref=\theassumptions(\roman{*}), noitemsep}

\usepackage[capitalize]{cleveref}

\crefname{thm}{Theorem}{Theorems}
\crefname{prop}{Proposition}{Propositions}
\crefname{assumptions}{Modeling~Assumptions}{Modeling~Assumptions}
\crefname{lemma}{Lemma}{Lemmata}
\crefname{definition}{Definition}{Definitions}
\crefname{example}{Example}{Examples}
\crefname{algo}{Algorithm}{Algorithms}
\crefname{fact}{Fact}{Facts}
\crefname{claim}{Claim}{Claims}
\crefname{coroll}{Corollary}{Corollaries}
\crefname{figure}{Figure}{Figures}
\crefname{section}{Section}{Sections}

\crefname{thmlisti}{Theorem}{Theorems}
\crefname{lemlisti}{Lemma}{Lemmata}
\crefname{proplisti}{Proposition}{Propositions}
\crefname{asslisti}{Modeling~Assumption}{Modeling~Assumptions}
\crefname{assslisti}{Modeling~Assumption}{Modeling~Assumptions}
\crefname{deflisti}{Definition}{Definitions}
\crefname{exlisti}{Example}{Examples}
\crefname{algolisti}{Algorithm}{Algorithms}
\crefname{factlisti}{Fact}{Facts}
\crefname{claimlisti}{Claim}{Claims}
\crefname{MyEnumSeci}{}{}
\crefname{MyEnumSubSeci}{}{}

\colorlet{tableMulti}{red!20}
\colorlet{tableSingle}{cyan!30}

\newcommand{\IntegerP}{\mathbb{N}}
\newcommand{\IntegerPP}{\mathbb{N}_*}
\newcommand{\Real}{\mathbb{R}}

\newcommand{\RealPP}{\mathbb{R}_{++}}

\newcommand{\PD}{\mathbb{S}_{++}}

\newcommand\given{{\mathbin{}\mid\mathbin{}}}
\newcommand\vect[1]{\mathbf{#1}}
\newcommand\vectgr[1]{\bm{#1}}
\newcommand\vectcal[1]{\mathbfcal{#1}}

\newcommand{\samp}{\mathop{P_{\Omega}}}

\DeclareMathOperator{\rank}{rank}

\DeclareMathOperator{\cranktt}{crank_{TT}}
\DeclareMathOperator{\tovec}{vec}
\DeclareMathOperator{\grad}{grad}

\DeclarePairedDelimiter\norm{\lVert}{\rVert}
\DeclarePairedDelimiterX\innerp[2]{\langle}{\rangle}{#1
  \mathop{}\delimsize\vert\mathop{} #2}

\providecommand\given{} % so it exists
\newcommand\SetSymbol[1][]{
  \nonscript\,#1\vert \allowbreak \nonscript\,\mathopen{}}
\DeclarePairedDelimiterX\Set[1]{\lbrace}{\rbrace}%
{ \renewcommand\given{\SetSymbol[\delimsize]} #1 }

\DeclareMathAlphabet\mathbfcal{OMS}{cmsy}{b}{n}
\DeclareMathAlphabet\mathbfit{OML}{cmm}{b}{it}
\DeclareMathAlphabet\mathbfscr{OMS}{mdugm}{b}{n}

\DeclareMathOperator{\trace}{tr}

\DeclareMathOperator{\Exp}{Exp}

\DeclarePairedDelimiter{\ceil}{\lceil}{\rceil}

\makeatletter
\newcommand*{\ie}{%
  \@ifnextchar{,}%
  {i.e.}%
  {i.e.,\@\xspace}%
}
\newcommand*{\eg}{%
  \@ifnextchar{,}%
  {e.g.}%
  {e.g.,\@\xspace}%
}
\newcommand*{\cf}{%
  \@ifnextchar{,}%
  {cf.}%
  {cf.,\@\xspace}%
}
\makeatother

\usepackage[mathlines, switch]{lineno}

\newcommand*\patchAmsMathEnvironmentForLineno[1]{%
  \expandafter\let\csname old#1\expandafter\endcsname\csname #1\endcsname
  \expandafter\let\csname oldend#1\expandafter\endcsname\csname end#1\endcsname
  \renewenvironment{#1}%
  {\linenomath\csname old#1\endcsname}%
  {\csname oldend#1\endcsname\endlinenomath}}%
\newcommand*\patchBothAmsMathEnvironmentsForLineno[1]{%
  \patchAmsMathEnvironmentForLineno{#1}%
  \patchAmsMathEnvironmentForLineno{#1*}}%
\AtBeginDocument{%
  \patchBothAmsMathEnvironmentsForLineno{equation}%
  \patchBothAmsMathEnvironmentsForLineno{align}%
  \patchBothAmsMathEnvironmentsForLineno{flalign}%
  \patchBothAmsMathEnvironmentsForLineno{alignat}%
  \patchBothAmsMathEnvironmentsForLineno{gather}%
  \patchBothAmsMathEnvironmentsForLineno{multline}%
}

\usepackage{stackengine}
\stackMath
\newlength\matfield
\newlength\tmplength

%% file: figs+imgs/graph_example.tex
\begin{minipage}{0.5\linewidth}
    \centering
\begin{tikzpicture}[>=latex',thick]

    \node (v1) at (2,2) [draw, circle, fill=white] {$n_1$};
    \node (v2) at (4,0) [draw, circle, fill=white] {$n_2$};
    \node (v3) at (2,-2) [draw, circle, fill=white] {$n_3$};
    \node (v4) at (0,0) [draw, circle, fill=white] {$n_4$};

    \draw[->] (v1) edge node [black,sloped,anchor=south] {$e_1$} (v2);
    \draw[->] (v2) edge node [black,sloped,below] {$e_2$} (v3);
    \draw[->] (v3) edge node [black,sloped,below] {$e_3$} (v4);
    \draw[->] (v2) edge node [black,sloped,anchor=south] {$e_4$} (v4);

    \node at (barycentric cs:v2=1,v3=1,v4=1) {$\tau_1$
    };

    \draw[-{Stealth[length=3pt, width=3pt]}, thick] (barycentric cs:v2=1,v3=2,v4=1) arc (-90:-360:0.37cm);

    \begin{pgfonlayer}{background}
        \fill[fill=lightgray, fill opacity=0.6] (v2.center) to (v3.center) to (v4.center);
    \end{pgfonlayer}

\end{tikzpicture}
\end{minipage}%
% \hspace{-6.66cm}
\begin{minipage}{0.5\linewidth}

\[
G=({V}, {E})
\]
\[
{V}=\Set{n_i}_{i=1}^4 \,, 
{E}=\Set{e_j}_{j=1}^4 \,,
\mathcal{T}=\{\tau_1\}
\]

Incidence matrices:
\[
\vect{B}_1 = \begin{bmatrix}
-1 & 0 & 0 & 0 \\
1 & -1 & 0 & -1 \\
0 & 1 & -1 & 0\\
0 & 0 & 1 & 1
\end{bmatrix}, \,
\vect{B}_2 = \begin{bmatrix}
0 \\
1 \\
1 \\
-1 \\
\end{bmatrix} 
\]

% Hodge Laplacians:
% \begin{align}
% \vect{L}_0 = \vect{B}_1 \vect{B}_1^\intercal \,,
% \vect{L}_1 = \vect{B}_1^\intercal \vect{B}_1 + \vect{B}_2 \vect{B}_2^\intercal \label{eq:sc.lap}
% \end{align}

\end{minipage}

%% file: bib/ref.bib
@STRING{IEEETSP = "IEEE Trans.\ Signal Process."}

@STRING{ICASSP = "Proc.~IEEE ICASSP"}

@STRING{IEEE_J_ITS = "{IEEE} Trans. Intell. Transp. Syst."}

@ARTICLE{multilkrim,
  author={Nguyen, Duc Thien and Slavakis, Konstantinos},
  journal={IEEE Open J. Signal Process.},
  title={Multilinear Kernel Regression and Imputation via Manifold Learning},
month = nov,
  year={2024},
  volume={5},
  pages={1073--1088}
  }

@inproceedings{yang2022simpnn,
  title={Simplicial convolutional neural networks},
  author={Yang, Maosheng and Isufi, Elvin and Leus, Geert},
  booktitle=ICASSP,
  pages={8847--8851},
  year={2022}
}

@inproceedings{roddenberry2021principled,
  title={Principled simplicial neural networks for trajectory prediction},
  author={Roddenberry, T Mitchell and Glaze, Nicholas and Segarra, Santiago},
  booktitle={Proc.~ICML},
  pages={9020--9029},
  year={2021},
  organization={PMLR}
}

@inproceedings{ebli2020simplicial,
  title={Simplicial Neural Networks},
  author={Ebli, Stefania and Defferrard, Micha{\"e}l and Spreemann, Gard},
  booktitle={Topological Data Analysis and Beyond workshop at NeurIPS},
year={2020},
}

@ARTICLE{wu2023simplicial,
  author={Wu, Hanrui and Yip, Andy and Long, Jinyi and Zhang, Jia and Ng, Michael K.},
  journal={IEEE Transactions on Pattern Analysis and Machine Intelligence}, 
  title={Simplicial Complex Neural Networks}, 
  year={2024},
  volume={46},
  number={1},
  pages={561-575},
}

@article{schaub2021signal,
  title={Signal processing on higher-order networks: Livin' on the edge... and beyond},
  author={Schaub, Michael T and Zhu, Yu and Seby, Jean-Baptiste and Roddenberry, T Mitchell and Segarra, Santiago},
  journal={Signal Process.},
  volume={187},
  month = oct,
  year={2021}
}

@article{battiston2020networks,
  title={Networks beyond pairwise interactions: Structure and dynamics},
  author={Battiston, Federico and Cencetti, Giulia and Iacopini, Iacopo and Latora, Vito and Lucas, Maxime and Patania, Alice and Young, Jean-Gabriel and Petri, Giovanni},
  journal={Phys. Rep.},
  volume={874},
  pages={1--92},
  year={2020}
}

@article{krishnan2024simplicial,
  author={Krishnan, Joshin and Money, Rohan and Beferull-Lozano, Baltasar and Isufi, Elvin},
  journal=IEEETSP,
  title={Simplicial Vector Autoregressive Models},
month = nov,
  year={2024},
  volume={72},
  pages={5454--5469}
}

@inproceedings{money2024evolution,
  title={Evolution Backcasting of Edge Flows From Partial Observations Using Simplicial Vector Autoregressive Models},
  author={Money, Rohan and Krishnan, Joshin and Beferull-Lozano, Baltasar and Isufi, Elvin},
  booktitle=ICASSP,
address = "Seoul, Korea",
month = apr,
  year={2024}
}

@article{lim2020hodge,
  title={{Hodge Laplacians on graphs}},
  author={Lim, Lek-Heng},
  journal={SIAM Rev.},
  volume={62},
  number={3},
  pages={685--715},
  year={2020},
  publisher={SIAM}
}

@article{barbarossa2020topological,
  title={Topological signal processing over simplicial complexes},
  author={Barbarossa, Sergio and Sardellitti, Stefania},
  journal=IEEETSP,
  volume={68},
  pages={2992--3007},
month = mar,
  year={2020}
}

@article{kimeldorf1971some,
  title={Some results on {T}chebycheffian spline functions},
  author={Kimeldorf, George and Wahba, Grace},
  journal={J. Math. Anal. Appl.},
  volume={33},
  number={1},
  pages={82--95},
month = jan,
  year={1971}
}

@book{scholkopf2002learning,
  title={Learning with Kernels},
  author={Sch\"{o}lkopf, Bernhard and Smola, Alexander J},
  year={2002},
  publisher={MIT Press},
address = "Cambridge, MA"
}

@techreport{de2004sparse,
  title={Sparse multidimensional scaling using landmark points},
  author={De Silva, Vin and Tenenbaum, Joshua B},
month = jun,
  year={2004},
  institution={Stanford University},
url = "https://graphics.stanford.edu/courses/cs468-05-winter/Papers/Landmarks/Silva_landmarks5.pdf"
}

@article{aronszajn1950theory,
  title={Theory of reproducing kernels},
  author={Aronszajn, Nachman},
  journal={Trans.\ Amer. Math. Soc.},
  volume={68},
  number={3},
  pages={337--404},
  year={1950}
}

@article{roweis2000nonlinear,
  title={Nonlinear dimensionality reduction by locally linear embedding},
  author={Roweis, Sam T and Saul, Lawrence K},
  journal={Science},
  volume={290},
  number={5500},
  pages={2323--2326},
  year={2000},
  publisher={American Association for the Advancement of Science}
}

@article{kolda2009tensor,
  title={Tensor decompositions and applications},
  author={Kolda, Tamara G and Bader, Brett W},
  journal={SIAM Rev.},
  volume={51},
  number={3},
  pages={455--500},
  year={2009},
  publisher={SIAM}
}

@Book{Gyorfi:DistrFree:10,
  author = {Gy\"{o}rfi, L\'{a}szl\'{o} and Kohler, Michael and Krzy\.{z}ak, Adam and Walk, Harro},
  title = {A Distribution-Free Theory of Nonparametric Regression},
  publisher = {Springer},
  year = {2010},
  address = {New~York}
}

@Book{RobbinSalamon:22,
  author = {Robbin, Joel W and Salamon, Dietmar A},
  title = {Introduction to Differential Geometry},
  publisher = {Springer},
  year = {2022},
  address = {Berlin}
}

@article{absil2012projection,
  title={Projection-like retractions on matrix manifolds},
  author={Absil, P-A and Malick, J{\'e}r{\^o}me},
  journal={SIAM Journal on Optimization},
  volume={22},
  number={1},
  pages={135--158},
  year={2012},
  publisher={SIAM}
}

@inproceedings{roddenberry2023signal,
  title={Signal processing on product spaces},
  author={Roddenberry, T Mitchell and Grande, Vincent P and Frantzen, Florian and Schaub, Michael T and Segarra, Santiago},
  booktitle=ICASSP,
address = "Rhodes Island, Greece",
month = jun,
  year={2023},
}

@misc{transportation_networks,
  author = {{Transportation Networks for Research Core Team}},
  title  = {Transportation Networks for Research},
  url = {https://github.com/bstabler/TransportationNetworks},
  note = {Accessed: Nov.~23, 2024},
  year = ""
}

@article{seo2025uxsim,
year = {2025},
volume = {10},
number = {106},
month = feb,
author = {Seo, Toru},
title = {{UXsim}: Lightweight mesoscopic traffic flow simulator in pure {P}ython},
journal = {J. Open Source Softw.}
}

@inproceedings{papamarkou2024position,
  title={Position: Topological Deep Learning is the New Frontier for Relational Learning},
  author={Papamarkou, Theodore and Birdal, Tolga and Bronstein, Michael M and Carlsson, Gunnar E and Curry, Justin and Gao, Yue and Hajij, Mustafa and Kwitt, Roland and Lio, Pietro and Di Lorenzo, Paolo and others},
  booktitle={Proc.~{ICML}},
  address = "Vienna, Austria",
  month = jul,
  year={2024}
}

@article{vandereycken2013low,
  title={Low-rank matrix completion by {R}iemannian optimization},
  author={Vandereycken, Bart},
  journal={SIAM Journal on Optimization},
  volume={23},
  number={2},
  pages={1214--1236},
  year={2013},
  publisher={SIAM}
}

@book{absil2008manifold,
 ISBN = {9780691132983},
 author = {P.-A. Absil and R. Mahony and R. Sepulchre},
 publisher = {Princeton University Press},
 title = {Optimization Algorithms on Matrix Manifolds},
 year = {2008}
}

@book{boumal2023introduction,
  title={An introduction to optimization on smooth manifolds},
  author={Boumal, Nicolas},
  year={2023},
  publisher={Cambridge University Press}
}

@article{nguyen2025imputation,
  author={Nguyen, Duc Thien and Slavakis, Konstantinos and Pados, Dimitris},
  title = {Imputation of time-varying edge flows in graphs by multilinear kernel regression and manifold learning},
  journal = {Signal Process.},
  volume = {237},
month = dec,
  year = {2025}
}

@inproceedings{nguyen:icassp2026,
  author = {Nguyen, Duc Thien and Slavakis, Konstantinos and Kofidis, Eleftherios and Pados, Dimitris},
  title = {Kernel regression via tensor trains with {H}adamard overparametrization and imputation of dynamic graph edge flows},
  month = may,
  year = 2026,
  address = {Barcelona, Spain},
  booktitle = ICASSP
}

@INPROCEEDINGS{nguyen:apsipa25,
  author={Nguyen, Duc Thien and Slavakis, Konstantinos and Pados, Dimitris},
  booktitle={2025 Asia Pacific Signal and Information Processing Association Annual Summit and Conference (APSIPA ASC)}, 
  title={{Estimating dynamic graph flows with kernel models and Hadamard-structured Riemannian constraints}}, 
  year={2025},
  volume={},
  number={},
  pages={1538-1543},
  doi={10.1109/APSIPAASC65261.2025.11249158}
}

@misc{kolb2024smoothing,
  title={Smoothing the Edges: Smooth Optimization for Sparse Regularization using {H}adamard Overparametrization},
  author={Kolb, Chris and M{\"u}ller, Christian L and Bischl, Bernd and R{\"u}gamer, David},
  howpublished = "arXiv:2307.03571v3 [cs.LG]",
month = apr,
  year = {2024}
}

@INPROCEEDINGS{rodd2019hodgenet,
  author={Roddenberry, T Mitchell and Segarra, Santiago},
  booktitle={Proc.~ACSSC},
  title={Hodge{N}et: Graph Neural Networks for Edge Data},
address = "Pacific Grove, CA",
  year={2019},
month = nov
}

@article{hoff2017lasso,
  title={{LASSO,} fractional norm and structured sparse estimation using a {H}adamard product parametrization},
  author={Hoff, Peter D},
  journal={Comput. Stat. \& Data Anal.},
  volume={115},
  pages={186--198},
month = nov,
  year={2017}
}

@article{li2023tail,
  title={The tail-{H}adamard product parametrization algorithm for compressed sensing},
  author={Li, Guangxiang and Li, Shidong and Li, Dequan and Ma, Chi},
  journal={Signal Process.},
  volume={205},
  pages={108853},
  year={2023}
}

@inproceedings{ziyin2023spred,
  title={{SPRED:} Solving {$L_1$} Penalty with {SGD}},
  author={Ziyin, Liu and Wang, Zihao},
  booktitle={Proc.~ICML},
address = "Honolulu, HI",
month = jul,
  year={2023},
  organization={PMLR}
}

@article{cichocki2016tensor,
  title={{Tensor networks for dimensionality reduction and large-scale optimization---Part~1: Low-rank tensor decompositions}},
  author={Cichocki, Andrzej and Lee, Namgil and Oseledets, Ivan and Phan, Anh-Huy and Zhao, Qibin and Mandic, Danilo P},
  journal={Foundations and Trends in Machine Learning},
  volume={9},
  number={4--5},
  pages={249--429},
  year={2016},
  publisher={Now {P}ubl.}
}

@article{chen2020tensor,
  title={{Tensor ring decomposition: Optimization landscape and one-loop convergence of alternating least squares}},
  author={Chen, Ziang and Li, Yingzhou and Lu, Jianfeng},
  journal={SIAM J. Matrix Anal. Appl.},
  volume={41},
  number={3},
  pages={1416--1442},
  year={2020}
}

@article{gao2024riemannian,
  title={Riemannian preconditioned algorithms for tensor completion via tensor ring decomposition},
  author={Gao, Bin and Peng, Renfeng and Yuan, Ya-Xiang},
  journal={Comput. Optim. Appl.},
  volume={88},
  number={2},
  pages={443--468},
  year={2024}
}

@article{holtz2012manifolds,
  title={On manifolds of tensors of fixed {TT}-rank},
  author={Holtz, Sebastian and Rohwedder, Thorsten and Schneider, Reinhold},
  journal={Numer. Math.},
  volume={120},
  number={4},
  pages={701--731},
  year={2012},
  publisher={Springer}
}

@article{lubich2015time,
  title={Time integration of tensor trains},
  author={Lubich, Christian and Oseledets, Ivan V and Vandereycken, Bart},
  journal={SIAM Journal on Numerical Analysis},
  volume={53},
  number={2},
  pages={917--941},
  year={2015},
  publisher={SIAM}
}

@phdthesis{TTthesis2022,
	author		= "Domitilla Brandoni",
	title		= "Tensor-Train decomposition for image classification problems",
	school		= "University of Bologna",
	year		= "2022",
url = "https://amsdottorato.unibo.it/id/eprint/10121/3/phd_thesis_DomitillaBrandoni_final.pdf"
}

@misc{vanloan2015,
author = "C. F. {Van~Loan}",
title = {{The Tucker and Tensor Train decompositions}},
howpublished = "CIME-EMS Summer School",
month = jun,
year = 2015,
address = "Cetraro, Italy",
url = "https://www.dm.unibo.it/~simoncin/CIME/vanloan3.pdf"
}

@article{oseledets2011tensor,
  title={Tensor-train decomposition},
  author={Oseledets, Ivan V},
  journal={SIAM J.\ Sci.\ Comput.},
  volume={33},
  number={5},
  pages={2295--2317},
  year={2011},
  publisher={SIAM}
}

@misc{zhao2016tensor,
  title={Tensor ring decomposition},
  author={Zhao, Qibin and Zhou, Guoxu and Xie, Shengli and Zhang, Liqing and Cichocki, Andrzej},
  howpublished = "arXiv:1606.05535 [cs.NA]",
  month = jun,
  year = {2016}
}

@ARTICLE{yu2025robust,
  author = {Linfang Yu and Chenyu Guan and Hao Wang and Yuxin He and Wenming Cao and Chi-Sing Leung},
  journal = IEEE_J_ITS,
  title = {Robust tensor ring decomposition for urban traffic data imputation},
  year = {2025},
  volume = {26},
  number = {6},
  pages = {8707--8719},
  month = jun
}

@article{steinlechner2016riemannian,
  title={Riemannian optimization for high-dimensional tensor completion},
  author={Steinlechner, Michael},
  journal={SIAM J. Sci. Comput.},
  volume={38},
  number={5},
  year={2016}
}

@article{song2019,
author = "Qingquan Song and Hancheng Ge and James Caverlee and Xia Hu",
title = "Tensor Completion Algorithms in Big Data Analytics",
journal = "ACM Trans. Knowl. Disc. Data",
volume = 13,
number = 1,
pages = "1--48",
month = jan,
year = 2019
}

@inproceedings{bazerque2012nonparametric,
  title={Nonparametric low-rank tensor imputation},
  author={Bazerque, Juan Andr{\'e}s and Mateos, Gonzalo and Giannakis, Georgios B},
  booktitle={Proc.~{SSP}},
address = "Ann Arbor, MI",
month = aug,
  year={2012}
}

@article{tt-AdaliGroup:21,
  title={{Taking the 4D nature of fMRI data into account promises significant gains in data completion}},
  author={Belyaeva, Irina and Bhinge, Suchita and Long, Qunfang and Adal{\i}, T{\"u}lay},
  journal={IEEE Access},
  volume={9},
  pages={145334--145362},
  year={2021}
}

@article{Sidiropoulos:ieeeTSP:17,
  title={Tensor Decomposition for Signal Processing and Machine Learning},
  author={Sidiropoulos, Nicholas D and De Lathauwer, Lieven and Fu, Xiao and Huang, Kejun and Papalexakis, Evangelos E and Faloutsos, Christos},
  journal=IEEETSP,
  volume={65},
  number={13},
  pages={3551--3582},
  year={2017},
  doi={10.1109/TSP.2017.2690524}
}

@article{gorodetsky2018gradient,
  title={Gradient-based optimization for regression in the functional tensor-train format},
  author={Gorodetsky, Alex A and Jakeman, John D},
  journal={J. Comput. Phys.},
  volume={374},
  pages={1219--1238},
  year={2018},
  publisher={Elsevier},
  doi={10.1016/j.jcp.2018.08.010}
}

@inproceedings{zhao2013kernel,
  title={Kernel-based tensor partial least squares for reconstruction of limb movements},
  author={Zhao, Qibin and Zhou, Guoxu and Adal{\i}, T{\"u}lay and Zhang, Liqing and Cichocki, Andrzej},
  booktitle=ICASSP,
  address = "Vancouver, Canada",
  month = may,
  year={2013},
}

@article{huang2025kernel,
  title={Kernel {B}ayesian tensor ring decomposition for multiway data recovery},
  author={Huang, Zhenhao and Zhou, Guoxu and Qiu, Yuning and Chen, Xinqi and Zhao, Qibin},
  journal={Neural Netw.},
  pages={107500},
  year={2025},
  publisher={Elsevier},
  doi={10.1016/j.neunet.2025.107500}
}

@Book{giblin:10,
  author = {Giblin, P J},
  title = {Graphs, Surfaces, and Homology},
  year = {2010},
  edition = {3},
  publisher = {Cambridge University Press},
  location = {New York}
}

@incollection{uschmajew2020geometric,
  title={Geometric methods on low-rank matrix and tensor manifolds},
  author={Uschmajew, Andr{\'e} and Vandereycken, Bart},
  booktitle={Handbook of Variational Methods for Nonlinear Geometric Data},
  pages={261--313},
  year={2020},
  publisher={Springer}
}

@article{larsen2024tensor,
  title={Tensor decomposition meets {RKHS}: Efficient algorithms for smooth and misaligned data},
  author={Larsen, Brett W and Kolda, Tamara G and Zhang, Anru R and Williams, Alex H},
  journal={arXiv preprint arXiv:2408.05677},
  year={2024}
}

@article{bellec2015impact,
  title={{Impact of the resolution of brain parcels on connectome-wide association studies in fMRI}},
  author={Bellec, Pierre and Benhajali, Yassine and Carbonell, Felix and Dansereau, Christian and Albouy, Genevi{\`e}ve and Pelland, Maxime and Craddock, Cameron and Collignon, Oliver and Doyon, Julien and Stip, Emmanuel and others},
  journal={Neuroimage},
  volume={123},
  pages={212--228},
  year={2015},
  publisher={Elsevier}
}

@article{chang2019bold5000,
  title={{BOLD5000, a public fMRI dataset while viewing 5000 visual images}},
  author={Chang, Nadine and Pyles, John A and Marcus, Austin and Gupta, Abhinav and Tarr, Michael J and Aminoff, Elissa M},
  journal={Sci. Data},
  volume={6},
  number={1},
  pages={49},
  year={2019},
  publisher={Nature Publishing Group, London, UK}
}

@article{esteban2019fmriprep,
  title={{fMRIPrep: a robust preprocessing pipeline for functional MRI}},
  author={Esteban, Oscar and Markiewicz, Christopher J and Blair, Ross W and Moodie, Craig A and Isik, A Ilkay and Erramuzpe, Asier and Kent, James D and Goncalves, Mathias and DuPre, Elizabeth and Snyder, Madeleine and others},
  journal={Nature Meth.},
  volume={16},
  number={1},
  pages={111--116},
  year={2019},
  publisher={Nature Publishing Group, New York, USA}
}

@inproceedings{yang2024hodge,
  title={{Hodge-compositional edge Gaussian processes}},
  author={Yang, Maosheng and Borovitskiy, Viacheslav and Isufi, Elvin},
  booktitle={International Conference on Artificial Intelligence and Statistics},
  pages={3754--3762},
  year={2024},
  organization={PMLR}
}

@article{laanaya2011learning,
  title={{Learning general Gaussian kernel hyperparameters of SVMs using optimization on symmetric positive-definite matrices manifold}},
  author={Laanaya, Hicham and Abdallah, Fahed and Snoussi, Hichem and Richard, C{\'e}dric},
  journal={Pattern Recogn. Lett.},
  volume={32},
  number={13},
  pages={1511--1515},
  year={2011},
  publisher={Elsevier}
}

@article{pinheiro1996unconstrained,
  title={Unconstrained parametrizations for variance-covariance matrices},
  author={Pinheiro, Jos{\'e} C and Bates, Douglas M},
  journal={Statist. Comput.},
  volume={6},
  number={3},
  pages={289--296},
  year={1996},
  publisher={Springer}
}

@inproceedings{lu2022data,
  title={Data adaptive {RKHS} {T}ikhonov regularization for learning kernels in operators},
  author={Lu, Fei and Lang, Quanjun and An, Qingci},
  booktitle={Mathematical and Scientific Machine Learning},
  pages={158--172},
  year={2022},
  organization={PMLR}
}

@article{li2025bayesian,
  title={When Bayesian Tensor Completion Meets Multioutput Gaussian Processes: Functional Universality and Rank Learning},
  author={Li, Siyuan and Fang, Shikai and Cheng, Lei and Yin, Feng and Wu, Yik-Chung and Gerstoft, Peter and Theodoridis, Sergios},
  journal={IEEE Transactions on Signal Processing},
  volume={73},
  pages={5319--5335},
  year={2025},
  publisher={IEEE}
}

@ARTICLE{gong2020mdl,
  author={Gong, Xiao and Chen, Wei and Chen, Jie and Ai, Bo},
  journal={IEEE Signal Processing Letters}, 
  title={Tensor Denoising Using Low-Rank Tensor Train Decomposition}, 
  year={2020},
  volume={27},
  number={},
  pages={1685-1689},
  doi={10.1109/LSP.2020.3025038}}

@article{hillar2013most,
  title={Most tensor problems are {NP}-hard},
  author={Hillar, Christopher J and Lim, Lek-Heng},
  journal={Journal of the ACM (JACM)},
  volume={60},
  number={6},
  pages={1--39},
  year={2013},
  publisher={ACM New York, NY, USA}
}

@book{nocedal2006numerical,
  title={Numerical optimization},
  author={Nocedal, Jorge and Wright, Stephen J},
  year={2006},
  publisher={Springer}
}

@article{williams2000using,
  title={Using the {N}ystr{\"o}m method to speed up kernel machines},
  author={Williams, Christopher and Seeger, Matthias},
  journal={Advances in neural information processing systems},
  volume={13},
  year={2000}
}

@article{elhamifar2011sparse,
  title={Sparse manifold clustering and embedding},
  author={Elhamifar, Ehsan and Vidal, Ren{\'e}},
  journal={Advances in neural information processing systems},
  volume={24},
  year={2011}
}
